\newtheorem{definition}{Definition}
\newtheorem{theorem}{Theorem}
\newtheorem{example}{Example}
\newtheorem{lemma}{Lemma}
\newtheorem{remark}{Remark}
\newtheorem{corollary}{Corollary}
\newtheorem{assumption}{Assumption}
\newcommand{\Binom}{\mathrm{Bin}}
\newcommand{\Unif}{\mathrm{Uniform}}
\newcommand{\Beta}{\mathrm{Beta}}
\newcommand{\Expect}{\mathbb{E}}
\newcommand{\expect}[1]{\mathbb{E}\left[#1\right]}
\newcommand{\Prob}{\mathbb{P}}
\newcommand{\prob}[1]{\mathbb{P}\left[#1\right]}
\newcommand{\Var}{\mathsf{Var}}
\newcommand{\Indc}{\mathbbm{1}}
\newcommand{\indc}[1]{\Indc_{\left\{{#1}\right\}}}
\newcommand{\calD}{{\mathcal{D}}}
\newcommand{\calF}{{\mathcal{F}}}
\newcommand{\calM}{{\mathcal{M}}}
\newcommand{\calP}{{\mathcal{P}}}
\newcommand{\calS}{{\mathcal{S}}}
\newcommand{\calT}{{\mathcal{T}}}
\newcommand{\Th}{{^{\rm th}}}
\DeclareMathOperator*{\argmin}{arg\,min}
\DeclareMathOperator*{\argmax}{arg\,max}
\def\ba{\mathbf{a}}
\def\bb{\mathbf{b}}
\def\bt{\mathbf{t}}
\def\bx{\mathbf{x}}
\def\bX{\mathbf{X}}
\def\bmu{\boldsymbol{\mu}}
\def\bbeta{\boldsymbol{\beta}}
\def\bSigma{\boldsymbol{\Sigma}}
\providecommand{\keywords}[1]{\textbf{\text{Index terms ---}} #1}
\title{Analyzing CART}
\author{
  Jason M.~Klusowski\thanks{This research was supported in part by NSF Grant DMS-1915932.} \\
  Department of Statistics\\
  Rutgers University -- New Brunswick\\
  Piscataway, NJ, USA, 8019  \\
  \href{mailto:jason.klusowski@rutgers.edu}{jason.klusowski@rutgers.edu}
}
\date{June 24, 2019}
\begin{document}

\maketitle

\begin{abstract}
Decision trees with binary splits are popularly constructed using Classification and Regression Trees (CART) methodology. For binary classification and regression models, this approach recursively divides the data into two near-homogenous daughter nodes according to a split point that maximizes the reduction in sum of squares error (the impurity) along a particular variable.
This paper aims to study the bias and adaptive properties of regression trees constructed with CART. In doing so, we derive an interesting connection between the bias and the mean decrease in impurity (MDI) measure of variable importance---a tool widely used for model interpretability---defined as the sum of impurity reductions over all non-terminal nodes in the tree. In particular, we show that the probability content of a terminal subnode for a variable is small when the MDI for that variable is large and that this relationship is exponential---confirming theoretically that decision trees with CART have small bias and are adaptive to signal strength and direction. Finally, we apply these individual tree bounds to tree ensembles and show consistency of Breiman's random forests. The context is surprisingly general and applies to a wide variety of multivariable data generating distributions and regression functions. The main technical tool is an exact characterization of the conditional probability content of the daughter nodes arising from an optimal split, in terms of the partial dependence function and reduction in impurity.

\end{abstract}

\keywords{Decision tree, regression tree, recursive partition, CART, random forest, boosting, nonparametric regression, high-dimensional statistics}

\section{Introduction}
Decision trees are the building blocks of some of the most important and powerful algorithms in statistical learning. For example, ensembles of decision trees are used for some bootstrap aggregated prediction rules (e.g., random forests \citep{breiman2004}). In addition, at each iteration of gradient tree boosting (e.g., TreeBoost \citep{friedman2001greedy}), the pseudo-residuals are fit with decision trees as base learners. From an applied perspective, decision trees have an appealing interpretability and are accompanied by a rich set of analytic and visual diagnostic tools. These attributes make tree-based learning particularly well-suited for applied sciences and related disciplines---which may rely heavily on understanding and interpreting output from a black-box model and the system that generated the data.

Although, as with many aspects of statistical learning, good empirical performance often comes at the expense of rigor and transparency.\footnote{For a delightful read and intriguing perspective on this trade-off, see \citep{breiman2001statistical}.} Tree-structured learning with decision trees is no exception---statistical guarantees for popular variants, i.e., those that are actually used in practice, are hard to find. The complicated recursive way in which decision trees are constructed makes them unamenable to analysis. While a complete and thorough picture of decision trees and their role in ensemble learning may be far away or even unattainable, in this paper, we take a significant step forward and aim to tackle the following three questions.

\begin{itemize}
\item Why do decision trees have small bias?
\item Why are decision trees locally adaptive to signal strength?
\item Can one connect bias and adaptivity with quantities used to assess variable importance?
\end{itemize}

The first two questions above are supported by an abundance of empirical evidence \citep{breiman1984}, yet remain to be explained or answered by a sensible mathematical theory, especially when the tree construction involves the input and output data. For example, decision trees are known to have small bias when they are grown deeply---a defining characteristic of random forests. But how and why? Moreover, tree-based learning is known to be particularly effective in high-dimensional sparse settings when the distribution of the output depends only on a few predictor variables. What mechanism of trees enables this?

Let us emphasize that we are not content with merely a study of the standard certificates for good predictors (e.g., consistency or rates of convergence). Rather, we aim to identity and explore the \emph{unique} advantages of tree-structured learning and, in doing so, develop a unifying theory that connects bias and adaptivity via two well-known data-analytic tools for model interpretability---\emph{partial dependence functions} and \emph{variable importance measures}. 

To make our work informative to the applied user of decision trees, we strive to make the least departure from practice and therefore focus specifically on Classification and Regression Tree (CART) \citep{breiman1984} methodology---by far the most popular for regression and classification problems. With this methodology, the tree construction depends on both the input and output data and is therefore \emph{data-dependent}. This aspect lends itself favorably to the empirical performance of CART, but poses unique mathematical challenges. As far as we know, the only other work that studies the bias of CART is \citep{scornet2015}, who used it to show asymptotic consistency of Breiman's random forests for additive regression models. One goal of the present paper is to extend this theory to other response surfaces.

Because individual decision trees are unstable and subject to large sampling variability\footnote{This is particularly the case for classification trees.}, we do not concern ourselves with a study of their variance. Indeed, such an endeavor is worthwhile only in the presence of some sort of variance reduction technique like pruning (i.e., removing portions of the tree in order to reduce its complexity) or ensemble averaging (e.g., bagging \citep{breiman1996bagging}, random forests, boosting), and even so, the bias remains the most challenging aspect of the analysis. This is, of course, not to diminish variance as an essential component of a theoretical investigation. For instance, in the sequel, we will apply our individual tree bias bounds to Breiman's random forests (which use ensembles of trees) in conjunction with existing results for its variance.

Let us now describe the statistical setting and framework that we will operate under for the rest of the paper. For clarity and ease of exposition, we focus specifically on \emph{regression trees}, where the target outcome is a continuous real value. Although, many of our results hold for binary classification trees as well.




We assume the learning (training) data is $ \calD_n = \{(\bX_1, Y_1), \dots, (\bX_n, Y_n) \} $, where $(\bX_i , Y_i ) $, $1\leq i \leq n $ are i.i.d. with common joint distribution $\mathbb{P}_{\bX, Y} $ and joint density $ p_{\bX, Y} $ with respect to Lebesgue measure (with marginal distribution and density, $ \mathbb{P}_{\bX} $ and  $p_{\bX}$, defined analogously). Here, $\bX_i \in [0, 1]^d $ is the input (feature, covariate, or predictor vector) and $Y_i \in \mathbb{R} $ is a continuous outcome (response or output variable). A generic pair of variables will be denoted as $ (\bX, Y) $. A generic coordinate of $ \bX $ will be denoted by $ X $, unless there is a need to highlight the dependence on the coordinate index, say $ X_j $, where $ j = 1, 2, \dots, d $. We will use the terms feature, predictor, or input variable to refer to $ X $ interchangeably.
The statistical model is $Y_i = f(\bX_i) + \varepsilon_i $, for  $ i = 1, \dots, n $, where $ f(\bx) = \expect{Y \mid \bX = \bx} $ is an unknown regression function and $ \{ \varepsilon_i \}_{1 \leq i\leq n} $ are i.i.d. errors. The conditional average of $ Y $ given $ \bX = \bx $ is optimal for prediction if one uses squared error loss $ L(\bX, Y, \widetilde f) = (Y-\widetilde f(\bX))^2 $ since it minimizes the conditional risk $ \widetilde f \mapsto \expect{(Y-\widetilde f(\bX))^2 \mid \bX = \bx} = \int (y-\widetilde f(\bx))^2 \mathbb{P}_{Y \mid \bX = \bx}(dy) $. The bias of a prediction rule $ \widehat Y(\bx) = \widehat Y(\bx; \calD_n) $ at a point $ \bX = \bx $ is henceforth defined to be $ \expect{Y \mid \bX = \bx} - \Expect_{\calD_n}[\widehat Y(\bx)] $.

One key strength of decision trees is that they can exploit, if present, low local dimensionality of the response surface. This is particularly useful since many real-world input/output systems admit or are approximated well by local sparse representations of simple model forms, e.g., wavelets and neural networks. That is, even though the input/output relationship is determined by a large number of variables overall, the dependence may be locally characterized by only a small subset of variables. Such local adaptivity is made possible by the recursive partitioning of the input space, in which optimal splits are increasingly affected by local qualities of the data as the tree is grown. Hence, variables that locally have more influence on determining the response are much more likely to be included as candidates for further splitting. In essence, decision trees have a built-in local variable subset selection mechanism, which allows them to overcome many of the undesirable consequences (e.g., overfitting and large sample requirements) of high-dimensional modeling.

In this paper, we shall work with a more simplistic, yet still exemplary model and assume that the conditional mean response $ f(\bx)=\expect{Y \mid \bX=\bx} $ depends only on a small, unknown subset $ \calS $ of the $ d $ features. In other words, $ f $ is almost surely equal to its restriction $ f{\mid}_{[0, 1]^S} $ to the subspace $ [0, 1]^S $ of its ``strong'' variables $ \bx_{\calS}=(x_j : j \in \calS) $, where $ S = \#\calS \ll d $. Conversely, the output of $ f $ does not dependent on ``weak'' variables that belong to $ \calS^c $. Of course, the set $ \calS $ is not known a priori and must be learned from the data. 

\section{Organization}

This paper is organized according to the following schema. In \prettyref{sec:notation}, we establish some basic notation and definitions that we will use throughout the paper. \prettyref{sec:prelim} reviews some of the terminology and quantities associated with growing regression trees. In \prettyref{sec:mdi}, we review two important data-analytic quantities associated with tree-ensembles for visualization and interpretability. A summary of the main results is given in \prettyref{sec:results}, including some accompanying examples and simulations studies. \prettyref{sec:ass} discusses the main assumptions on the regression function that we use to obtain our bounds. In \prettyref{sec:rf}, we apply our bounds for individual decision trees from \prettyref{sec:results} to show consistency of Breiman's random forests. \prettyref{sec:finite} contains some finite sample results. Finally, in \prettyref{sec:classification}, we briefly discuss how our results can be extended to binary classification. Proofs of the main statements from \prettyref{sec:results} are given in \prettyref{sec:proofs} and \prettyref{app:proofs} contains proofs of some lemmas and examples from the body of the paper.

\section{Notation and definitions} \label{sec:notation}
For $ \calS \subset \{1, 2, \dots, d\} $, let $ \bx_{\calS} = (x_j : j\in \calS) $. With a slight abuse of notation, we write $ \bx_{\setminus j} = (x_{j'}: j' \neq j) $ instead of $ \bx_{\{1, 2, \dots, d\}\setminus \{j\}} $ for brevity. If $ A $ is a subset of $ \mathbb{R}^d $, we let $ A_{\calS} = \{ \bx_{\calS} : \bx \in A \} $ and $ A_{\setminus j} = \{ \bx_{\setminus j} : \bx \in A \} $. For two subsets $ A, B \subset \mathbb{R} $, $ \text{dist}(A, B) = \inf_{a\in A,\; b\in B}|a-b| $. For $ \calS \subset \{1, 2, \dots, d\} $ and $ A \subset \mathbb{R}^d $, we define $ \text{diam}_{\calS}(A) = \sup_{\bx, \; \bx' \in A}\|\bx_{\calS}-\bx_{\calS}'\| $, where $ \|\bx_{\calS}\|^2 = \sum_{j\in\calS}x^2_j $. 

For a function $ g: \mathbb{R}^d \to \mathbb{R} $ and subset $ A \subset \mathbb{R}^d $, we define the oscillation of $ f $ on $ A $ by $ \omega(g; A) = \sup_{\bx, \;\bx' \in A}|g(\bx)-g(\bx')| $. The total variation of a function $ g: \mathbb{R} \to \mathbb{R} $ on an interval $ [a,b] $ is defined by $ \text{TV}(g; [a,b]) = \sup_{\mathscr{P}} \sum_{\ell=1}^L |g(x_{\ell})-g(x_{\ell-1})| $, where $ \mathscr{P} = \{ \calP = \{ x_0,x_1, \dots, x_L\} : \calP\; \text{is a partition of}\; [a,b] \} $. If $ g: \mathbb{R}^d \to \mathbb{R} $, we will write $ \frac{\partial^r}{\partial x^r_j}g(\bx) $ to denote the $r^{\Th} $ order partial derivative of $ g $ with respect to the $ j^{\Th} $ variable at the point $ \bx $. If $ g: \mathbb{R} \to \mathbb{R} $, we will write $ g'(x) $ to denote the first derivative of $ g $ at the point $ x $. The $ r^{\Th} $ order derivative of $ g $ at the point $ x $ is denoted by $ g^{(r)}(x) $. 


\section{Preliminaries} \label{sec:prelim}

As mentioned earlier, regression trees are commonly constructed with Classification and Regression Tree (CART) \citep{breiman1984} methodology. The primary objective of CART is to find partitions of the input variables that produce minimal variance of the response values (i.e., minimal sum of squares error with respect to the average response values). Because of the computational infeasibility of choosing the best overall partition, CART trees are greedily grown with a procedure in which binary splits recursively partition the tree into near-homogeneous terminal nodes. That is, an effective binary split partitions the data from the parent tree node into two daughter nodes so that the resultant homogeneity of the daughter nodes, as measured through their \emph{impurity} (within node sum of squares error), is improved from the homogeneity of the parent node.
Under the least squares error criterion, it can easily be shown that if one desires to have constant output in the terminal nodes of the tree, then the constant to use in each terminal node should be the average of the response values within the terminal node \cite[Proposition 8.10]{breiman1984}. Hence, these models produce a histogram estimate of the regression surface. As such, they are often referred to as piecewise constant regression models, since the tree output is constant on each terminal node.

Let us now describe the algorithm with additional precision. Consider splitting a regression tree $T$ at a node $\bt$. Let $s$ be a candidate split for a variable $X$ that splits $\bt$ into left and right daughter nodes $\bt_L$ and $\bt_R$ according to whether $ X \leq s $ or $ X > s $. These two nodes will be denoted by $\bt_L = \{\bX \in \bt: X \leq s\} $ and $\bt_R = \{\bX \in \bt: X > s\} $. As mentioned previously, a tree is grown by recursively reducing node impurity, which, for regression trees grown with CART, is determined by within node sample variance 
\begin{equation} \label{eq:impurity} \widehat\Delta(\bt) = \frac{1}{N(\bt) }\sum_{\bX_i \in \bt}(Y_i - \overline Y_{\bt})^2, \end{equation}
where $\overline Y_{\bt} = \frac{1}{N(\bt)}\sum_{\bX_i \in \bt}Y_i $ is the sample mean for $\bt$ and $N(\bt) =\# \{ \bX_i \in \bt \}  $ is the number of observations in $\bt$. Similarly, the within sample variance for a daughter node is $$ \widehat\Delta(\bt_L) = \frac{1}{N(\bt_L)}\sum_{\bX_i \in \bt_L}(Y_i - \overline Y_{\bt_L})^2, \quad \widehat\Delta(\bt_R) = \frac{1}{N(\bt_R)}\sum_{\bX_i \in \bt_R}(Y_i - \overline Y_{\bt_R})^2, $$ where $\overline Y_{\bt_L}$ is the sample mean for $\bt_L$ and $N(\bt_L)$ is the sample size of $\bt_L$ (similar definitions apply to $\bt_R$). The parent node $ \bt $ is split into two daughter nodes using the variable and split point producing the largest decrease in impurity (or impurity reduction). For a candidate split $s$ for $X$, this decrease in impurity equals \cite[Definition 8.13]{breiman1984} \begin{equation} \label{eq:data} \widehat\Delta(s ; \bt) = \widehat\Delta(\bt) - [\widehat P(\bt_L)\widehat\Delta(\bt_L) + \widehat P(\bt_R)\widehat\Delta(\bt_R)] \end{equation} where $ \widehat P(\bt_L) = N(\bt_L) /N(\bt) $ and $\widehat P(\bt_R) = N(\bt_R) /N(\bt) $ are the proportions of observations in $ \bt $ that are contained in $\bt_L$ and $\bt_R$, respectively. Note that maximizing $\widehat\Delta(s ; \bt)$ is also equivalent to minimizing \begin{equation} \label{eq:pop} \widehat P(\bt_L)\widehat\Delta(\bt_L) + \widehat P(\bt_R)\widehat\Delta(\bt_R), \end{equation} which means that CART seeks the split point that minimizes the weighted sample variance. Yet another way to view $ \widehat\Delta(s; \bt) $ is via its equivalent representation $ \widehat\Delta(\bt)\hat\rho^2 $, where $ \hat\rho $ is the empirical correlation between $ Y $ and the decision stump $ \widehat Y = \overline Y_{\bt_L}\indc{X \leq s} + \overline Y_{\bt_R}\indc{X > s} $ within $ \bt $. Hence, at each node, CART seeks the step function most correlated (in magnitude) with the response variable.

To reiterate, the tree $T$ is grown recursively by finding the split point $s$ that maximizes $\widehat\Delta(s ; \bt)$. The particular variable chosen is the one that gives the largest reduction in impurity over $ \bt $. We denote an optimized split point by $ \hat s $ (breaking ties arbitrarily) and the optimally split daughter nodes with $\hat s $ by $\hat \bt_L$ and $\hat \bt_R $, i.e., $ \hat \bt_L = \{\bX \in \bt : X \leq \hat s\} $ and $ \hat \bt_R = \{\bX \in \bt : X > \hat s\} $, respectively. The tree output at a terminal node $ \bt $ is $ \widehat Y = \overline Y_{\bt} $.

The use of \prettyref{eq:data} to evaluate each candidate split involves several passes over the training data with the consequent computational costs when handling problems with a large number of predictor variables and observations. This is particularly serious in the present setting of continuous variables---the major computational bottleneck of growing trees. Fortunately, the use of the least squares error criterion with averages in the terminal nodes permits further simplifications of the formulas described above. That is, using the sum of squares decomposition, $ \widehat\Delta(s ; \bt) $ can equivalently be expressed as \citep[Section 9.3]{breiman1984}
\begin{equation} \label{eq:rep1}
\widehat P(\bt_L)\widehat P(\bt_R)\bigg[\frac{1}{N(\bt_L)}\sum_{\bX_i \in \bt_L}Y_i  - \frac{1}{N(\bt_R)}\sum_{\bX_i \in \bt_R}Y_i \bigg]^2.
\end{equation}

This expression implies that one can find the best split for a continuous variable with just a single pass over the data, without the need to calculate multiple averages and sums of squared differences for these averages. In fact, not only does this representation have computational benefits---it will also be crucial in establishing our main results. It should be stressed that this alternative expression is unique to the least squares error criterion with averages in the terminal nodes of the tree.

A direct analysis of regression trees using the finite sample splitting criterion $ \widehat\Delta(\cdot; \bt) $ is challenging and obfuscates some of the inner and subtle mechanisms that makes tree-based learning with CART desirable. Instead, to remain true to the original CART procedure while still being able theoretically study its dynamics, we work under an asymptotic data setting for determining splits and therefore replace $ \widehat\Delta(s ; \bt) $ with its analog based on population parameters:
\begin{equation} \label{eq:infinite} \widehat\Delta(s ; \bt) \underset{n\rightarrow+\infty}{\rightarrow} \Delta(s ; \bt) \triangleq \Delta(\bt) - [P(\bt_L)\Delta(\bt_L) + P(\bt_R)\Delta(\bt_R)], \end{equation} where $ \Delta(\bt) $ is the conditional population variance $ \Delta(\bt) = \Var[Y \mid \bX \in \bt] $, $\Delta(\bt_L)$ and $\Delta(\bt_L)$ are the daughter conditional variances $$ \Delta(\bt_L) = \Var[Y \mid X \leq s, \; \bX \in \bt], \quad \Delta(\bt_R) = \Var[Y \mid X > s, \; \bX \in \bt], $$ and $P(\bt_L)$ and $P(\bt_R)$ are the conditional probabilities (probability content) of the daughter nodes arising from the split $ s $, i.e., $$ P(\bt_L ) = \prob{X \leq s \mid \bX \in \bt}, \quad P(\bt_R) = \prob{X > s \mid \bX \in \bt}. $$ 
These conditional probabilities can also be interpreted as the infinite sample proportion of observations in the parent node that are contained in the daughter nodes. Note that $ P(\bt_L) $ is also the distribution function of $ X \mid \bX \in \bt $. We will occasionally write $ P(s | \bt) $ instead of $ P(\bt_L) $ to highlight dependence on the split $ s $. The density function of $ X \mid \bX \in \bt $, that is, $ \frac{\partial}{\partial s}\prob{X \leq s \mid \bX \in \bt} $, will be denoted by $ p(\bt_L) $ or $ p(s| \bt) $. The infinite sample analog of \prettyref{eq:rep1} is 
\begin{equation} \label{eq:simple0}
P(\bt_L)P(\bt_R)\left[\expect{Y \mid \bX \in \bt, \; X \leq s} - \expect{Y \mid \bX \in \bt, \; X > s}\right]^2.
\end{equation}
We let $\bt^*_L=\{\bX \in \bt: X \leq s^*\}$ and $\bt^*_R=\{\bX \in \bt: X > s^*\}$ denote the optimally split daughter nodes with optimal split $ s^* $, respectively. Note that any node $ \bt $ is a Cartesian product of intervals, which we call \emph{subnodes}. The subnode of variable $ X $ within node $ \bt $ is denoted by $ [a(\bt), b(\bt)] $, where $ a(\bt) < b(\bt) $.

Under our framework, we optimize the infinite sample splitting criterion, namely, $ \Delta(s; \bt) $, instead of the empirical one \prettyref{eq:data}. Optimizing the splitting protocol $ \Delta(s ; \bt) $ can also be viewed as querying an oracle for the optimal (population) split values. A maximizer of $ \Delta(s; \bt) $ is denoted by $ s^* $, i.e., $ s^* \in \argmax_s \Delta(s ; \bt) $ (again, breaking ties arbitrarily). Despite this tweak, we stress that only the splits are determined from an infinite sample quantity; all other aspects of the regression tree (e.g., terminal node values, depth, variables selected for candidate splits) are determined from the learning sample. More specifically, the regression tree still outputs $ \widehat Y = \frac{1}{N(\bt)}\sum_{\bX_i \in \bt} Y_i $, except that $ \bt $ is determined by an infinite sample (population) objective. If the number of observations within $ \bt $ is large and $ \Delta(\cdot; \bt) $ has a unique global maximum, then we can expect $ \hat s  \approx s^*$ (via an empirical process argument) and hence our infinite sample setting is a good approximation to CART with empirical splits. Indeed, if $ s^* $ is unique,  \citep{banerjee2007confidence} and \citep[Section 3.4.2]{buhlmann2002analyzing} show \emph{cube root asymptotics} (i.e., $ n^{1/3}(\hat s - s^*) $ converges in distribution) of split points for multi-level decision trees using the CART sum of squares error criterion. While these results show that $ \hat s $ and $ s^* $ are close, they do not reveal what sort of partition of the input space is induced by a sequence of optimal splits---the goal of the present paper. Since this partition is constructed from the distribution of $ (\bX, Y) $, we will need to impose conditions on the regression function and input distribution so that the partition yields a predictor with small bias. It is our hope that the applied user of decision trees can benefit from knowing these limitations in the idealized setting of population-level splits.

Since the recursive partition obtained from $ \widehat\Delta(s ; \bt) $ governs the bias of the tree, our study of the partitions created from the infinite sample version $ \Delta(s ; \bt) $ is in much the same vein as the study of kernel or nearest neighbors regression with (oracle) infinite sample, optimal parameters (e.g., kernel bandwidth or number of nearest neighbors), obtained by minimizing the asymptotic mean integrated squared error (AMISE).

\subsection{A comment on notation} Up until now, we have assumed that the split occurs along a generic coordinate $ X $. However, in the multi-dimensional setting, there will be a need to specify the coordinate index. Therefore, we will write $ \Delta(j, s; \bt) $ to denote $ \Delta(s ; \bt) $ when the coordinate being split is $ X_j $. For the same reason, we write $ P_j(\bt_L) $ and $ P_j(s | \bt) $ in lieu of $ P(\bt_L) $ and $ P(s | \bt) $, respectively. Similar definitions will hold for the subnode $ [a_j(\bt), b_j(\bt)] $ when we want to emphasize the dependence on the variable index. We write $ j_{\bt} $ to denote the splitting variable chosen in node $ \bt $, i.e., $ j_{\bt} \in \argmax_{j=1,2,\dots, d}\Delta(j, s^*; \bt) $, breaking ties arbitrarily. Finally, note that an optimal split $ s^* $ for a node $ \bt $ depends on $ \bt $ and therefore should technically be written as $ s^*_{\bt} $, however, we suppress this dependence for brevity and assume that it holds implicitly. We also use similar notation for the finite sample versions of these quantities.
As a general rule, if the index $ j $ is omitted on any quantity, it should be understood that we are considering a generic variable $ X $.

We now survey two popular data-analytic quantities that are also intimately connected to the forthcoming results in \prettyref{sec:results}.

\section{Data-analytic quantities for interpretation and visualization} \label{sec:mdi}
Pertinent to many tasks in data analysis is interpretability of the representation of the model. For black-box models such as random forests, this is usually performed through graphical renderings of the prediction surface as a function of one or two predictor variables and assessing the role each variable plays in determining the final output. Here we review two of the most popular quantities associated with decision tree ensembles for this purpose.

\subsection{Partial dependence functions} One useful device for visualizing the influence of a particular variable on the output is the so-called \emph{partial dependence function}.\footnote{Visualizing the effect of more than two predictor variables has limited descriptive value.} By looking simultaneously at the \emph{partial dependence plots}---i.e., a trellis of plots of the partial dependence functions for each variable---one can obtain a visual description of a multivariable predictor. In the same spirit, for assessing how each variable affects the output conditional on a node, we define the \emph{conditional partial dependence function} given node $ \bt $ via
\begin{equation} \label{eq:partialdef}
\overline F_j(x_j ; \bt) = \expect{Y \mid \bX \in \bt, \; X_j = x_j} = \int f(x_j, \bx_{\setminus j})\mathbb{P}_{\bX_{\setminus j}\mid \bX \in \bt, \; X_j = x_j}(d\bx_{\setminus j}),
\end{equation}
where $ \mathbb{P}_{\bX_{\setminus j} \mid \bX \in \bt, \; X_j = x_j}(d\bx_{\setminus j}) $
is the conditional probability measure with conditional density $ \bx_{\setminus j} \mapsto  p_{\bX}(x_j, \bx_{\setminus j})/\int_{\bt_{\setminus j}}p_{\bX}(x_j, \bx'_{\setminus j})d\bx'_{\setminus j} $.
Note that $ \overline F_j(x_j ; \bt) $ is the best least squares approximation to $ f(\bX) $ in $ \bt $ as a function of $ X_j $ alone, i.e.,
$ \overline F_j(x_j ; \bt) = \argmin_F \expect{(Y-F(X_j))^2\mid \bX \in \bt, \; X_j = x_j} $. For brevity, we write $ \overline F'_j(x_j ; \bt) $ to denote the derivative $ \frac{\partial}{\partial x_j}\overline F_j(x_j ; \bt) $, provided it exists.
We also define the \emph{mean-centered conditional partial dependence function} as
\begin{align*}
\overline G_j(x_j ; \bt) & = \overline F_j(x_j ; \bt) - \int \overline F_j(x_j ; \bt)\mathbb{P}_{X_j \mid \bX \in \bt}(dx_j) \\ & = \expect{Y \mid \bX \in \bt, \; X_j = x_j} - \expect{Y \mid \bX \in \bt};
\end{align*}
that is, $ \overline G_j(x_j ; \bt) $ is $ \overline F_j(x_j ; \bt) $ minus its mean value over the node.

Strictly speaking, $ \overline F_j(\cdot; \bt) $ is \emph{not} the same as the partial dependence function in the sense of \citep[Section 8.2]{friedman2001greedy} or \citep[Section 10.13.2]{hastie2009elements}, where, instead of ignoring the effects of $ \bX_{\setminus j} $, one looks at the dependence of $ f $ on $ X_j $ in $ \bt $ \emph{after} averaging with respect to the marginal distribution of the other covariates $ \bX_{\setminus j} $, i.e., $ \overline F_j(x_j ; \bt) =  \int f(x_j, \bx_{\setminus j})\mathbb{P}_{\bX_{\setminus j}\mid \bX \in \bt}(d\bx_{\setminus j}) $, where $ \mathbb{P}_{\bX_{\setminus j}\mid \bX \in \bt}(d\bx_{\setminus j}) $ is the conditional probability measure with conditional density $ \bx_{\setminus j} \mapsto p_{\bX_{\setminus j}}(\bx_{\setminus j})/\int_{\bt_{\setminus j}}p_{\bX_{\setminus j}}(\bx'_{\setminus j})d\bx'_{\setminus j}  $.
However, when $ X_j $ and $ \bX_{\setminus j} $ are independent (for example, as in the uniform case), the two definitions coincide with each other.

Of course, in practice, one would integrate an estimate $ \widehat f $ of $ f $ against the empirical distribution, so that, for example,
\begin{equation} \label{eq:partialdata}
\overline F_j(x_j ; \bt) = \frac{1}{N(\bt) }\sum_{\bX_i \in \bt} \widehat f(x_j, \bX_{i\setminus j}).
\end{equation}
When the tree is fully grown so that each terminal node contains only a single observation, i.e., $ N(\bt) = 1 $ for all $ \bt $, \prettyref{eq:partialdata} is the Individual Conditional Expectation (ICE) \citep{goldstein2015peeking}.

Let us finally mention that a desirable property of tree-based models (i.e., if $ \widehat f $ is a tree-structured predictor) is that the partial dependence function \prettyref{eq:partialdata} can be quickly computed from the tree itself, without passing over the data each time it is to be evaluated. This fact has implications for the rapid production of a trellis of plots for each variable.

\subsection{Variable importance measures and variable influence ranking}
Another attractive feature of tree-based ensembles is that one can compute, essentially for free, various measures of variable importance (or influence) using the optimal split points and their corresponding impurities. These measures are used for sensitivity analysis and to rank the influence of each variable in determining the output, which in turn, can be used to identify and select the most relevant predictors for further investigation (such as plotting their partial dependence functions). For random forests, one canonical and widely used measure is the Mean Decrease in Impurity (MDI) \citep[Section 8.1]{friedman2001greedy}, \citep[Section 5.3.4]{breiman1984}, \citep[Sections 10.13.1 \& 15.3.2]{hastie2009elements}, defined as the weighted sum of largest impurity reductions (i.e., either Gini impurity for classification or variance impurity for regression) over all non-terminal nodes in the tree, averaged over all trees in the forest, i.e.,
\begin{equation} \label{eq:empMDI}
 \widehat{\text{MDI}}(X_j) = \frac{1}{\#T}\sum_T\widehat{\text{MDI}}(X_j; T), 
\end{equation}
where
$$
\widehat{\text{MDI}}(X_j; T) = \sum_{\substack{\bt' \\ j_{\bt'}=j}}\widehat P(\bt')\widehat\Delta(j, \hat s ; \bt'),
$$
and the sum extends over all non-terminal (internal) nodes $ \bt' $ such that $j_{\bt'}=j$ and $\widehat P(\bt') = N(\bt') /n $ is the proportion of observations that land in node $ \bt' $.\footnote{Another commonly used and possibly more accurate measure is Mean Decrease in Accuracy (MDA), defined as the average difference in out-of-bag error before and after randomly permuting the values of $ X_j $ in out-of-bag samples over all trees. However, as with all permutation based methods, computational issues are present. Compare MDA with MDI, which can be computed as each tree is grown with no additional cost. Both MDI and MDA are calculated in the R package \texttt{randomForest} with \texttt{randomForest(..., importance = TRUE)} and in the Python library \texttt{scikit-learn} with attribute
 \texttt{feature\char`_ importances\char`_}}
 
The next definition of variable importance generalizes and localizes MDI. Instead of weighting each $ \widehat\Delta(j, \hat s; \bt') $ by the proportion of observations in $ \bt' $, more flexible weights are allowed. Moreover, rather than summing over all non-terminal nodes, we condition on a particular terminal node so that only nodes that are ancestors of the terminal node are considered.

\begin{definition}[Conditional variable importance]
Let $ \bt $ be a terminal node and let $ j_{\bt'} $ denote the index of the variable selected to split along at an ancestor node $ \bt' $ of $ \bt $. The conditional variable importance of $ X_j $ given $ \bt $ is defined by
\begin{equation}
\widehat{\text{MDI}}(X_j; \bt) = \sum_{\substack{\bt' \supset \bt \\ j_{\bt'}=j}}\widehat w(j, \hat s; \bt')\widehat\Delta(j, \hat s ; \bt'),
\end{equation}
where the sum extends over all ancestor nodes $ \bt' $ of $ \bt $ such that $j_{\bt'}=j$ and the (possibly data-dependent) weights $ \{\widehat w(j, \hat s; \bt')\} $ are nonnegative. That is, $ \widehat{\text{MDI}}(X_j; \bt) $ is a weighted sum of largest impurity decreases $ \widehat\Delta(j, \hat s ; \bt') $ among all ancestor nodes $ \bt' $ of terminal node $ \bt $ such that $ X_j $ was selected for a split. The infinite sample version of $ \widehat{\text{MDI}}(X_j; \bt) $, denoted by $ \text{MDI}(X_j; \bt) $, is defined by
\begin{equation} \label{eq:imp}
\text{MDI}(X_j; \bt) = \sum_{\substack{\bt' \supset \bt \\ j_{\bt'}=j}}w(j, s^*; \bt')\Delta(j, s^* ; \bt'),
\end{equation}
where the weights $ \{w(j, s^*; \bt')\} $ are nonnegative.
\end{definition}

Note that $ \widehat{\text{MDI}}(X_j) $ is a \emph{global} measure of importance. Thus, $ \widehat{\text{MDI}}(X_j) $ is relevant for answering questions such as ``what variables are most important in predicting blood pressure?'' and $ \widehat{\text{MDI}}(X_j; \bt) $ is useful for answering questions like ``for subjects ages 65 and up, what variables are most important in predicting blood pressure?''. A few other variants of local or case-wise (permutation) importance have been proposed to cope with the bias of variable importance measures towards correlated predictor variables. See, for example, \citep{strobl2008conditional}.

\section{Summary of results} \label{sec:results}
This section contains our main results. Due to space constraints, proofs of most of the statements are deferred until \prettyref{sec:proofs} and \prettyref{app:proofs}.

\subsection{Distributional assumptions}

Before we continue, let us first state the main distributional assumptions on the predictor variables. Let us remark, however, that many of the subsequent statements and results (e.g., \prettyref{sec:rf}, \prettyref{sec:finite}, and \prettyref{sec:proofs}) hold without them.

\begin{assumption} \label{ass:density}
For each variable $ X_j $ and node $ \bt $, the distribution function $ \prob{X_j \leq x_j \mid \bX \in \bt} $ is strictly increasing.
\end{assumption}

\prettyref{ass:density} is quite mild and holds if the joint density $ p_{\bX} $ never vanishes.\footnote{This does \emph{not} necessarily mean that the joint density is uniformly bounded below by a positive constant.}

The next assumption is needed for \prettyref{thm:main}, our main result.

\begin{assumption} \label{ass:indep}
For each variable $ X_j $, node $ \bt $, and split $ s $,
\begin{align}
& \prob{X_j \leq s \mid a_j(\bt) \leq X_j \leq b_j(\bt)} \geq \eta\prob{X_j \leq s \mid \bX \in \bt}\; \text{and} \\
&  \prob{X_j > s \mid a_j(\bt) \leq X_j \leq b_j(\bt)} \geq \eta\prob{X_j > s \mid \bX \in \bt},
\end{align}
for some universal constant $ \eta \in (0, 1] $.
\end{assumption}
\prettyref{ass:indep} is more restrictive than \prettyref{ass:density}, yet it still allows for some degree of dependency structures among the predictor variables. For example, it holds if the joint density function of the features is uniformly bounded above and below by constant multiples of the product of its marginal densities, i.e.,
$$
c_2p_{X_1}(x_1)\cdots p_{X_d}(x_d) \leq p_{\bX}(\bx) \leq c_1p_{X_1}(x_1)\cdots p_{X_d}(x_d),
$$
for all $ \bx \in [0, 1]^d $, where $ c_1 $ and $ c_2 $ are positive constants. In this case, $ \eta $ can be taken to be $ c_2/c_1 $.

\subsection{Probability content of terminal subnodes}

The next theorem, our main result, gives a clean, interpretable bound on the $ \mathbb{P}_{\bX} $-probability of a terminal subnode in terms of the variable importance measure, which in turn, controls the bias of the tree. Specifically, it shows that conditional terminal subnode probability content is (exponentially) small in the importance measure attributed to the splitting variable. This bound also corroborates with empirical evidence showing that, although decision trees are highly unstable, their bias tends to be small. For brevity, we furnish the proof in \prettyref{sec:proofs}.
\begin{theorem} \label{thm:main}
Suppose \prettyref{ass:density} and \prettyref{ass:indep} hold and $ [a_j(\bt), b_j(\bt)] $ is a subnode along the $ j^{\Th} $ direction for a terminal node $ \bt = \prod_{j=1}^d [a_j(\bt), b_j(\bt)] $. Then,
\begin{equation} \label{eq:nodesize}
\Prob_{\bX}[a_j(\bt) \leq X_j \leq b_j(\bt)] \leq \exp\left\{-\frac{\eta }{4}\text{MDI}(X_j; \bt)\right\},
\end{equation}
where $ \text{MDI}(X_j; \bt) = \sum_{\substack{\bt' \supset \bt \\ j_{\bt'}=j}}w(j, s^*; \bt')\Delta(j, s^* ; \bt') $ is the conditional variable importance \prettyref{eq:imp} with weights given by 
\begin{equation} \label{eq:w1}
w(j, s^*; \bt') = [|\overline G_j(s^*; \bt')|^2 + \Delta(j, s^* ; \bt')]^{-1}.
\end{equation}
Furthermore, if the first-order partial derivatives of the regression function and joint density function of $ \bX $ exist and are continuous, then
\begin{equation} \label{eq:w2}
w(j, s^*; \bt') \geq \bigg(\frac{2p_j(s^*| \bt')}{|\overline F'_j(s^*; \bt')|\Delta(j, s^* ; \bt')}\bigg)^{2/3}. 
\end{equation}
\end{theorem}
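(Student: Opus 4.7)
Enumerate the chain of ancestors $\bt = \bt^{(L)} \subset \bt^{(L-1)} \subset \cdots \subset \bt^{(0)} = [0,1]^d$, and let $\phi_l = \Prob_{\bX}[a_j(\bt^{(l)}) \leq X_j \leq b_j(\bt^{(l)})]$. The $X_j$-subnode only shrinks when the split at $\bt^{(l-1)}$ is along $X_j$; in that case the chosen child lies on one side of $s^*$, and applying Assumption 2 to the complementary side gives
\[
\frac{\phi_l}{\phi_{l-1}} \leq 1 - \eta P_{\mathrm{other}}(\bt^{(l-1)}),
\]
where $P_{\mathrm{other}}(\bt') \in \{P_j(\bt'_L), P_j(\bt'_R)\}$ is the probability of the side \emph{opposite} to the one containing $\bt$. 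Using $1-x \leq e^{-x}$ and iterating,
\[
\phi(\bt) \leq \exp\Bigl(-\eta \sum_{\bt' \supset \bt,\, j_{\bt'}=j} P_{\mathrm{other}}(\bt')\Bigr).
\]

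\textbf{Identity, FOC, and finish of part one.} Setting $\alpha = \overline F_j(s^*; \bt') - \mu_L$ and $\beta = \overline F_j(s^*; \bt') - \mu_R$, one has $\overline G_j(s^*; \bt') = P_L \alpha + P_R \beta$ and $\delta := \mu_L - \mu_R = \beta - \alpha$; direct expansion yields the identity
\[
|\overline G_j(s^*; \bt')|^2 + \Delta(j, s^*; \bt') = (P_L\alpha + P_R\beta)^2 + P_LP_R(\beta-\alpha)^2 = P_L \alpha^2 + P_R \beta^2.
\]
I would next differentiate $\Delta(j, s; \bt') = P_L(s)P_R(s)(\mu_L(s) - \mu_R(s))^2$ in $s$, using $\partial_s[P_L(s)\mu_L(s)] = \overline F_j(s; \bt') p_j(s|\bt')$ and its mirror, and set the derivative to zero at the interior optimum $s^*$: after dividing by $p_j(s^*|\bt')(\mu_L - \mu_R)$ (legitimate whenever $\Delta > 0$), this reduces to the first-order condition $\overline F_j(s^*; \bt') = (\mu_L + \mu_R)/2$. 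Thus $\alpha = -\beta = -\delta/2$, the identity collapses to $\delta^2/4$, and hence $w(j, s^*; \bt')\Delta(j, s^*; \bt') = 4P_L P_R$. Since $P_{\mathrm{other}}(\bt') \geq \min(P_L, P_R) \geq P_L P_R$, the telescope produces
\[
\phi(\bt) \leq \exp\Bigl(-\eta \sum P_L P_R\Bigr) = \exp\Bigl(-\tfrac{\eta}{4}\mathrm{MDI}(X_j; \bt)\Bigr).
\]

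\textbf{Derivative lower bound (part two).} Under the smoothness hypothesis, $w = 4/\delta^2$ by the preceding step, so it suffices to show $p_j(s^*|\bt')|\delta| \leq 4|\overline F_j'(s^*;\bt')|P_LP_R$: cubing and using $\Delta = P_LP_R\delta^2$ converts this into $w^3 \geq (2p_j/(|\overline F_j'|\Delta))^2$. Taylor-expanding $\overline F_j(x;\bt')$ about $s^*$ and integrating yields $\delta \approx -\overline F_j'(s^*; \bt')(L^*+R^*)$ with $L^* = E[s^*-X_j \mid X_j\leq s^*, \bX\in\bt']$ and $R^* = E[X_j-s^* \mid X_j > s^*, \bX\in\bt']$, and the FOC forces $L^*=R^*=:K$. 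Integration-by-parts identities $L^*P_L = \int_{a_j(\bt')}^{s^*}\Prob[X_j\leq x\mid \bX\in \bt']\,dx$ and $R^*P_R = \int_{s^*}^{b_j(\bt')}(1-\Prob[X_j\leq x\mid \bX\in \bt'])\,dx$, combined with continuity of $p_j$, give $p_j(s^*|\bt') K \leq \min(P_L, P_R) \leq 2P_LP_R$, which plugged back in completes the bound.

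\textbf{Main obstacle.} The principal technical point is justifying the first-order condition in the generality of the first claim: differentiability of $\Delta(j, \cdot; \bt')$ at an interior maximum $s^*$ holds when $\overline F_j$ and $p_j(\cdot|\bt')$ are continuous, but is not explicit in Assumptions 1-2, and the first part does not assume smoothness. The cleanest fallback is a one-sided perturbation argument comparing $\Delta(j, s^*; \bt')$ with $\Delta(j, s^*\pm\epsilon; \bt')$ and letting $\epsilon\downarrow 0$ to obtain $\alpha+\beta=0$ (or a suitable relaxation) without invoking derivatives; this is where most of the technical work of part one sits. Part two additionally requires careful control of Taylor remainders in $\delta$ and $K$, which is the substantive use of the continuous first-derivative hypothesis stated there.
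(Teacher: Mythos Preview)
Your argument for the first part is essentially correct and matches the paper's approach. The telescoping along the ancestor chain using Assumption~2 and $1-x\le e^{-x}$ is the same; your identity $|\overline G|^2+\Delta=P_L\alpha^2+P_R\beta^2$ together with the first-order condition $\overline F_j(s^*;\bt')=(\mu_L+\mu_R)/2$ (hence $\alpha=-\beta=-\delta/2$) is a clean alternative derivation of the paper's key formula $4P_LP_R=\Delta/(|\overline G|^2+\Delta)=w\Delta$. The paper arrives at the same identity by writing $\Delta(s;\bt)=\Xi(s;\bt)^2/(P_LP_R)$ and solving a quadratic for $P_L$; your route through $\alpha,\beta$ is arguably more transparent. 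Your stated concern about differentiability at $s^*$ is legitimate but not fatal: the paper implicitly uses that $\Xi$ is differentiable with $\Xi'(s)=p(s|\bt)\overline G(s;\bt)$, which holds under the standing absolute-continuity assumption on $(\bX,Y)$.

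Your argument for the second part has a genuine gap. You attempt to reach $p_j(s^*|\bt')|\delta|\le 4|\overline F_j'(s^*;\bt')|P_LP_R$ via a first-order Taylor expansion of $\overline F_j$ around $s^*$, writing $\delta\approx -\overline F_j'(s^*)(L^*+R^*)$ and then invoking the FOC to force $L^*=R^*$. Both of these are only approximations: the exact expressions for $\mu_L,\mu_R$ involve $\overline F_j'$ at varying intermediate points, not at $s^*$, and the remainder cannot be controlled in the right direction using only continuity of first derivatives. Your final step, that continuity of $p_j$ yields $p_j(s^*|\bt')K\le\min(P_L,P_R)$, is also unjustified (try $p$ increasing). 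The paper instead uses the \emph{second-order} optimality condition: computing $\partial_s^2\Delta(s;\bt')$ at $s^*$ (given $\partial_s\Delta=0$ there) and imposing that it be negative yields the rigorous inequality
\[
|\overline G_j(s^*;\bt')|^2+\Delta(j,s^*;\bt') \;<\; \frac{|\overline F_j'(s^*;\bt')|}{p_j(s^*|\bt')}\sqrt{P_LP_R\,\Delta(j,s^*;\bt')}.
\]
Since the left side equals $\delta^2/4$ and $\Delta=P_LP_R\delta^2$, this is precisely $p_j(s^*|\bt')|\delta|<4|\overline F_j'(s^*;\bt')|P_LP_R$, i.e., the inequality you were aiming for. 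Cubing and rearranging then gives \prettyref{eq:w2}. The missing idea is the second-derivative test; first-order information alone does not suffice here.
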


The main technical tools for proving \prettyref{thm:main}---\prettyref{thm:mainprob} and \prettyref{thm:second}---characterize $ \prob{X_j \leq s^* \mid \bX \in \bt'} $ in terms of the partial dependence function $ \overline F_j(s^*; \bt') $ and reduction in impurity $ \Delta(j, s^* ; \bt') $ at an optimal split. The form of the weights \prettyref{eq:w1} are derived from the first-order conditions (i.e., $\frac{\partial}{\partial s}\Delta(j, s; \bt')\mid_{s=s^*} = 0 $) of $ s^* $ as a global maximizer of $ \Delta(j, \cdot; \bt') $, whereas the lower bound \prettyref{eq:w2} incorporates \emph{both} first- and second-order optimality conditions (i.e., $\frac{\partial}{\partial s}\Delta(j, s; \bt')\mid_{s=s^*} = 0 $ and $\frac{\partial^2}{\partial s^2}\Delta(j, s; \bt')\mid_{s=s^*} < 0 $).

\begin{remark}
If the regression function is linear and the input is uniformly distributed, then $ \text{MDI}(X_j; \bt) $ with weights \prettyref{eq:w1} equals $ K_j(\bt) $, the number of times $X_j$ was selected among all ancestor nodes of terminal node $ \bt $.\footnote{Importance measures based on feature selection frequencies are implemented in standard R and Python packages for XGBoost \citep{chen2016xgboost}.}
\end{remark}


\begin{remark} \label{rmk:bias}
\prettyref{thm:main} implies that $ \text{diam}_{\calS}(\bt) $ converges to zero in $ \Prob_{\bX} $-probability when $ \text{MDI}(X_j; \bt) \rightarrow +\infty $ for each $ j \in \calS $. According to classic theory for partitioning-based prediction rules, shrinking terminal node diameters is a necessary condition for asymptotic consistency \citep{stone1977consistent}.
\end{remark}

In light of \prettyref{eq:nodesize}, it is tantalizing to interpret $ \text{MDI}(X_j; \bt) $ (and therefore $ \widehat{\text{MDI}}(X_j; T) $) as truly a measure of variable importance,
since it governs the bias of the regression tree along a specific direction, i.e., the conditional probability content of a terminal subnode for a variable is small when the MDI for that variable is large.\footnote{The original motivation for MDI was based solely on simple heuristic arguments and so one is left guessing as to why it seems to work. Overall, little is known about the theoretical properties of variable importance measures; see \citep{kazemitabar2017variable, louppe2013understanding, li2019debiased} for recent attempts in this direction.}

As a consequence of \prettyref{thm:main}, we immediately see two important properties of the regression tree:
\begin{itemize}
\item Terminal subnodes with smaller $\mathbb{P}_{\bX} $-probability are along more ``important'' directions as delineated by MDI, and thus adapt to signal strength in each direction. That is, the tree needs to be split more often in order to create finer partitions along directions that are more relevant to the output.
\item The adaptation to the importance of the variable is \emph{local} and depends on a particular terminal node of the tree. That is, each direction and location of the features requires a different level of granularity in the tree in order to detect and adapt to local changes in the regression surface. For example, regions of the input space where the response is ``flat'' do not need to be split as often and therefore a crude partition will suffice. On the other hand, complex functional dependencies may require a higher degree of fineness in the final model.
\end{itemize}
These observations are consistent with \citep[Section 4]{lin2006}, in that terminal nodes are on average narrower in directions with strong signals than in directions with weak signals. This is a very appealing property from a statistical perspective---trees with terminal subnodes that have large (resp. small) probability content in directions with weak (resp. strong) signals are less likely to overfit (resp. underfit) the data. 

\begin{remark}
An analogy can be drawn between the terminal subnode probability content $ \Prob_{\bX}[X_j \in [a_j, b_j]] $ and the bandwidth size in kernel regression, both of which control the bias of the predictor. Bandwidth selection for kernel regression \citep{yang1999multivariate} is often performed using a plug-in estimator of the best bandwidth that minimizes the AMISE. There, just like with \prettyref{eq:w2}, the best theoretical bandwidth also depends on the density function of the features and the smoothness of the regression function (albeit, via curvature---$ \int \frac{\partial^2}{\partial x^2_j}f(\bx)\frac{\partial^2}{\partial x^2_{j'}}f(\bx)\mathbb{P}_{\bX}(d\bx) $) \citep[Theorem 1]{yang1999multivariate} and \citep[Section 5.8]{wand1994kernel}. 
\end{remark}

\begin{remark}
As will be discussed in \prettyref{sec:classification}, the statement of \prettyref{thm:main} also holds verbatim for binary classification trees with appropriate modifications for alternate (i.e., Gini) splitting rules. An illustrative example involving logistic regression will also be provided.
\end{remark}

\subsection{Empirical study}
In \prettyref{fig:data}, we showcase the aforementioned adaptive properties of trees on four representative datasets from the UC Irvine Machine Learning Repository, namely, the \emph{Airfoil Self-Noise} (\prettyref{fig:airfoil}) and \emph{Concrete Compressive Strength} (\prettyref{fig:concrete}) datasets for regression and the  \emph{Blood Transfusion Service Center} (\prettyref{fig:blood}), and \emph{HTRU2} (\prettyref{fig:star}) datasets for binary classification. 
We first standardized the input data so that each variable belongs to $ [0, 1] $, i.e., $ X' = (X-X_{(1)})/(X_{(n)}-X_{(1)}) $. For each dataset, we generated $ 1000 $ trees from bootstrap samples of the data. Each tree was generated by \texttt{rpart} in R with default settings, except for the complexity parameter $ \texttt{cp} $, which was set to $ 0.001 $. The black bars represent the median terminal subnode length $ \ell_j(\bt) = b_j(\bt)-a_j(\bt) $ for each tree, averaged over all $ 1000 $ trees. The white bars represent $ \widehat{\text{MDI}} $ \prettyref{eq:empMDI} for each variable. Both quantities are scaled so that the largest among them is $ 100 $ and the variables are ordered according to increasing $ \widehat{\text{MDI}} $. In agreement with \prettyref{thm:main}, the barplots reveal the inverse relationship between $ \widehat{\text{MDI}}(X_j) $ and the terminal subnode lengths $ \ell_j $.

\begin{figure} [t] 
\centering
\begin{subfigure}[t]{0.45\textwidth}
  \centering
  \includegraphics[width=1\linewidth]{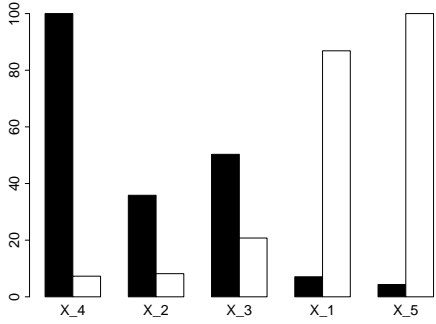}
  \caption{\emph{Airfoil Self-Noise}: $ n = 1503 $, $ d = 5 $.}
  \label{fig:airfoil}
\end{subfigure}
\hspace{1cm}
\begin{subfigure}[t]{0.45\textwidth}
  \centering
  \includegraphics[width=1\linewidth]{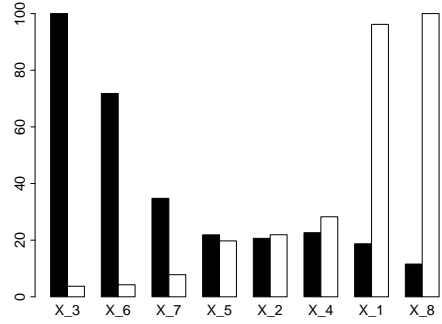}
  \caption{\emph{Concrete Compressive Strength}: $ n = 1030 $, $ d = 8 $.}
  \label{fig:concrete}
\end{subfigure}
\hspace{1cm}
\begin{subfigure}[t]{0.45\textwidth}
  \centering
  \includegraphics[width=1\linewidth]{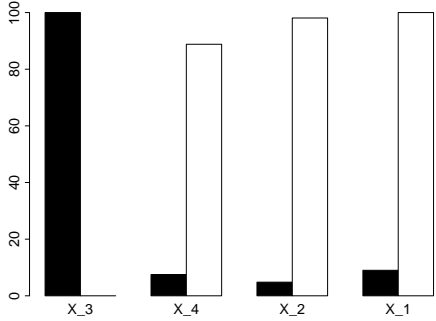}
  \caption{\emph{Blood Transfusion Service Center}: $ n = 748 $, $ d = 4 $.}
  \label{fig:blood}
\end{subfigure}
\hspace{1cm}
\begin{subfigure}[t]{0.45\textwidth}
  \centering
  \includegraphics[width=1\linewidth]{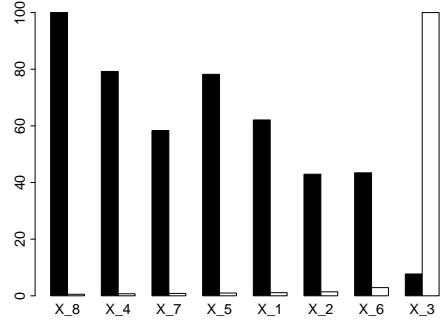}
  \caption{\emph{HTRU2}: $ n = 17898 $, $ d = 8 $.}
  \label{fig:star}
\end{subfigure}
\caption{Median subnode length of each tree, averaged over $ 1000 $ bootstrapped trees (black bars) and $\widehat{\text{MDI}}(X_j) $ \prettyref{eq:empMDI} (white bars). Both quantities are scaled so that the largest among them is $ 100 $ and the variables are ordered according to increasing $\widehat{\text{MDI}}(X_j) $.}
\label{fig:data}
\end{figure}

\subsection{Lower bounds on MDI} \label{sec:mdilower}

In light of \prettyref{thm:main} and \prettyref{rmk:bias}, it is natural to ask when $ \text{MDI}(X_j; \bt) $ diverges. We now provide some answers. First, let us mention that studying $ \text{MDI}(X_j; \bt) $ directly is hopeless since it is nearly impossible to give a closed form expression for each $ \Delta(j, s^* ; \bt') $. Nevertheless, $ \text{MDI}(X_j; \bt) $ can still be lower bounded by giving a lower bound on each weight $ w(j, s^*; \bt) $ and reduction in impurity $ \Delta(j,s^*; \bt) $. 

By definition of $ s^* $, one can lower bound each $ \Delta(j, s^* ; \bt') $ by $ \Delta(j, s'; \bt') $ for \emph{any} $ s' \in [a_j(\bt'),b_j(\bt')] $. Effectively, this means that to lower bound $ \text{MDI}(X_j; \bt) $, one can replace $ \Delta(j, s^* ; \bt') $ by $ \Delta(j, s'; \bt') $ for any choice $ s' $ in $ [a_j(\bt'), b_j(\bt')] $ or (per a Bayesian perspective) by the integrated decrease in impurity $ \int_{a_j(\bt')}^{b_j(\bt')}\Delta(j, s; \bt')\Pi(ds) $ with respect to a prior $ \Pi $ on the splits. (It is often convenient to choose $ s' $ to be the median of $ X_j \mid \bX \in \bt' $.) This observation is crucial to the forthcoming analysis, since it reduces the burden of finding $ s^* $ exactly to finding a suitable choice of $ s' $ or prior $ \Pi $ for which $ \Delta(j, \cdot; \bt') $ is tractable to analyze.

To lower bound the weights $ w(j, s^*; \bt') $ in \prettyref{thm:main}, one is confronted with obtaining a useful upper bound on either $ |\overline G_j(s^*; \bt')| $ for \prettyref{eq:w1} or $ |\overline F'_j(s^*; \bt')| $ for \prettyref{eq:w2}. We will see that it typically suffices to bound either by their supremum norm over splits in the parent subnode, and so no explicit knowledge of $ s^* $ is required. For the lower bound \prettyref{eq:w2}, one additionally needs to lower bound the conditional density of $ X_j \mid \bX \in \bt' $ at $ s^* $, or $ p_j(s^*| \bt') $. This too is a simple task if the joint density of $ \bX $ is uniformly bounded away from zero by a positive constant, in which case $ p_j(s^*| \bt') \geq \inf_s p_j(s| \bt') > 0 $. For example, if $ \bX $ is uniformly distributed, then $ p_j(s^*| \bt') = (b_j(\bt')-a_j(\bt'))^{-1} $.

Using these observations, we show in \prettyref{thm:lambdalower} that $ \text{MDI}(X_j; \bt) $ can be lower bounded by a positive constant multiple of the selection frequency of $ X_j $ in the tree. For brevity, we defer its proof until \prettyref{sec:proofs}. Before we state \prettyref{thm:lambdalower}, we first introduce some concepts. Central to the paper is a quantity which we call the \emph{node balancedness}. It measures the infinite sample proportion of data in the parent node that is contained in either daughter node from an optimal split.
\begin{definition}[Node balancedness] \label{def:nodebalance}
The balancedness of a node $ \bt $ is defined by
$$
\lambda_j(\bt) = 4P_j(\bt^*_L)P_j(\bt^*_R) = 1- |P_j(\bt^*_L)-P_j(\bt^*_R)|^2.
$$
\end{definition}
Another way of thinking about node balancedness is the following. Suppose we randomly generate a new $ \bX $ from $ \mathbb{P}_{\bX \mid \bX \in \bt} $ and classify $ Z = +1 $ if $ X_j \leq s^* $ or $ Z = -1 $ if $ X_j > s^* $. Then $ \lambda_j(\bt) $ is simply the variance of $ Z $.

The node balancedness is always one (perfectly balanced) when the split is performed at the median of the conditional distribution $ X_j \mid \bX \in \bt $. This particular situation occurs in the special case that the regression surface is linear and the input distribution is uniform.
In general, the quantity $ \lambda_j $ depends on the node $ \bt $---if $ \bt $ changes, so does $ s^* $. 
We now introduce a more global measure, which depends only on the regression function.
\begin{definition}[Global balancedness] \label{def:edgegap}
The global balancedness $ \Lambda_j $ is defined as 
$$
\Lambda_j = \inf_{\bt} \lambda_j(\bt),
$$
where the infimum runs over all parent nodes $ \bt $ of the best split left and right daughter nodes $ \bt^*_L $ and $ \bt^*_R $, respectively.
\end{definition}

With these definitions in place, we are now ready to state \prettyref{thm:lambdalower}, which lower bounds $ \text{MDI}(X_j; \bt) $ in terms of the selection frequency for $ X_j $. For brevity, the proof is deferred until \prettyref{sec:lower}.
\begin{theorem} \label{thm:lambdalower}
Suppose
the $ j^{\Th}$ direction of $ f $ is not too ``flat'' in the sense that there exists a finite integer $ R \geq 1 $ such that for each $ x_j $ in $ [0, 1] $, there is a finite-order partial derivative, $ \tfrac{\partial^r}{\partial x^r_j}f(x_j, \bx_{\setminus j}) $ with $ 1 \leq r \leq R $, that is nonzero and continuous for all other input coordinates $ \bx_{\setminus j} $ in $ [0, 1]^{d-1}$. More formally, assume
\begin{equation} \label{eq:flat}
\sup_{x_j \in[0, 1]}\inf_{r\geq 1}\big\{ r : \tfrac{\partial^r}{\partial x^r_j}f(x_j, \bx_{\setminus j}) \; \text{is nonzero and continuous for all} \; \bx_{\setminus j} \in [0, 1]^{d-1}\big\}
\end{equation}
is finite.
If additionally the features of $ \bX $ are independent and have marginal densities that are continuous and never vanish, then the global balancedness $ \Lambda_j $ is strictly positive and
$$
\text{MDI}(X_j; \bt) \geq \Lambda_j K_j(\bt),
$$
where $ K_j(\bt) = \#\{ \bt' \supset \bt :  j_{\bt'} = j \} $ is the number of times $X_j$ was selected among all ancestor nodes of terminal node $ \bt $.
\end{theorem}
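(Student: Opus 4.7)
The plan is to reduce the theorem to two subclaims: (a) the algebraic identity $w(j,s^*;\bt')\Delta(j,s^*;\bt') = \lambda_j(\bt')$ for the weights \prettyref{eq:w1} from \prettyref{thm:main}, and (b) the uniform lower bound $\Lambda_j > 0$. Together, these give
$$\text{MDI}(X_j;\bt) \;=\; \sum_{\bt'\supset\bt,\; j_{\bt'}=j} \lambda_j(\bt') \;\geq\; \Lambda_j\, K_j(\bt).$$

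For (a), I would exploit the first-order optimality of $s^*$. Let $P = P_j(\bt'^*_L)$, $F^- = \mathbb{E}[Y\mid \bX\in\bt',\; X_j\leq s^*]$, $F^+ = \mathbb{E}[Y\mid \bX\in\bt',\; X_j> s^*]$, and $D = F^- - F^+$, so that by \prettyref{eq:simple0} one has $\Delta(j,s^*;\bt') = P(1-P)D^2 = \lambda_j(\bt')D^2/4$. Differentiating under the integral (justified by the continuity of the marginals) yields the identities $P\,F^{-\prime}(s) = p_j(s|\bt')[\overline F_j(s;\bt') - F^-(s)]$ and $(1-P)F^{+\prime}(s) = p_j(s|\bt')[F^+(s) - \overline F_j(s;\bt')]$. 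Substituting into $\partial_s\Delta|_{s^*}=0$ and simplifying gives the clean first-order condition $\overline F_j(s^*;\bt') = (F^- + F^+)/2$. Since $\mathbb{E}[Y\mid \bX\in\bt'] = PF^- + (1-P)F^+$, it follows that $\overline G_j(s^*;\bt') = (1-2P)D/2$, so $|\overline G_j(s^*;\bt')|^2 = (1-\lambda_j(\bt'))D^2/4$ and hence $|\overline G_j|^2 + \Delta = D^2/4$, yielding $w\cdot\Delta = \lambda_j(\bt')$ as claimed.

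For (b), I would argue by contradiction using compactness. Suppose $\lambda_j(\bt^{(n)}) \to 0$ along some sequence of nodes. Each $\bt^{(n)}$ is a product of intervals in $[0,1]^d$, so along a subsequence $a_j(\bt^{(n)}) \to a^*$ and $b_j(\bt^{(n)}) \to b^*$. If $a^*<b^*$, then continuity of $f$, the positivity of the marginals, and dominated convergence give uniform convergence of $\overline F_j(\cdot;\bt^{(n)})$ to a nonconstant limit (by the non-flatness \prettyref{eq:flat}) on $[a^*, b^*]$; the limit's optimal split would then have to be at an endpoint and produce zero impurity reduction, contradicting nonconstancy. If $a^*=b^*$, the interval collapses to a point. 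By the non-flatness hypothesis, there is $r^*\in\{1,\dots,R\}$ such that $\partial_{x_j}^{r^*}f(a^*,\bx_{\setminus j})$ is continuous and nonzero on the compact set $[0,1]^{d-1}$, hence bounded away from $0$ in absolute value; the independence assumption transfers this to $\partial_{x_j}^{r^*}\overline F_j(a^*;\bt^{(n)})$. Changing variables to $u = (x_j-a_j(\bt^{(n)}))/(b_j(\bt^{(n)})-a_j(\bt^{(n)}))$ and rescaling by the leading Taylor term, the centered rescaled $\overline F_j$ converges uniformly on $[0,1]$ to the monomial $u^{r^*}$ (suitably normalized), and the rescaled conditional density of $X_j$ converges uniformly to the constant $1$. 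Since $\lambda_j$ is invariant under affine rescaling of both $x_j$ and the regression function, and depends continuously on the function-density pair in the uniform topology, $\lambda_j(\bt^{(n)})$ tends to the strictly positive balancedness of $u^{r^*}$ against the uniform density on $[0,1]$, a contradiction. Since $\min_{1\leq r\leq R}$ of these monomial balancednesses is a uniform positive lower bound, we conclude $\Lambda_j>0$.

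The main obstacle is the collapsing-interval case. Making the limiting argument rigorous requires (i) choosing the correct normalization so that the rescaled $\overline F_j$ has a nondegenerate Taylor limit, handling the possibility that the leading nonzero derivative order $r^*$ may vary along the sequence (one can pass to a further subsequence so that it stabilizes since $r^*\in\{1,\dots,R\}$), and (ii) establishing continuity of the optimal-split functional $g\mapsto \lambda(g,p)$ under joint uniform perturbations of the function $g$ and the density $p$ on $[0,1]$. Both are tractable given the compactness provided by the bounded derivative order $R$ and by the uniform continuity and strict positivity of the marginal densities.
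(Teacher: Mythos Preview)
Your part (a) is correct and is essentially a rederivation of \prettyref{thm:mainprob}; the paper establishes that identity separately and simply cites it at the start of its proof of \prettyref{thm:lambdalower}, so this portion of your argument is the same as the paper's, just inlined.

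For part (b), your compactness strategy (split into collapsing versus non-collapsing subsequences) matches the paper's, and your non-collapsing case is close to the paper's use of Berge's Maximum Theorem. In the collapsing case, however, there is a genuine gap: the rescaled limit is \emph{not} the monomial $u^{r^*}$. If $a_j,b_j\to c$ and one sets $\delta_n = (c-a_j(\bt^{(n)}))/(b_j(\bt^{(n)})-a_j(\bt^{(n)}))$, then after passing to a subsequence so that $\delta_n\to\delta^*$, the correctly normalized limit of the centered $\overline F_j$ is the shifted polynomial $(u-\delta^*)^{r^*}$ up to constants, and $\delta^*$ can be any real number (the point $c$ need not lie in $[a_j,b_j]$). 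You therefore need the balancedness of $(u-\delta)^{r^*}$ against the uniform density on $[0,1]$ to be bounded below \emph{uniformly in} $\delta\in\mathbb{R}$, which is not automatic. This is precisely the content of the paper's \prettyref{lmm:deltalower}, where an explicit calculation shows the infimum over $\delta$ is attained at $\delta=1/2$.

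The paper also sidesteps your continuity-of-$\lambda$ hypothesis. Since the argmax $s^*$ need not be unique, $\lambda_j(\bt)=4P(\bt_L^*)P(\bt_R^*)$ is in general a set-valued correspondence, and the map $g\mapsto\lambda(g,p)$ is not obviously continuous in the uniform topology. Rather than argue continuity of $\lambda$ directly, the paper combines \prettyref{lmm:deltalower} with the second-order lower bound of \prettyref{thm:second}, namely $\lambda(\bt)\geq\bigl(4\,p(\bt_L^*)^2\,\Delta(s^*;\bt)/|\overline F'(s^*;\bt)|^2\bigr)^{1/3}$, and takes the $\liminf$ of the right-hand side, whose ingredients are controllable without knowing $s^*$ exactly. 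Your route can be repaired, but it requires both the $\delta$-uniformity and a careful (semi)continuity argument for the correspondence $\lambda$.
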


It will be shown in \prettyref{sec:ass} (see \prettyref{thm:radial}) that any linear combination of Gaussian radial basis functions in $\mathbb{R}^d $ with positive weights satisfies \prettyref{eq:flat}. Furthermore, \prettyref{eq:flat} also holds for any nonconstant, one-dimensional polynomial or partial sum of a Fourier series.

\begin{remark}
Taken together, \prettyref{thm:main} and \prettyref{thm:lambdalower} imply that $ \text{diam}_{\calS}(\bt) $ converges to zero (exponentially fast) in $ \Prob_{\bX} $-probability when $ K_j(\bt) \rightarrow +\infty $ for each $ j \in \calS $. The feature selection frequencies of important variables are typically large for deeply grown decision trees (in fact, $ K_j(\bt) $ is usually scales with the tree depth) and hence this condition is typically met. That is, the number of times $ \Delta(j, s^*, \bt') > \max_{j'\neq j}\Delta(j', s^*, \bt') $ at an ancestor node $ \bt' $ of $ \bt $ typically scales with the tree depth.
\end{remark}

\begin{remark}
Condition \prettyref{eq:flat} does not mean that all partial derivatives of $ f $ exist and are continuous. For example, the function $ f(\bx) = x_1 + (x_1-1/2)^2\text{sgn}(x_1-1/2) $ has discontinuous second derivative, yet still satisfies the condition.
\end{remark}


\subsection{Examples}
\prettyref{thm:lambdalower} does does not show how $ \Lambda_j $ depends on the structure of the regression function. It seems, at least for now, that such results are only available on a case-by-case basis and obtained with considerable effort. Here we give some example calculations of $ \Lambda_j $ for polynomial and trigonometric functions which decay inversely with the degree and periodicity, respectively. These theoretical results are accompanied by plots (see \prettyref{fig:poly} and \prettyref{fig:sine}) of $ \Delta(j, \cdot; \bt) $ together with sampling distributions of $ \hat s \in \argmax_s \widehat\Delta(j, s; \bt) $ from a sample size of $ n = 100 $ over $ 100 $ independent replications. Note that here a smaller sample size was purposely chosen to mimic a situation where the split is performed in a deep node and thus likely to contain only a small number of observations. As evidenced by the plots, the optimal splits tend to be closer to the parent subnode edges (in this case $ 0 $ and $ 1 $) with larger degree and periodicity. This phenomenon is manifested in the lower bounds on $ \Lambda_j $ in \prettyref{ex:poly} and \prettyref{ex:sine}---some of the terminal nodes will be large if splits from ancestor nodes are close to their parent subnode edges. In conjunction with \prettyref{thm:main} and the inverse relationship between the terminal node size and $ \text{MDI}(X_j; \bt) $ (being a weighted sum of $ \Delta(j, s^* ; \bt') $), the plots \prettyref{fig:poly} and \prettyref{fig:sine} also reveal that the splits tend to be near the edges when the reduction in impurity is small. In future sections, we will theoretically confirm this (see \prettyref{thm:mainprob} and \prettyref{thm:second}) and show, more generally, that splits occur near the edges of the parent subnode whenever $ \Delta(j, s^* ; \bt') $ is small.\footnote{Note that \prettyref{thm:mainprob} and \prettyref{thm:second}, used to prove this phenomenon, do not need \prettyref{ass:indep}.} This phenomenon has also been dubbed ``end-cut preference'' in the literature \citep{ishwaran2015effect}, \citep[Section 11.8]{breiman1984}. In \prettyref{sec:alternate}, we will study a penalized variant of $ \Delta(j, s^* ; \bt') $ in order to mitigate this effect.

For each of the following three examples, we assume that $ \bX $ is uniformly distributed on $ [0, 1]^d $.
\begin{example} \label{ex:poly}
Suppose $ f(\bx) = \sum_{j=1}^d \beta_j x_j^{k_j} $ for nonzero constants $ \{ \beta_j \} $ and integer $ k_j \geq 0 $. Suppose $ j \in \calS $ so that $ k_j \geq 1 $. Then
$$
\Lambda_j \geq \left(\frac{1}{k_j(k_j+1)}\right)^{2/3}.
$$
It is possible to show that, more generally, if $ f(\bx) = \sum_{j=1}^d \beta_j (x_j-\alpha_j)^{k_j} $, for constants $ \{ \alpha_j \} $, then $ \Lambda_j \geq C 4^{-k_j/3}k_j^{-4/3} $ for some universal constant $ C > 0 $.
\begin{figure} [t] 
\centering
\begin{subfigure}[t]{0.45\textwidth}
  \centering
  \includegraphics[width=1\linewidth]{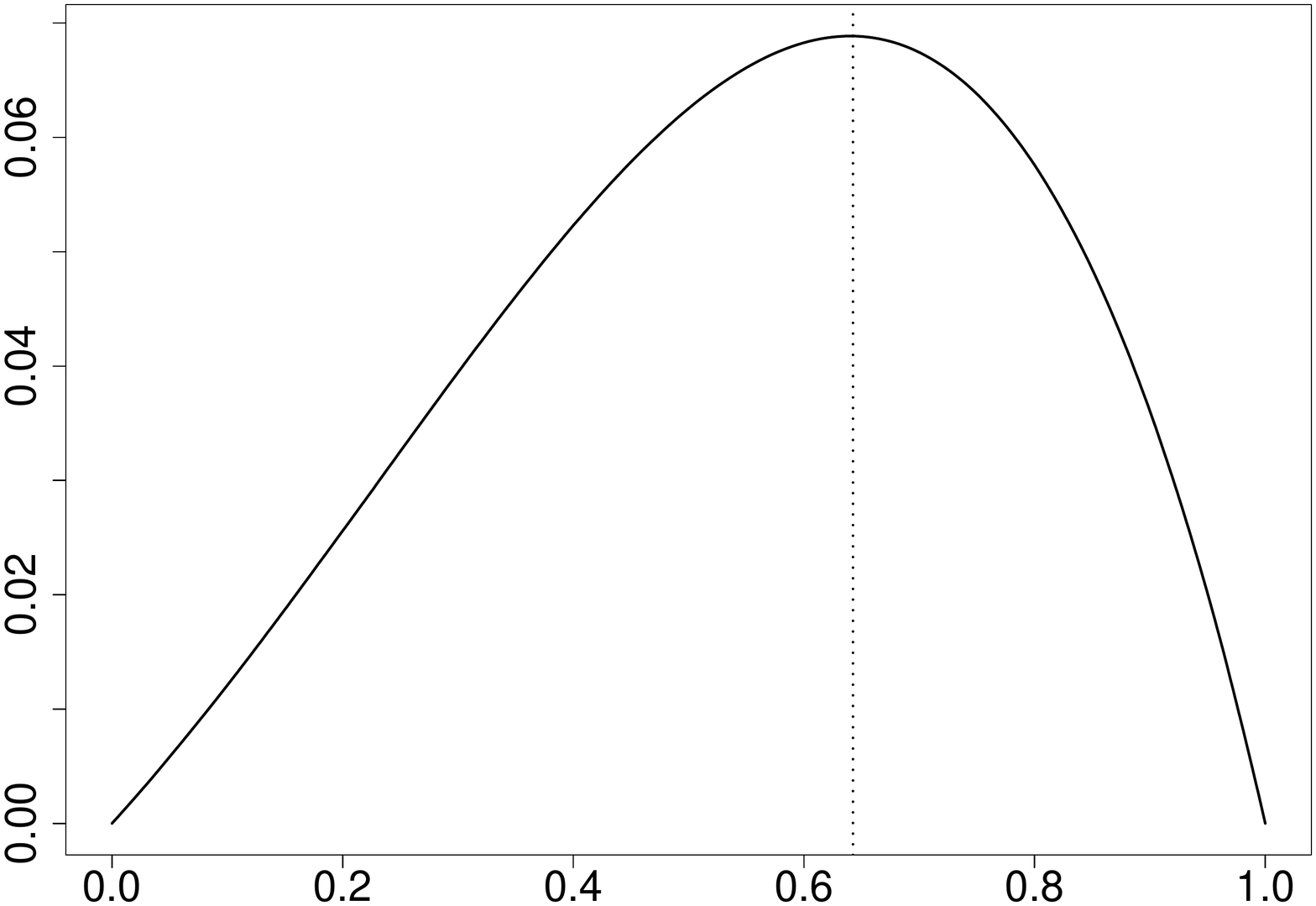}
  \caption{Plot of $ s \mapsto \Delta(1, s ; [0, 1]) $ for $ f(\bx) = x^{2}_1 $.}
\end{subfigure}
\hspace{1cm}
\begin{subfigure}[t]{0.45\textwidth}
  \centering
  \includegraphics[width=1\linewidth]{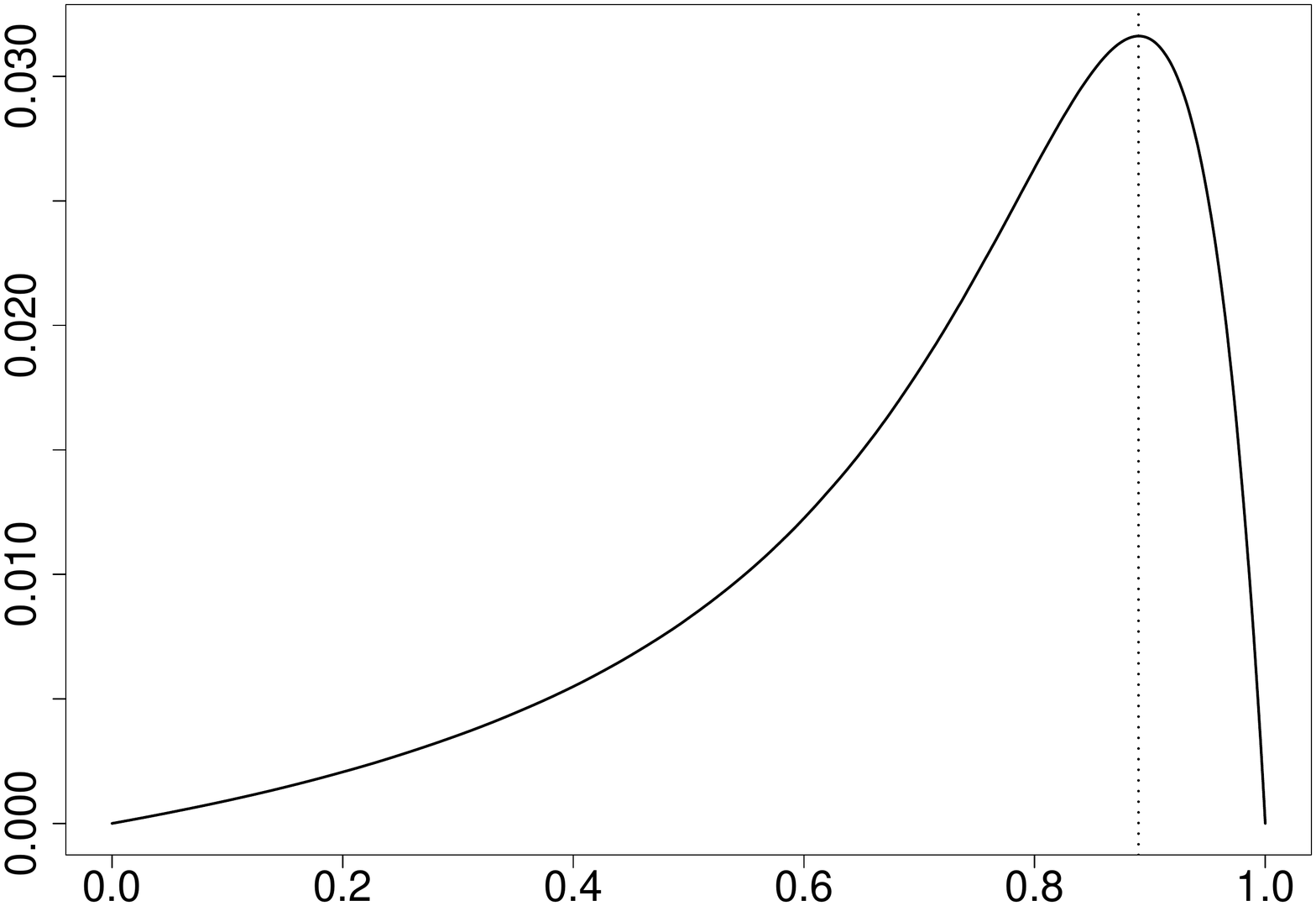}
  \caption{Plot of $ s \mapsto \Delta(1, s ; [0, 1]) $ for $ f(\bx) = x^{10}_1 $.}
\end{subfigure}
\hspace{1cm}
\begin{subfigure}[t]{0.45\textwidth}
  \centering
  \includegraphics[width=1\linewidth]{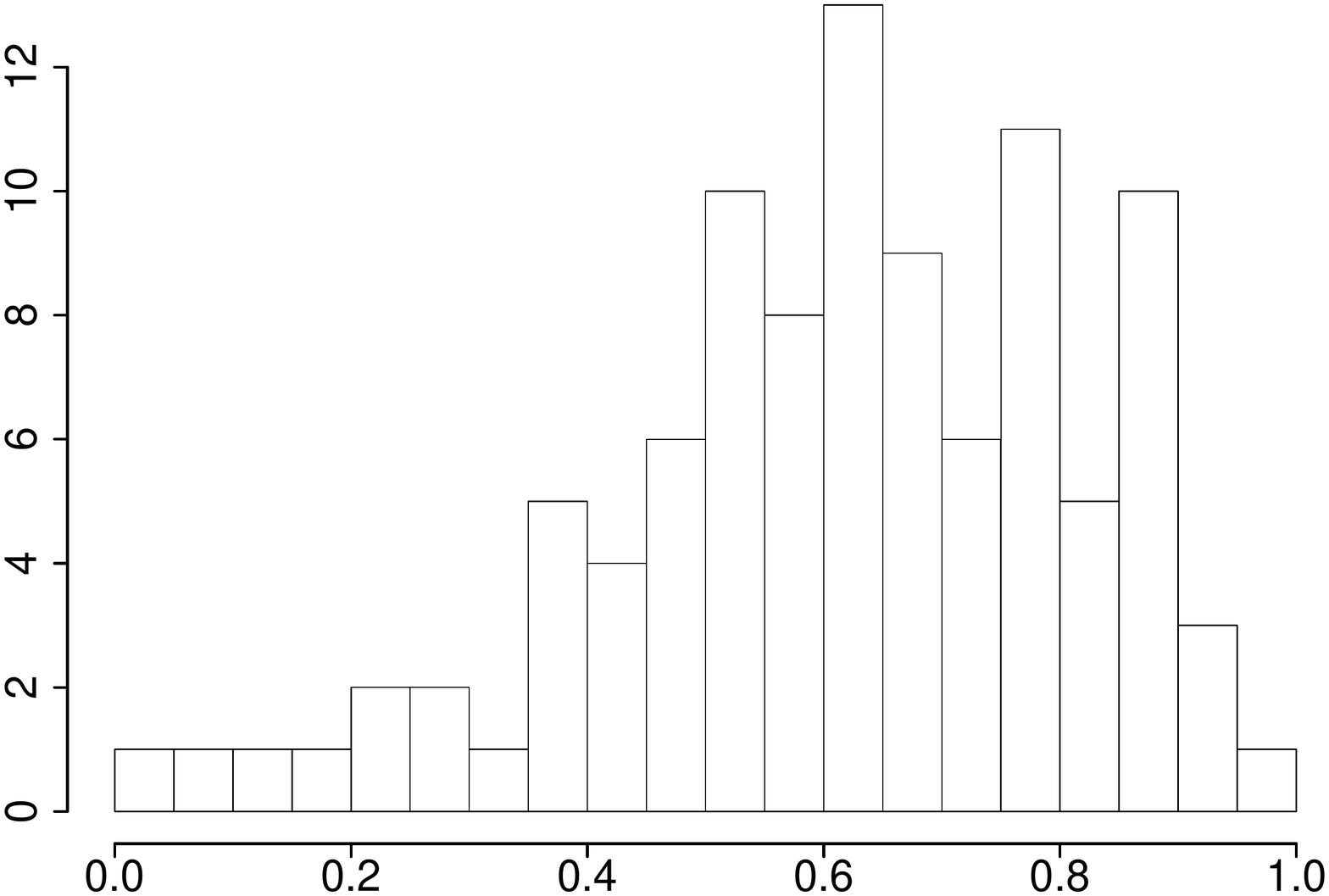}
  \caption{Histogram of $ \hat s $ for $ Y = X^{2} + \varepsilon $ ($ X \sim \Unif(0, 1) $, $ \varepsilon \sim N(0, 1) $) with $ n = 100 $ from $ 100 $ replications.}
  \end{subfigure}
\hspace{1cm}
\begin{subfigure}[t]{0.45\textwidth}
  \centering
  \includegraphics[width=1\linewidth]{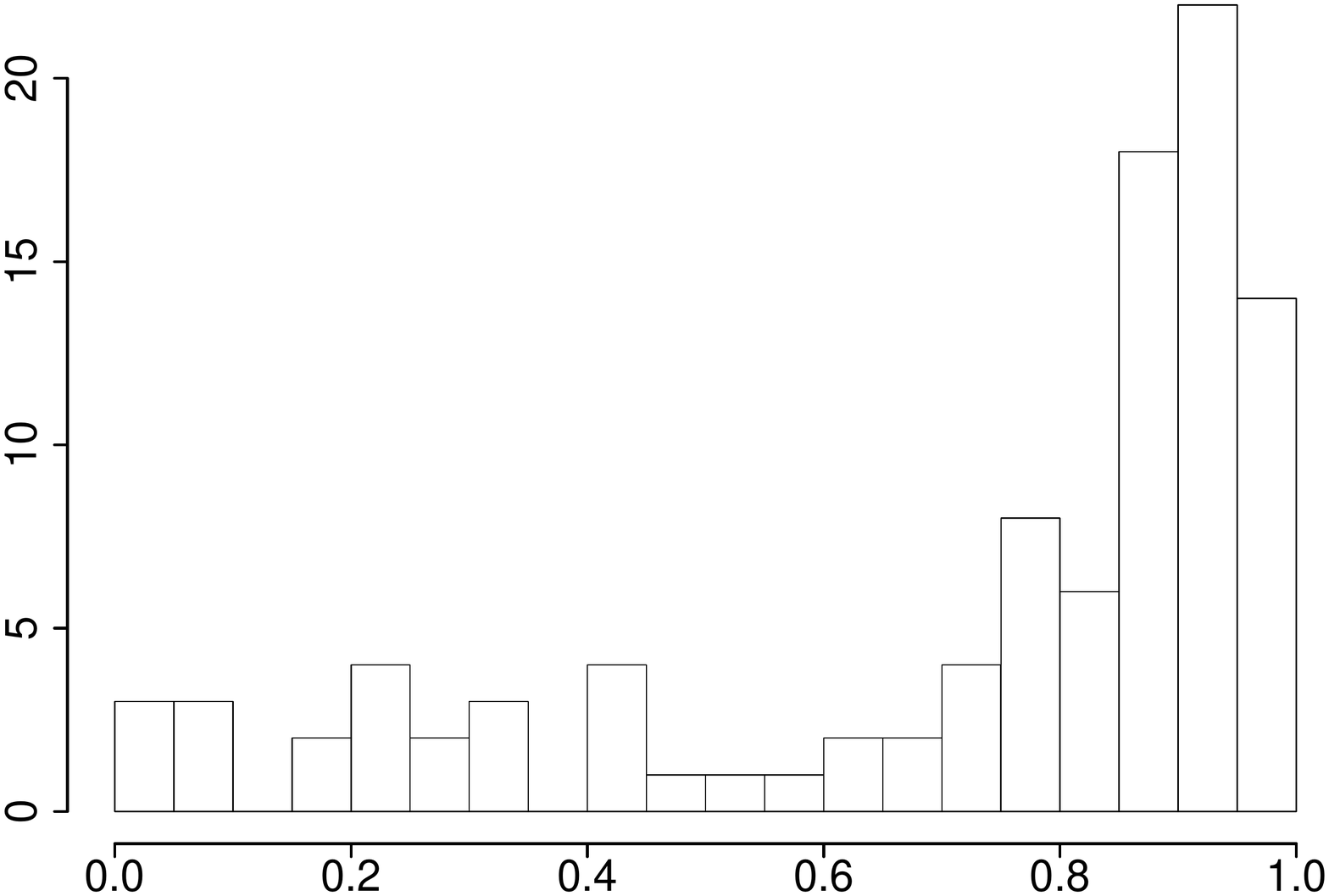}
  \caption{Histogram of $ \hat s $ for $ Y = X^{10} + \varepsilon $ ($ X \sim \Unif(0, 1) $, $ \varepsilon \sim N(0, 1) $) with $ n = 100 $ from $ 100 $ replications.}
\end{subfigure}
\caption{Plots of $ s \mapsto \Delta(1, s ; [0, 1]) $ and corresponding maxima (dotted vertical lines) for \prettyref{ex:poly}. Histograms show sampling distribution of $ \hat s $ for $ n = 100 $ from $ 100 $ replications.}
\label{fig:poly}
\end{figure}
\end{example}
\begin{example} \label{ex:sine}
Suppose $ f(\bx) = \sum_{j=1}^d \beta_j \sin(2\pi m_j x_j) $ for nonzero constants $ \{ \beta_j \} $ and integer $ m_j \geq 0 $. Suppose $ j \in \calS $ so that $ m_j \geq 1 $. There exists a universal constant $ C > 0 $ such that
$$
\Lambda_j \geq Cm_j^{-4/3}.
$$
\begin{figure} [t] 
\centering
\begin{subfigure}[t]{0.45\textwidth}
  \centering
  \includegraphics[width=1\linewidth]{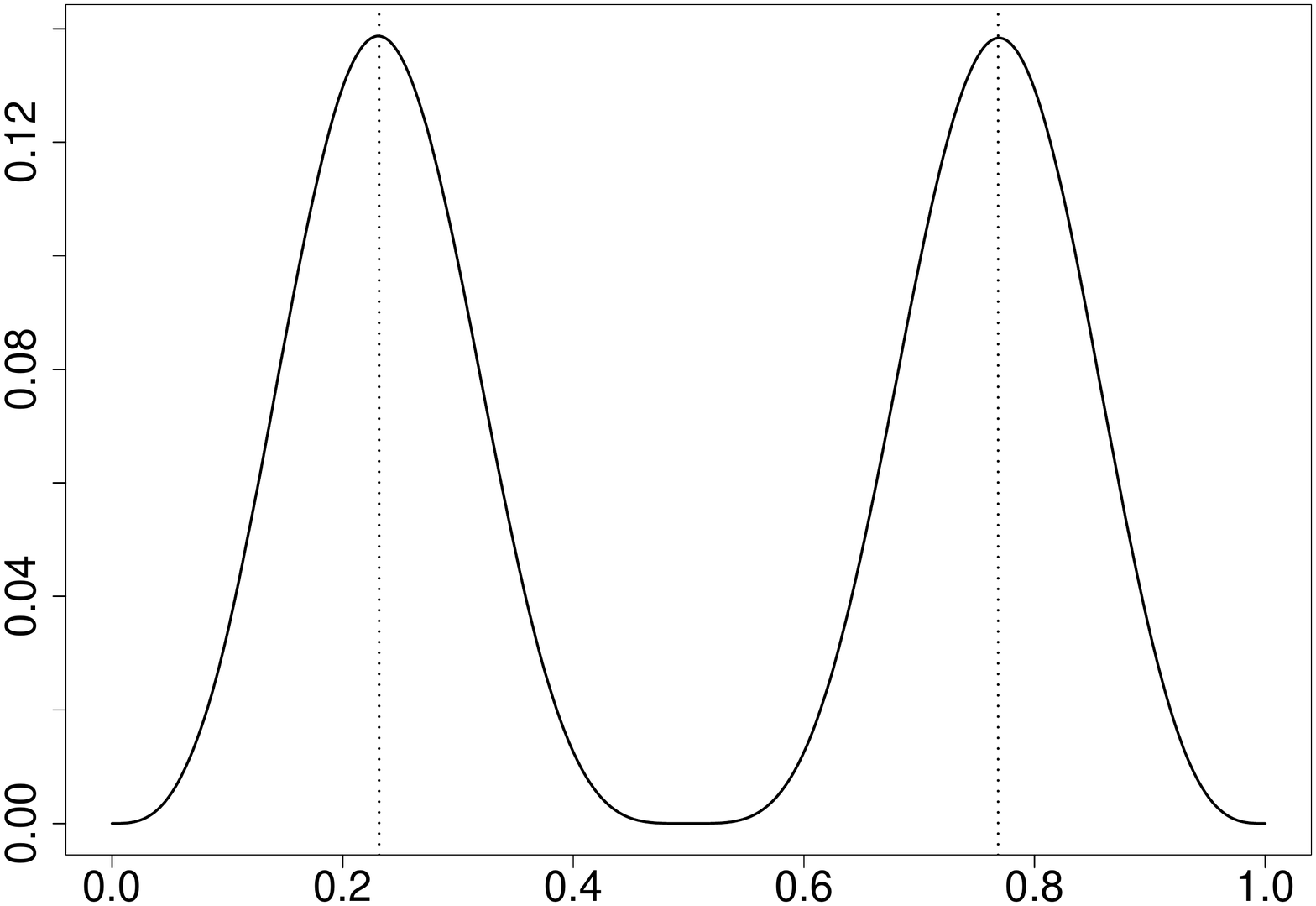}
  \caption{Plot of $ s \mapsto \Delta(1, s; [0, 1]) $ for $ f(\bx) = \sin(4\pi x_1) $.}
\end{subfigure}
\hspace{1cm}
\begin{subfigure}[t]{0.45\textwidth}
  \centering
  \includegraphics[width=1\linewidth]{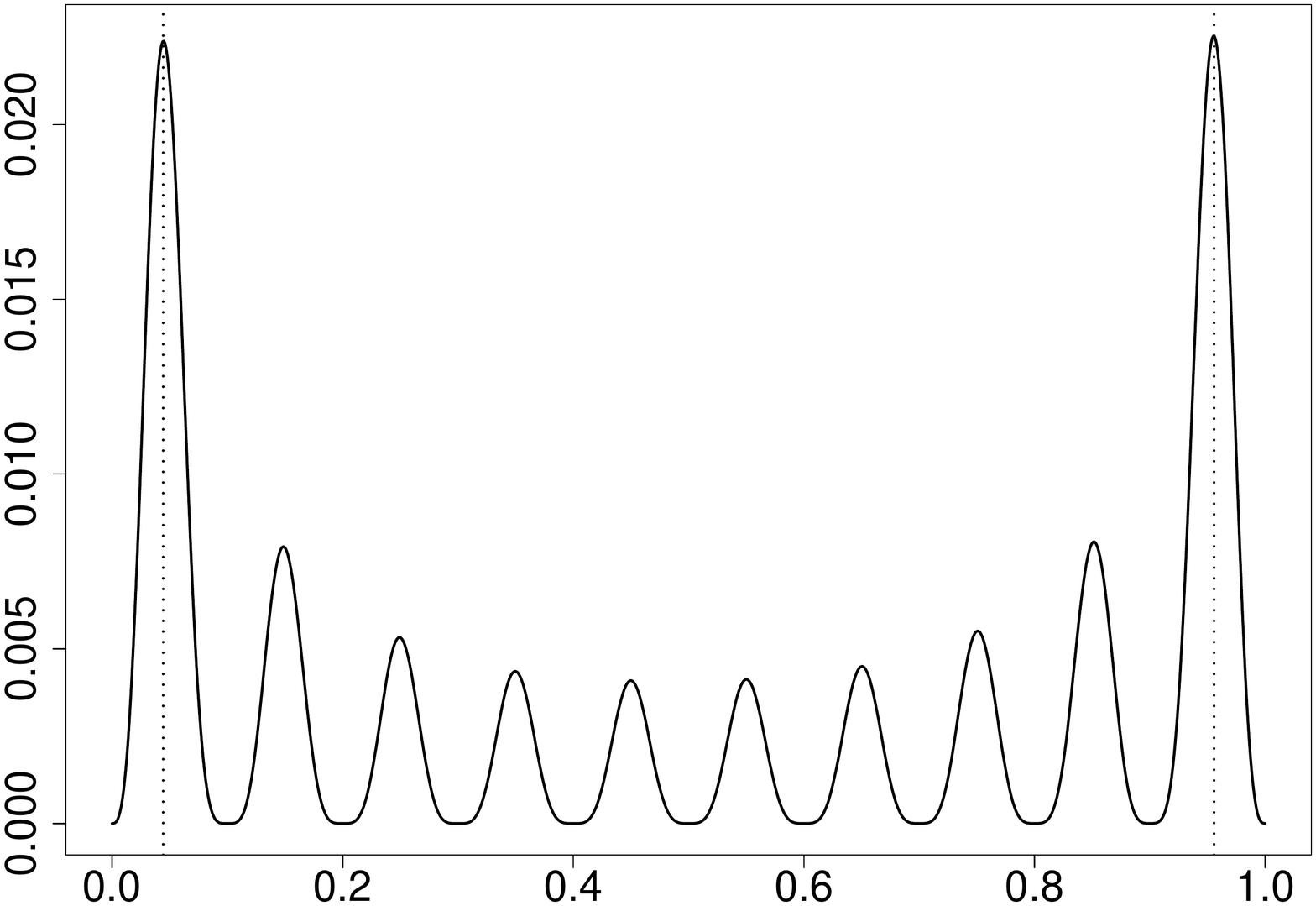}
  \caption{Plot of $ s \mapsto \Delta(1, s; [0, 1]) $ for $ f(\bx) = \sin(20\pi x_1) $.}
\end{subfigure}
\hspace{1cm}
\begin{subfigure}[t]{0.45\textwidth}
  \centering
  \includegraphics[width=1\linewidth]{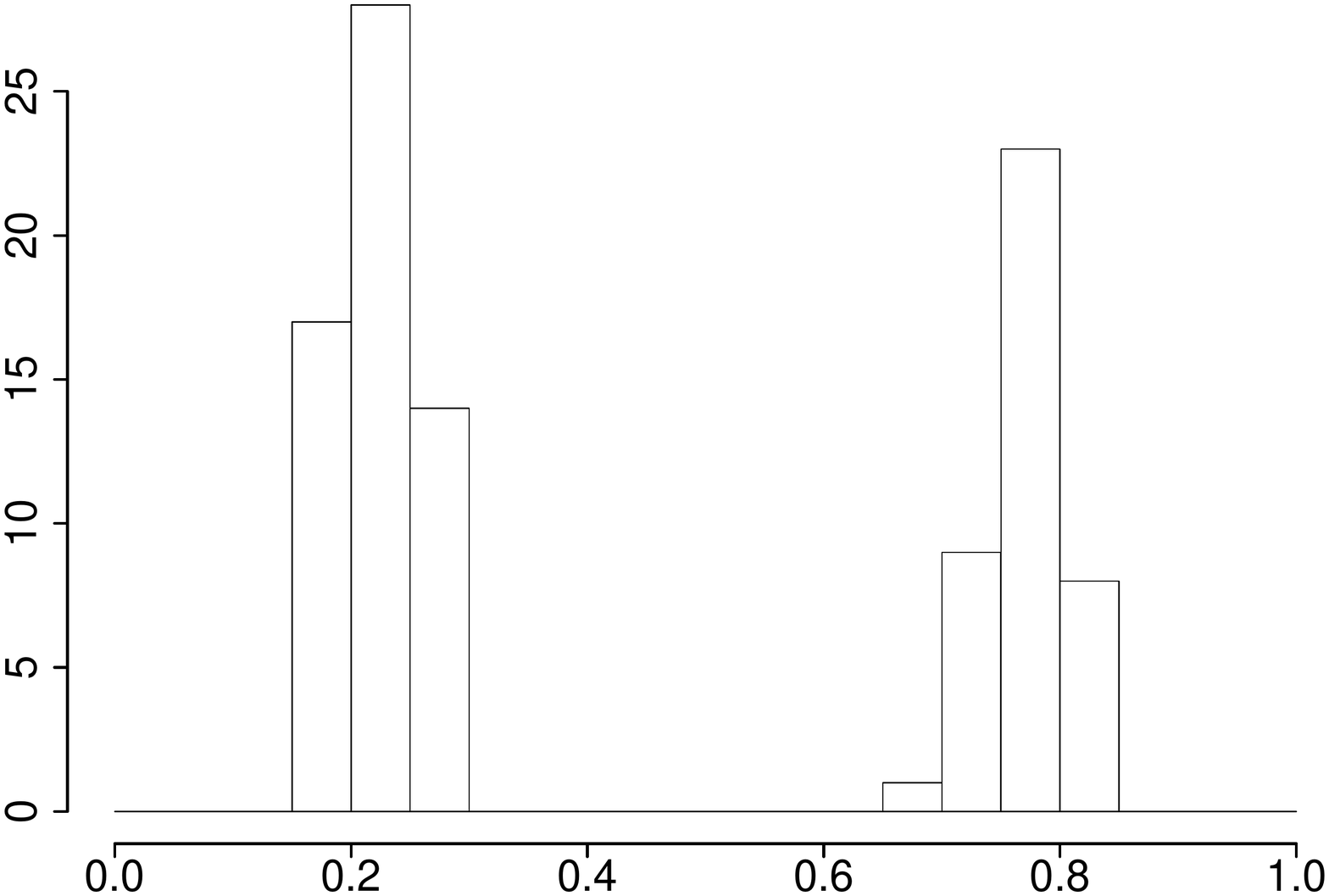}
  \caption{Histogram of $ \hat s $ for $ Y = \sin(4\pi X) + \varepsilon $ ($ X \sim \Unif(0, 1) $, $ \varepsilon \sim N(0, 1) $) with $ n = 100 $ from $ 100 $ replications.}
  \end{subfigure}
\hspace{1cm}
\begin{subfigure}[t]{0.45\textwidth}
  \centering
  \includegraphics[width=1\linewidth]{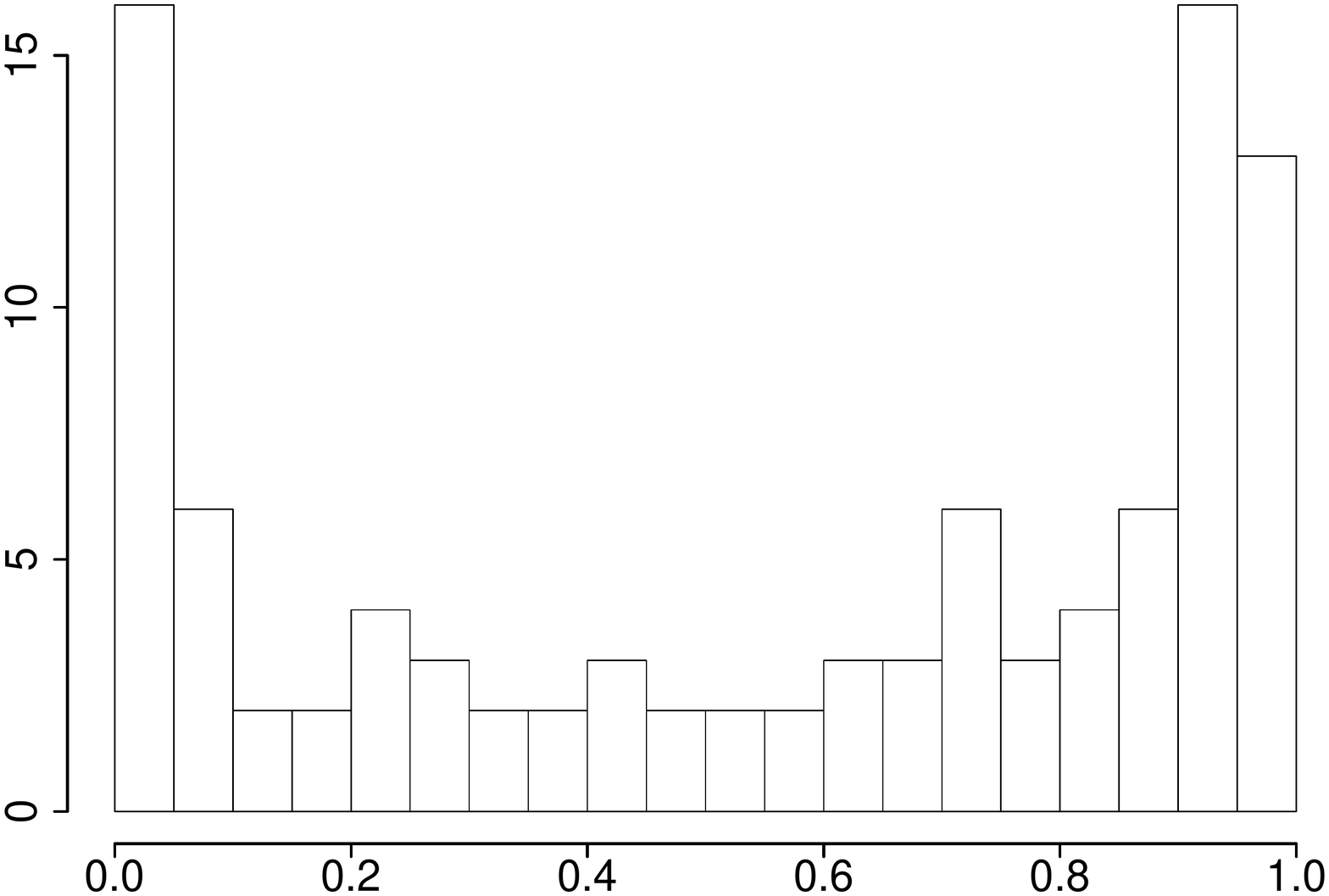}
  \caption{Histogram of $ \hat s $ for $ Y = \sin(20\pi X) + \varepsilon $ ($ X \sim \Unif(0, 1) $, $ \varepsilon \sim N(0, 1) $) with $ n = 100 $ from $ 100 $ replications.}
\end{subfigure}
\caption{Plots of $ s \mapsto \Delta(1, s; \bt) $ and corresponding maxima (dotted vertical lines) for \prettyref{ex:sine}. Histograms show sampling distribution of $ \hat s $ for $ n = 100 $ from $ 100 $ replications.}
\label{fig:sine}
\end{figure}
\end{example}
\prettyref{ex:sine} implies that to ensure each terminal subnode has small probability content, $ K_j(\bt) $ (or the tree depth) should be larger for functions that have larger frequencies in their frequency domain. This is not merely a coincidence. In fact, it can be shown more generally that if $ \bX $ is uniformly distributed, then $ \Delta(j, s^*; \bt) \geq \pi^{-2}\sum_{k\neq 0}|c_k|^2k^{-2} $, where $ \{c_k\} $ are the Fourier coefficients of the conditional partial dependence function $ \overline F_j(x_j; \bt) \sim \sum_k c_{k}e^{2\pi \mathrm{i} k (x_j-a_j(\bt))/(b_j(\bt)-a_j(\bt))} $. This can easily be established by lower bounding $ \Delta(j, s^*; \bt) $ by $ \int_{a_j(\bt)}^{b_j(\bt)}\Delta(j, s, \bt)\Pi(ds) $, where $ \Pi $ is the uniform prior on $ [a_j(\bt), b_j(\bt)] $, and using Parseval's identity. That is,
\begin{align*}
\Delta(j, s^*; \bt) & \geq \int_{a_j(\bt)}^{b_j(\bt)}\frac{(\int_{a_j(\bt)}^{s}\overline G_j(s'; \bt)ds')^2}{(s-a_j(\bt))(b_j(\bt)-s)}\Pi(ds) \\
& \geq \frac{4}{(b_j(\bt)-a_j(\bt))^3}\int_{a_j(\bt)}^{b_j(\bt)}\bigg(\int_{a_j(\bt)}^s\overline G_j(s'; \bt)ds'\bigg)^2ds \\
& = \frac{4}{b_j(\bt)-a_j(\bt)}\int_{a_j(\bt)}^{b_j(\bt)}\bigg|\sum_{k\neq 0} \frac{c_{k}}{2\pi \mathrm{i} k}(e^{2\pi \mathrm{i} k(s-a_j(\bt))/(b_j(\bt)-a_j(\bt))}-1)\bigg|^2ds \\
& = 4\bigg(\sum_{k\neq 0}\frac{|c_{k}|^2}{4\pi^2 k^2}+\bigg|\sum_{k\neq 0}\frac{c_{k}}{2 \pi k}\bigg|^2\bigg)
 \geq \pi^{-2}\sum_{k \neq 0}|c_{k}|^2k^{-2}.
\end{align*}

Our final example is a quintessential non-additive regression function known as ``Friedman \#1'' from \citep[Section 4.3]{friedman1991multivariate}. This function was used by Friedman to illustrate the efficacy of Multivariate Adaptive Regression Splines (MARS).
\begin{example} \label{ex:friedman}
Suppose 
\begin{equation} \label{eq:friedman}
f(\bx) = 10\sin(\pi x_1 x_2) + 20(x_3 - 1/2)^2 + 10x_4 + 5 x_5.
\end{equation}
Then each $ \Lambda_j $ is bounded from below in \prettyref{tab:tab2}.
\begin{table}
\centering
    \begin{tabular}{c|c}
\text{Variable} & $ \Lambda_j \geq $ \\
\hline
$ X_1 $ & $ 1/(2\pi^2) $ \\
\hline
$ X_2 $ & $ 1/(2\pi^2) $ \\
\hline
$ X_3 $ & $ (1/12)^{2/3} $ \\
\hline
$ X_4 $ & $ (1/4)^{1/3} $ \\
\hline
$ X_5 $ & $ (1/4)^{1/3} $ \\
\hline
    \end{tabular}
    \caption{Lower bounds on $ \Lambda_j $ for $ j = 1, 2, 3, 4, 5 $ in \prettyref{ex:friedman}. For $ j = 1, 2 $, we use \prettyref{lmm:twodim}, for $ j = 3 $, we use \prettyref{lmm:square}, and for $ j = 4, 5 $, we use \prettyref{ex:poly} with $ k_j = 1 $.}
\label{tab:tab2}
\end{table}
\end{example}

\section{Assumptions on the regression function} \label{sec:ass}

Note that variable ``irrelevance'' is not the same as conditional independence, i.e., $ Y \perp  X_j \mid \bX_{\setminus j}, \; \bX \in \bt $, as some variable $ X_j $ can be irrelevant in the sense that $ \Delta(j, \cdot; \bt) = 0 $, yet it still influences the distribution of output values,  i.e., $ Y\not\perp  X_j \mid \bX_{\setminus j}, \; \bX \in \bt $.\footnote{See \citep{louppe2013understanding} for more details along these lines.} Therefore, in order to ensure that $ \text{MDI}(X_j; \bt) $ is large for all nodes $ \bt $ in direction with a truly strong signal, i.e., $ j \in \calS $, we need a condition like \prettyref{eq:flat} so that the regression function is marginally not too ``flat'' and hence $ \Delta(j, s^* ; \bt) > 0 $.\footnote{Conversely, $ \Delta(j', s^*, \bt) $ for $ j' \in \calS^c $ may be on the same order as $ \Delta(j, s^*, \bt) $ for $ j \in \calS $ due to spurious correlation between $ X_j $ and $ X_{j'} $. Thus, the CART algorithm may have a selection bias and spend unnecessary time splitting on $ X_{j'} $, when conditional on $ X_j $, this variable plays no role in determining the output, i.e., $ Y \perp X_{j'} \mid X_j, \; \bX \in \bt $. 
This situation is less troublesome from an approximation error perspective, but becomes highly relevant for variable selection and interpretability.} Condition \prettyref{eq:flat} is sufficient (but not necessary\footnote{For example, in one dimension, there are examples of regression functions whose $ r^{\Th} $ order derivatives in \prettyref{eq:flat} have jump or removable discontinuities, yet $ \Lambda_j > 0 $.}) for the existence of a positive $ \Lambda_j $ which depends only on the regression function.
In the additive case, i.e., when $ f(\bx) = \sum_{j=1}^d f_j(x_j) $, condition \prettyref{eq:flat} is stronger than saying that each $ f_j $ is continuous and nonconstant on each subnode. For example, flat functions, i.e., $ f(\bx) = e^{-1/x^2_1}\indc{x_1 \neq 0} $ with $ \frac{\partial^r}{\partial x^r_1}f(\bx)\mid_{x_1=0} = 0 $ for all $ r $, or functions with derivatives having essential discontinuities, i.e., $ f(\bx) = x^2_1\sin(1/x_1)\indc{x_1\neq 0} $ may be both continuous and nonconstant on each subnode, yet violate \prettyref{eq:flat}. In particular, for both functions, $ \Delta(1, s^*; \bt) > 0 $ for all nodes---however, $ \Lambda_1 $ must equal zero. 

In the multivariable, non-additive case, the function $ f(\bx) = x_1+x_2-2x_1x_2 $ does not satisfy \prettyref{eq:flat} because $ f(x_1, 1/2) $ and $ f(1/2, x_2) $ are both equal to $ 1/2 $ and therefore are constant. In fact, if $ \bX $ is uniformly distributed and $ \bt = [0, 1]^2 $, then integrating out either direction yields a constant function on $ [0, 1] $ and hence $ \Delta(j, s^* ; \bt) = 0 $ for $ j = 1, 2 $.
More generally, if $ \Delta(j, \cdot; \bt) = 0 $ for $ j = 1, 2, \dots, d $,
then any split along any variable in $ \bt $ results in a zero decrease in impurity \citep[Technical Lemma 1]{scornet2015supp}, despite the possibility that the regression function is nonconstant on $ \bt $.
Such a situation may lead one to erroneously classify certain features as ``weak'' when they may not be so. Practically speaking, this means that the CART algorithm may ignore certain variables and therefore fail to create a fine enough partition of $ [0, 1]^d $, which may introduce a large amount of bias. Consider again the example $ f(\bx) = x_1+x_2-2x_1x_2 $ with uniformly distributed predictors and suppose we wish to split at node $ \bt = [0, 1]^2 $.
Any further splitting via the CART protocol will result in zero impurity reduction. 
Thus, one may be tempted to assume the function is constant on the node when in fact $ f $ ``strongly'' depends on both variables. 
Hence, we require some sort of ``self-consistency'' property of the regression function, namely, if $ \Delta(j, \cdot; \bt) $ is zero for all splits $ s \in [a_j(\bt),b_j(\bt)] $ for $j = 1, 2, \dots, d $, then $ f $ \emph{is} constant on $ \bt $ and therefore the bias of the tree on that node is zero. With this self-consistency assumption, even though the diameters of the nodes may not converge to zero, for the purpose of controlling the approximation error, one can safely ignore the regression function entirely on that node, regardless of whether the algorithm actually performs any further splitting.

Despite the aforementioned difficulties, we show in the sequel that any linear combination of Gaussian radial basis functions with positive weights satisfies \prettyref{eq:flat}. To this end, consider the function class
\begin{align*}
\calF & = \Big\{\sum_{k=1}^K w_k \exp\{(\bx-\bmu_k)^{\top}\bSigma_k(\bx-\bmu_k)\}: w_k \geq 0, \bmu_k \in \mathbb{R}^d, \bSigma_k\; \text{diagonal}\}.
\end{align*}
We allow for the possibility that $ \bSigma_k $ contains diagonal entries that are equal to zero, with the interpretation that the corresponding term is independent (constant) in that coordinate. It is known that $ \calF $ is a dense subclass of all positive continuous functions on $ [0, 1]^d $ \citep{park1991universal}. We have the following theorem.


\begin{theorem} \label{thm:radial}
If $ j \in \calS $, then any $ f \in \calF $ satisfies \prettyref{eq:flat}.
\end{theorem}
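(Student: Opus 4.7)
The plan is to combine the real-analyticity of $f \in \calF$ with the Hermite structure of its $x_j$-derivatives and the positivity of the weights $w_k$. Since each $\bSigma_k$ is diagonal, every summand of $f$ factors across coordinates, so
\begin{equation*}
\frac{\partial^r}{\partial x_j^r} f(\bx) \;=\; \sum_{k=1}^{K} w_k\, H_r(x_j - \mu_{k,j};\sigma_{k,j})\, G_k(\bx),
\end{equation*}
where $G_k(\bx) = \exp\{(\bx-\bmu_k)^\top\bSigma_k(\bx-\bmu_k)\} > 0$ is the $k$-th Gaussian and $H_r(y;\sigma) = e^{-\sigma y^2}\tfrac{d^r}{dy^r}e^{\sigma y^2}$ is the associated degree-$r$ Hermite-type polynomial. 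Terms with $w_k=0$ or $\sigma_{k,j}=0$ drop out, and since $j\in\calS$, at least one ``active'' index remains; relabel these as $k=1,\ldots,K'$ (with $w_k>0$ and $\sigma_{k,j}\ne 0$). For fixed $x_j^0\in[0,1]$, if $\partial^r_{x_j} f(x_j^0,\cdot)\equiv 0$ on $[0,1]^{d-1}$ for every $r\ge 1$, then the real-analytic Taylor expansion of $x_j\mapsto f(x_j,\bx_{\setminus j})$ would force $f$ to be independent of $x_j$, contradicting $j\in\calS$. Hence some $r$ yields a non-identically-zero derivative, and by linear independence of the distinct Gaussians $\{G_k(x_j^0,\cdot)\}_{k=1}^{K'}$ in $\bx_{\setminus j}$ together with non-degeneracy of the Hermite evaluation matrix, some $r\le K'$ works.

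The main technical obstacle is upgrading ``not identically zero'' to ``nowhere zero on $[0,1]^{d-1}$'' with a uniformly bounded $r$. Writing
\begin{equation*}
\partial^r_{x_j}f(x_j^0,\bx_{\setminus j}) \;=\; \sum_{k=1}^{K'} \alpha_{k,r}(x_j^0)\,\psi_k(\bx_{\setminus j}),\qquad \psi_k(\bx_{\setminus j}) = \prod_{j'\ne j}\exp\{\sigma_{k,j'}(x_{j'}-\mu_{k,j'})^2\}>0,
\end{equation*}
with $\alpha_{k,r}(x_j^0) = w_k H_r(x_j^0-\mu_{k,j};\sigma_{k,j})\exp\{\sigma_{k,j}(x_j^0-\mu_{k,j})^2\}$, it suffices to produce some $r$ for which all nonzero $\alpha_{k,r}(x_j^0)$ share a common strict sign, since then the sum inherits that sign pointwise on $[0,1]^{d-1}$ and cannot vanish. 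When $x_j^0$ is distinct from every $\mu_{k,j}$, the leading term $(2\sigma_{k,j})^r(x_j^0-\mu_{k,j})^r$ of $H_r$ dominates for all sufficiently large even $r$, making every $\alpha_{k,r}(x_j^0)$ strictly positive. For the finitely many exceptional $x_j^0\in\{\mu_{k,j}\}_{k=1}^{K'}$, a small separate case analysis based on the explicit values $H_{2m}(0;\sigma)$ (nonzero, with a known sign pattern) exhibits a different small $r$ with the same property.

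Finally, the required $r$ depends a priori on $x_j^0$, but the set on which a given $r$ succeeds is open by continuity of $\alpha_{k,r}(\cdot)$; by the preceding paragraph these open sets cover $[0,1]$, and compactness extracts a finite subcover, yielding a uniform bound $R$ that depends only on the parameters $(K,w_k,\bmu_k,\bSigma_k)$ of $f$. This establishes \prettyref{eq:flat}. The sign-alignment step is the crux, and I anticipate that the bulk of the work consists of bookkeeping on the signs of $\sigma_{k,j}$, the parity of $r$, and the relative positions of $x_j^0$ to $\{\mu_{k,j}\}_{k=1}^{K'}$.
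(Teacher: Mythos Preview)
Your sign-alignment step is the heart of the argument, and it fails. The claim that for fixed $y=x_j^0-\mu_{k,j}\ne 0$ the leading monomial $(2\sigma_{k,j}y)^r$ dominates $H_r(y;\sigma_{k,j})$ for all sufficiently large even $r$ is incorrect whenever $\sigma_{k,j}<0$, which the class $\calF$ certainly allows (indeed the standard radial-basis case has negative diagonal entries). From the generating function $\sum_{r\ge 0}H_r(y;\sigma)\,t^r/r!=e^{2\sigma yt+\sigma t^2}$ one reads off
\[
H_r(y;\sigma)=\sum_{m=0}^{\lfloor r/2\rfloor}\frac{r!}{m!(r-2m)!}\,\sigma^{m}(2\sigma y)^{r-2m},
\]
and for fixed $y$ the dominant contribution as $r\to\infty$ comes from $m$ of order $r/2$, not from $m=0$; the magnitude is of order $|\sigma|^{r/2}\sqrt{r!}$, which swamps $(2\sigma y)^r$. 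When $\sigma<0$ the terms alternate in sign with $m$, so for large even $r$ the sign of $H_r(y;\sigma)$ is governed by $(-1)^{\lfloor r/2\rfloor}$-type oscillation, not by the leading monomial. Concretely, for $\sigma=-1$ and $y=1$ one has $H_0,\dots,H_4$ equal to $1,-2,2,4,-20$, already showing the leading term does not control the sign. With several summands having possibly different $\mu_{k,j}$ and $\sigma_{k,j}$, there is no reason the signs of $\alpha_{k,r}(x_j^0)$ should align for any large even $r$, so the ``nowhere zero'' upgrade does not follow from your mechanism.

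The paper avoids Hermite asymptotics entirely and argues by contradiction using a dedicated algebraic fact: if $P_1,\dots,P_K$ are distinct polynomials without constant term and $\sum_k R_k e^{P_k}\equiv 0$ on an open interval, then every $R_k\equiv 0$. One fixes $\bx_{\setminus j}$, writes $\partial_{x_j}f(x_j,\bx_{\setminus j})=2\sum_k h_k w_k'(x_j-\mu_k)e^{h_k(x_j-\mu_k)^2}$, and observes that if this vanishes identically in $x_j$ on any interval the lemma forces every $h_k=0$, contradicting $j\in\calS$. Thus analyticity plus linear independence of exponential polynomials does the work that your sign-alignment was meant to do. If you want to salvage your approach, you would need a genuinely different reason why some $r$ makes $\sum_k\alpha_{k,r}(x_j^0)\psi_k$ nonvanishing on $[0,1]^{d-1}$; the large-$r$ asymptotic you invoke does not provide one.
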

\begin{proof}
Without loss of generality, suppose each weight of combination $ w_k $ is strictly positive. Then $ f(\bx) $ as a function of $ x_j $ has the form of a one-dimensional combination of Gaussian radial basis functions with positive weights, i.e., $ \sum_{k=1}^K w'_k \exp\{h_k(x_j-\mu_k)^2\}, $ where $ w'_k > 0 $, $ h_k $, and $ \mu_k $ belong to $ \mathbb{R} $. It can further be assumed without loss of generality that each Gaussian function $ \exp\{h_k(x_j-\mu_k)^2\} $ in the sum is distinct. Suppose, contrary to hypothesis, that $ f $ does not satisfy \prettyref{eq:flat}. Then, since $ f $ has continuous partial derivatives of all orders, there exists $ \bx' \in [0, 1]^d $ such that $ \frac{\partial^r}{\partial x^r_j}f(\bx') = 0 $ for all $ r \geq 1 $. However, $ f $ as a function of $ x_j $ is analytic and so $ f(x_j, \bx'_{\setminus j}) $ is constant in a neighborhood of $ x'_j $, say $ (a, b) $.
This implies that the partial derivative of $ f $ with respect to $ x_j $ at $ (x_j, \bx'_{\setminus j}) $, namely $2\sum_{k=1}^Kh_kw'_k(x_j-\mu_k)\exp\{h_k(x_j-\mu_k)^2\} $, is equal to zero for all $ x_j \in (a, b) $. 
Thus, one is confronted with the question about the multiplicity of ways to represent the zero function of an exponential polynomial. To answer this, consider the following lemma, which is a special case of a much more general result in complex analysis \citep[Theorem 1.6]{lang1987introduction}, \citep{green1972holomorphic} and can be deduced using induction and differentiation. For the sake of completeness, we include its proof in \prettyref{app:proofs}.

\begin{lemma}[Linear independence of exponential polynomials] \label{lmm:exppoly}
Suppose $ P_1, \dots, P_K $ are distinct (real or complex) polynomials without constant terms and $ R_1, \dots, R_K $ are (real or complex) polynomials. If
$ \sum_{k=1}^KR_ke^{P_k} = 0 $ on an open subset (of the reals or complex plane), then $ R_1 = \cdots = R_K = 0 $.
\end{lemma}

Note that $ 2\sum_{k=1}^Kh_kw'_k(x_j-\mu_k)\exp\{h_k(x_j-\mu_k)^2\} $ is an exponential polynomial of the form $ \sum_{k=1}^KR_k(x_j)e^{P_k(x_j)} $ and hence by \prettyref{lmm:exppoly} above, $ R_k(x_j) = 2h_kw'_k(x_j-\mu_k) = 0 $, implying that $ h_k = 0 $ for all $ k $ (since $ w'_k > 0 $). Thus $ f $ is constant in $ x_j $ on $ [0, 1] $ for all $ \bx_{\setminus j} \in [0, 1]^{d-1} $, which is a contradiction to the assumption that $ j \in \calS $. 
\end{proof}

\begin{remark}
A similar argument to \prettyref{thm:radial} can be used to show that any nonconstant polynomial or partial sum of a Fourier series in $ \mathbb{R} $ also satisfies \prettyref{eq:flat}. Hence, there is a dense collection of additive regression functions $ f(\bx) = \sum_{j=1}^d f_j(x_j) $ that satisfies \prettyref{eq:flat}.
\end{remark}

Individual decision trees are not competitive predictors, since their high variability and tendency to overfit makes them generalize poorly to new data. Random forests, on the other hand, are an archetypal example of variance reduction via ensemble averaging, where many weak predictors (such as decision trees) are combined to form a stronger predictor. Next, we will use \prettyref{thm:main} and \prettyref{thm:lambdalower} to show asymptotic consistency of Breiman's random forests grown with the infinite sample CART criterion.

\section{Application to random forests} \label{sec:rf}

Random forests are ubiquitous among ensemble averaging algorithms because of their ability to reduce overfitting, handle high-dimensional sparse settings, and efficient implementation. 
Due to these attractive features, they have been widely adopted and applied to various prediction and classification problems, such as those encountered in bioinformatics and computer vision.
The base learner for a random forest is a binary tree constructed using the methodology of CART. Naturally, some of our analysis for decision trees can be carried over to random forests. We explore this connection in this section.

Random forests grow an ensemble of $\texttt{ntree}$ regression trees.\footnote{In what follows, we use typewriter fonts for the variable names in the R package \texttt{randomForest}.} Each tree is grown independently using a bootstrap sample of the original data (also known as a bagged decision tree). As with traditional decision trees, terminal nodes of the tree consist of the predicted values which are then aggregated by averaging to obtain the random forest predictor.
Unlike CART decision trees, random forest trees are grown nondeterministically with two levels of randomization. In addition to the randomization introduced by growing the tree using a bootstrap sample, a second layer of randomization is injected with a random feature selection mechanism. Here, instead of splitting a tree node using all $d$ features, the random forest algorithm selects, at each node of each tree, a random subset of $ \texttt{mtry} $ potential variables that are used to further refine the tree node by splitting. The number of potential variables \texttt{mtry} is often much smaller than $d$; for regression, the default value is $ \lfloor d/3 \rfloor $. This two-level randomization is designed to decorrelate trees and therefore reduce variance. To reduce bias, random forest trees are grown deeply
---in fact, each tree is grown as deeply as possible with the stipulation that each terminal node contains at least $ \texttt{nodesize} $ observations.

More concretely, a random forest is a predictor that is built from an ensemble of randomized base regression trees $\{\widehat Y(\bx; \Theta_m, \calD_n) \}_{1 \leq m \leq \texttt{ntree}}$. The sequence $ \{\Theta_m\}_{1 \leq m\leq \texttt{ntree}} $ consists of i.i.d. realizations of a random variable $\Theta$, which governs the probabilistic mechanism that builds each tree. These individual random trees are aggregated to form the final output
\begin{equation} \label{eq:finitetree}
\widehat Y(\bx; \Theta_1, \dots, \Theta_{\texttt{ntree}}, \calD_n) \triangleq \frac{1}{\texttt{ntree}}\sum_{m=1}^{\texttt{ntree}} \widehat Y (\bx; \Theta_m, \calD_n).
\end{equation}
When $ \texttt{ntree} $ is large, the law of large numbers justifies using
\begin{equation*}
\widehat Y(\bx) = \widehat Y(\bx; \calD_n) \triangleq \Expect_{\Theta}\left[\widehat Y(\bx;\Theta,  \calD_n)\right],
\end{equation*}
in lieu of \prettyref{eq:finitetree}, where $\Expect_{\Theta}$ denotes expectation with respect to $ \Theta $, conditionally on $\calD_n$. 
We shall henceforth work with these infinite sample versions (i.e., infinite number of trees) of their empirical counterparts (i.e., finite number of trees).

Let us now briefly describe the random feature mechanism of random forests in greater detail. To ensure that candidate strong (resp. weak) coordinates have high (resp. low) selection probabilities, at each step, we randomly select without replacement a subset $ \calM \subset \{1,\dots,d\} $ of cardinality $ \texttt{mtry} $ and then select the variable in $ \calM $ and corresponding split $ s^* $ that most decreases impurity within the current node. That is, for each coordinate in $ \calM $, calculate a split $ s^* $ that maximizes $ \Delta(\cdot, \bt) $ and store the corresponding maximum value $ \Delta(j, s^* ; \bt) $. Finally, select one variable $ X_{j^*} $ at random among the corresponding largest elements of $ \{\Delta(j, s^* ; \bt)\}_{j\in \calM} $ to further split along within the current node. For a more detailed discussion of the algorithm, see \citep{scornet2015}.
As is argued in \citep[Section 3]{biau2012}, this random feature selection procedure will produce selection probabilities $ \mathbb{P}_{\Theta}[j_{\bt}(\Theta) = j] $
that concentrate around $ 1/S $ for $ j \in \calS $ and zero otherwise. Hence each ``strong'' variable has roughly an equal chance of being selected among all ``strong'' variables.

Researchers have spent a great deal of effort in understanding theoretical properties of various streamlined versions of Breiman's original algorithm \citep{genuer2010, genuer2012, arlot2014, biau2008, denil2014, biau2012, scornet2016asymptotics}. See \citep{biau2016} for a comprehensive overview of current theoretical and practical understanding. Unlike Breiman's CART algorithm, these stylized versions are typically analyzed under the assumption that the probabilistic mechanism $ \Theta $ that governs the construction of each tree \emph{does not depend} on the pair $ (\bX, Y) $ (i.e., the splits to not depend on the data distribution), largely with the intent of reducing the complexity of their theoretical analysis. Such models are referred to as ``purely random forests'' \citep{genuer2012}. In one variant known as a ``centered random forest'' (proposed by Breiman himself in a technical report \citep{breiman2004} and later studied by \citep{biau2012}), the splits are performed at the midpoint of each subnode and hence corresponds to a special case of the present paper, where the input distribution is uniform and the regression surface is linear.

Inspired by the results of \prettyref{thm:main} and \citep[Theorem 4.1]{scornet2016asymptotics}, we now show asymptotic consistency for Breiman's random forests with splits determined by the infinite sample CART sum of squares error criterion. Consistency of random forests with CART (albeit, with the finite sample splitting criterion) was previously only known when the regression function has an additive structure \citep{scornet2015}. While this important work provides insight into the complicated and subtle mechanisms of random forests, it still does not adequately explain its potential as a general nonparametric method. For example, additive models are not flexible enough to allow for interactions among covariates (which limits their flexibility for multi-dimensional statistical modeling), and there are already other highly effective training algorithms such as backfitting \citep{breiman1985estimating}. 

We follow the terminology of Scornet \citep{scornet2016asymptotics} and call a random forest ``totally nonadaptive'' if it is built independently of the training set $ \calD_n $. Let $ \texttt{maxnodes} $ denote the maximum number of terminal nodes in the tree built with randomness $ \Theta $.
Due to space constraints, we defer the proof of the next result, \prettyref{thm:forest}, until \prettyref{app:mainproofs}. For the statement of \prettyref{thm:forest}, we let $ \text{MDI}(X_j; \bt) $ denote the conditional mean decrease in impurity \prettyref{eq:imp} with weights from \prettyref{thm:main}.

\begin{theorem} \label{thm:forest}
Consider a totally nonadaptive forest predictor $ \widehat Y(\bx) = \widehat Y(\bx; \calD_n) $, where each tree is grown with the infinite sample CART sum of squares error criterion \prettyref{eq:pop}.
Suppose that
\begin{enumerate}[(a)]
\item $ f(\bx) $ is continuous on $ [0, 1]^d $;
\item $ n/\texttt{maxnodes} \rightarrow + \infty $ with $ \mathbb{P}_{\Theta} $-probability one;
and
\item $ \min_{\bt}\text{MDI}(X_j; \bt) \rightarrow + \infty $ with $ \mathbb{P}_{\Theta} $-probability one for all $ j \in \calS $.
\end{enumerate}
Then $ \mathbb{E}_{\calD_n}\left[{\int |f(\bx) - \widehat Y(\bx)|^2 \mathbb{P}_{\bX}(d\bx)}\right] \rightarrow 0 $ as $ n \rightarrow + \infty $.
\end{theorem}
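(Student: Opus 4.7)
The plan is to reduce the forest consistency statement to a bias-variance decomposition for individual totally nonadaptive trees, and then invoke \prettyref{thm:main} together with hypothesis (c) to control the bias and hypothesis (b) to control the variance. Since the forest output is a $\Theta$-average of trees, Jensen's inequality yields
\begin{equation*}
\Expect_{\calD_n}\!\left[\int |f(\bx) - \widehat Y(\bx)|^2\, \mathbb{P}_{\bX}(d\bx)\right] \leq \Expect_{\calD_n}\Expect_{\Theta}\!\left[\int |f(\bx) - \widehat Y(\bx;\Theta,\calD_n)|^2\, \mathbb{P}_{\bX}(d\bx)\right],
\end{equation*}
so it suffices to show the right-hand side vanishes. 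Conditional on $\Theta$, the tree induces a partition $\pi(\Theta)$ of $[0,1]^d$ that is independent of $\calD_n$, and the tree output is $\widehat Y(\bx;\Theta,\calD_n) = \sum_{\bt \in \pi(\Theta)} \overline Y_{\bt}\,\indc{\bx \in \bt}$, with $\overline Y_{\bt}$ set to $0$ on empty cells. Letting $\bt(\bx) \in \pi(\Theta)$ denote the cell containing $\bx$, decompose the pointwise error as
\begin{equation*}
f(\bx) - \widehat Y(\bx;\Theta,\calD_n) = \big(f(\bx) - \expect{f(\bX)\mid \bX \in \bt(\bx)}\big) + \big(\expect{Y \mid \bX \in \bt(\bx)} - \overline Y_{\bt(\bx)}\big),
\end{equation*}
and bound its square by twice the sum of the squares of the two summands.

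For the bias term, hypothesis (c) together with \prettyref{thm:main} implies that for every $j \in \calS$, $\mathbb{P}_{\bX}[a_j(\bt(\bX))\leq X_j \leq b_j(\bt(\bX))]$ decays to zero $\mathbb{P}_\Theta$-almost surely. Under \prettyref{ass:density} this forces $\text{diam}_{\calS}(\bt(\bX)) \to 0$ in $\mathbb{P}_{\bX}$-probability (cf.\ \prettyref{rmk:bias}). Since $f$ is continuous on the compact set $[0,1]^d$ and, by the sparsity assumption, depends on $\bx$ only through $\bx_{\calS}$, it is uniformly continuous in its strong coordinates, so $|f(\bx) - \expect{f(\bX)\mid \bX \in \bt(\bx)}| \leq \omega(f;\bt(\bx)) \to 0$ in $\mathbb{P}_{\bX}$-probability. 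Boundedness of $f$ upgrades this to convergence in $L^2(\mathbb{P}_{\bX}\otimes \mathbb{P}_\Theta)$ by dominated convergence, and then to convergence after taking $\Expect_{\calD_n}$ since the bias does not depend on the labels.

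For the variance term, condition on $\Theta$ and observe that marginally $N(\bt) \sim \Binom(n, \mathbb{P}_{\bX}[\bt])$. On the event $\{N(\bt) \geq 1\}$, the conditional sample mean satisfies $\Expect_{\calD_n}[(\overline Y_{\bt} - \expect{Y\mid\bX \in \bt})^2 \mid N(\bt)] \leq C/N(\bt)$ for a constant $C$ depending only on $\|f\|_\infty$ and $\Var(\varepsilon)$, while on the empty-cell event we use $|\overline Y_\bt - \expect{Y\mid \bX \in \bt}|^2 \leq \|f\|_\infty^2$. Integrating against $\mathbb{P}_{\bX}$ yields a bound of the form
\begin{equation*}
C\sum_{\bt \in \pi(\Theta)} \mathbb{P}_{\bX}[\bt]\,\Expect\!\left[\tfrac{1}{N(\bt)\vee 1}\right] + \|f\|_\infty^2 \sum_{\bt \in \pi(\Theta)} \mathbb{P}_{\bX}[\bt](1-\mathbb{P}_{\bX}[\bt])^n.
\end{equation*}
Standard binomial estimates bound the first sum by $O(\#\pi(\Theta)/n)\leq O(\texttt{maxnodes}/n)$, which vanishes by hypothesis (b), and the second similarly tends to zero since $\sum_\bt \mathbb{P}_{\bX}[\bt](1-\mathbb{P}_{\bX}[\bt])^n$ is dominated by $\texttt{maxnodes}/(en)$.

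The main obstacle is the interchange between the cell-wise diameter-shrinkage delivered by \prettyref{thm:main} and the $L^2(\mathbb{P}_{\bX})$-integrated bias: one must handle the random ``path'' $\bt(\bX)$ traced out as $\bX$ varies and argue that $\omega(f;\bt(\bX)) \to 0$ in mean rather than pointwise. Boundedness of $f$ together with uniform continuity on the compact strong-coordinate subspace makes this a routine application of dominated convergence, but it is the most delicate step because \prettyref{thm:main} controls subnode probabilities only through $\text{MDI}(X_j;\bt)$, which itself depends on the random partition structure. Once both terms are shown to vanish, summing them via $(a+b)^2 \leq 2a^2+2b^2$ and taking $\Expect_{\calD_n}\Expect_\Theta$ closes the argument.
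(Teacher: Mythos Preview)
Your argument is correct and genuinely different from the paper's. The paper does not carry out a bias--variance decomposition by hand; instead it invokes \cite[Theorem 4.1]{scornet2016asymptotics}, which reduces consistency of a totally nonadaptive forest directly to showing $\omega(f;\bt(\bX,\Theta))\to 0$ in $\mathbb{P}_{\bX,\Theta}$-probability. The paper then establishes diameter shrinkage via a hyperplane/covering argument: for each hyperplane $H=\{x_j=z\}$ it tracks $\prob{\bt_k(\bX,\Theta)\cap H\neq\emptyset\mid\Theta}$ down the tree and contracts it by a factor $1-\tfrac14\lambda_j(\cdot)$ at every split along $j$, using the identity \prettyref{eq:lambdauniversal} to recognize the accumulated contraction as $\text{MDI}(X_j;\bt)$; a union over $O(\epsilon^{-S})$ hyperplanes then controls the diameter. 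By contrast, you apply \prettyref{thm:main} to every terminal cell to force the \emph{marginal} subnode probability $F_j(b_j(\bt))-F_j(a_j(\bt))$ to zero uniformly in $\bt$, and then convert this to diameter via (implicitly) the continuity of $F_j^{-1}$. Your route is more self-contained---it avoids the external citation and does the variance by explicit binomial estimates---but it imports \prettyref{ass:indep} through \prettyref{thm:main}, whereas the paper's hyperplane argument needs only \prettyref{ass:density} via \prettyref{thm:mainprob}.

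Two small points to tighten. First, the expression $\mathbb{P}_{\bX}[a_j(\bt(\bX))\leq X_j\leq b_j(\bt(\bX))]$ is ambiguous as written (if the inner and outer $\bX$ coincide it is identically one); what you mean, and what \prettyref{thm:main} delivers, is that $\sup_{\bt}\mathbb{P}[X_j\in[a_j(\bt),b_j(\bt)]]\to 0$, from which $\sup_\bt(b_j(\bt)-a_j(\bt))\to 0$ follows since $F_j$ is a continuous strictly increasing bijection of $[0,1]$ under \prettyref{ass:density}. Second, to pass from $\mathbb{P}_\Theta$-almost-sure convergence of the per-tree risk to convergence of its $\Theta$-expectation you need a domination argument; this is supplied by $\Expect_{\calD_n}\int|f-\widehat Y(\cdot;\Theta)|^2\,d\mathbb{P}_{\bX}\leq 2\|f\|_\infty^2+2\Expect[Y^2]$, so you should record the standing assumption $\Expect[Y^2]<\infty$.
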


\begin{remark}
If additionally the regression function satisfies the assumptions of \prettyref{thm:lambdalower}, then $ \min_{\bt}\text{MDI}(X_j; \bt) \geq \Lambda_j \min_{\bt}K_j(\bt) $, where $ \Lambda_j > 0 $. By the previous discussion, $ \mathbb{P}_{\Theta}[j_{\bt'}(\Theta) = j] \approx \frac{1}{S} > 0 $ for each nonterminal node $ \bt' $ and hence $ \min_{\bt}K_j(\bt) $ goes to infinity with the tree depth $ \mathbb{P}_{\Theta} $-almost surely.
\end{remark}

\begin{remark}
When coupled with a study of the variance of the forest, our theory for the bias of individual trees (\prettyref{thm:main} and \prettyref{thm:lambdalower}) enables one to determine the quality of convergence of $ \widehat Y $ as a function of the parameters of the random forest, e.g., sample size, dimension, sparsity level, and depth to which the individual trees are grown. The analysis also reveals a \emph{local} bias-variance tradeoff, which highlights the local adaptivity of random forests. However, we shall choose not to pursue this analysis in the present paper and leave it for future work.
\end{remark}
\section{Finite sample analysis} \label{sec:finite}

From the perspective of studying the theoretical properties of decision trees, it is desirable if the splits do not separate a small fraction of data points from the rest of the sample. That is, the node counts $ N(\bt) $ should be large enough to contain enough data points so that local estimation valid. On the other hand, the node sizes should be small enough to identify local changes in the regression surface. This is true, for example, if the splitting criterion encourages splits that are performed away from the parent subnode edges. Perhaps the earliest mention of such a condition is \cite[Section 12.2]{breiman1984}, where, en route to establishing asymptotic consistency, it is assumed that the proportion of data points (from a learning sample of size $ n $) in each terminal node is at least $ k_n n^{-1}\log n $ for some sequence $ k_n $ tending to infinity and that the diameters of the terminal nodes converge to zero in probability. A similar assumption is explicitly made in the analysis of \citep{meinshausen2006quantile} and \citep{wager2015} to ensure that terminal node diameters of the forest tend to zero as the sample size tends to infinity, which as mentioned earlier, is a necessary condition to prove the consistency of partitioning estimates \cite{stone1977consistent}, \citep[Chapter 4]{gyorfi2002distribution}. This property is also satisfied by $ q $ quantile forests \citep{scornet2016asymptotics}, where each split contains at least a fraction $ q \in (0, 1) $ of the observations falling into the parent node. Furthermore, in a Bayesian setting, comparable regularity conditions for partitions have been assumed for theoretical analysis of Bayesian random forests (e.g., BART \citep{chipman2010bart}) \citep[Definition 3.1]{rockova2017posterior}, \citep[Definition 2.4]{van2017bayesian}, \citep[Definition 6.1]{rockova2018theory} (e.g., in so-called median or $ k $-$d $ tree partitions \citep{bentley1975multidimensional}, each split roughly halves the number of data points inside the node).

From the perspective of adaptive estimation, forcing the recursive partitions to artificially separate a fixed fraction of the data points at each step may be undesirable. Indeed, \citep{ishwaran2015effect} argues that CART posses a desirable trait of splitting near the edges along noisy variables. This perspective, together with the fact that one does not know a priori which variables are important, leads one to the conclude that it is undesirable to sacrifice the data-dependent nature of the split criterion in order to ensure that a technical condition is met. There are currently no results stating that splits in CART are performed away from the parent subnode edges. However, our results show that the standard assumptions for ``valid partitions'' (for example, see \citep[Assumption 3]{meinshausen2006quantile}, \citep[Definition 1]{wager2015}, or \citep[Definition 4]{wager2018estimation}) are satisfied for CART.

In the next subsection, we show how one can go from bounds on the conditional probability content of an optimally split subnode to bounds on the distance of an optimal split to its parent subnode edges.

\subsection{From probabilities to distances} \label{sec:distance}

Let $ Q(p) $ be the quantile function of the probability measure on $ [a(\bt), b(\bt)] $ with distribution function $ \prob{X \leq s \mid \bX \in \bt } $, i.e., $ Q(p) = \inf\{ s\in [a(\bt),b(\bt)] : p \leq \prob{X \leq s \mid \bX \in \bt}\} $, $ p \in [0, 1] $. Suppose for the moment that $ \bX \sim \Unif([0, 1]^d) $. Then $ \prob{X \leq s \mid \bX \in \bt} = \frac{s-a(\bt)}{b(\bt)-a(\bt)} $ and hence $ Q(p) = a(\bt) + p(b(\bt)-a(\bt)) $. From this, one can deduce that $ s^* $ lies between $ a(\bt) + \frac{b(\bt)-a(\bt)}{2}\Psi $ and $ b(\bt) - \frac{b(\bt)-a(\bt)}{2}\Psi $, where $ \Psi = 1 - \sqrt{1-\Lambda} $. This allows us to go from probability estimates to distance estimates. Inspired by this observation, we impose a general condition on the form of the quantile function $ Q(p) $ so that similar conclusions can be made for other input distributions. The assumption can also be interpreted as a regularity condition on $ Q(p) $. Roughly, it says that $ Q(p) $ cannot be too ``flat'' when $ p $ is near zero or one.
\begin{assumption} \label{ass:Q}
There exist two increasing bijections $ q_1 : [0, 1] \to [0, 1] $ and $ q_2 : [0, 1] \to [0, 1] $ such that for all nodes $ \bt $,
$$
q_1(p) \leq \frac{Q(p)-a(\bt)}{b(\bt)-a(\bt)} \leq q_2(p), \quad p \in [0, 1].
$$
\end{assumption}
Note that $ q_1 $ and $ q_2 $ are both necessarily continuous. This device allows us to bound the distance of $ s^* $ to the edges $ a(\bt) $ and $ b(\bt) $ in terms of the global balancedness $ \Lambda $. To see this, note that by \prettyref{def:edgegap}, $ 4P(\bt^*_L)P(\bt^*_R) \geq \Lambda $ and hence both $ P(\bt^*_L) $ and $ P(\bt^*_R) $ are between $ \frac{1}{2}(1 \pm \sqrt{1-\Lambda}) $. Let $ \Psi =  1 - \sqrt{1-\Lambda} \in [0, 1] $ so that, by \prettyref{eq:s}, 
\begin{equation} \label{eq:quantile}
Q\left(\Psi/2\right) \leq s^* \leq Q\left( 1- \Psi/2 \right).
\end{equation}



Finally, using \prettyref{eq:quantile}, we have that
\begin{equation*}
(b(\bt)-a(\bt))q_1(\Psi/2) + a(\bt) \leq s^* \leq b(\bt) - (b(\bt)-a(\bt))(1-q_2(1-\Psi/2)),
\end{equation*}
and, by rearranging terms,
\begin{equation} \label{eq:length}
\left| s^* - \frac{a(\bt)+b(\bt)}{2} \right| \leq \frac{b(\bt)-a(\bt)}{2}(1-\Gamma),
\end{equation}
where $ \Gamma = 2\min\{q_1(\Psi/2), 1-q_2(1-\Psi/2) \} $. These bounds show how far (in terms of distance) any optimal split is from the parent subnode edges. The inequality in \prettyref{eq:length} bounds the distance between any optimal split and the midpoint of the parent subnode.

We now provide some examples of joint distributions that satisfy \prettyref{ass:Q}. The proofs of each are given in  \prettyref{app:proofs}. 

\begin{enumerate}[(a)]
\item {\it Independent joint density bounded from above and below.} If each $ X $ is i.i.d. $ \Unif(0, 1) \sim \Beta(1, 1) $, then $ Q(p) = a(\bt) + p(b(\bt)-a(\bt)) $ and $ q_1(p) = q_2(p) = p $.
\item {\it Independent joint density unbounded from below.} If each $ X $ is i.i.d. $ \Beta(2, 1) $, then $ Q(p) = \sqrt{a^2(\bt) + p(b^2(\bt)-a^2(\bt))} $, $ q_1(p) = p $, and $ q_2(p) = \sqrt{p} $.
\item {\it Independent joint density unbounded from above.} If each $ X $ is i.i.d. $ \Beta(1/2, 1) $, then $ Q(p) = (\sqrt{a(\bt)} + p(\sqrt{b(\bt)}-\sqrt{a(\bt)}) )^2 $, $ q_1(p) = p^2 $, and $ q_2(p) = p $.
\end{enumerate}

\subsection{Number of observations in nodes}
In this section, 
we give lower bound on (a) the number of observations contained in a daughter node from an optimally split parent node using the finite sample criterion $ \widehat\Delta(\cdot; \bt) $ (denoted by $ N(\hat \bt_L) $ and $ N(\hat \bt_R) $) and (b) the distance of the optimal split to the edges of its parent node.\footnote{The terminal node counts $ N(\hat \bt_L) $ and $ N(\hat \bt_R) $ are also called \texttt{nodesize} in the R package \texttt{randomForest}.} Throughout this section, we implicitly assume that \prettyref{ass:density} and \prettyref{ass:Q} hold for each of the covariates. For simplicity, we assume that $ \calS $ is equal to the full set of $ d $ variables, although under \prettyref{ass:indep}, one can also develop lower bounds for the number of observations that land in nodes along informative directions in $ \calS $, i.e.,
$ N_{\calS} = \sum_{i=1}^n \indc{\bX_{i\calS} \in \bt_{\calS}} $, where $ \bx_{\calS} = (x_j : j\in \calS) $ and $ \bt_{\calS} = \{ \bx_{\calS} : \bx \in \bt \} $.
This is a quantity of interest because, from the perspective of estimation, one could still have consistency even if $ N_{\calS^c} = 0 $.

In what follows, we define $ \Psi = 1-\sqrt{1-\Lambda} $, $ \Gamma = 2\min\{q_1(\Psi/2), 1-q_2(1-\Psi/2) \} $, $ p_L = q^{-1}_2(\Gamma^2/2) $, $ p_R = 1-q^{-1}_1(1-\Gamma^2/2) $, and $ p = \min\{p_L, p_R \} $. For example, when $ \bX \sim \Unif([0, 1]^d) $, we have $ \Gamma = \Psi $ and $ p_L = p_R = \Psi^2/2 $. 

For the next set of results, we let $ \mathscr{S}^* = \argmax_s \Delta(s ; \bt) $ and $ \hat s \in \argmax_s \widehat\Delta(s ; \bt) $. Due to space constraints, we prove both theorems in \prettyref{app:proofs}.

\begin{theorem} \label{thm:sample}
Suppose $ N(\bt) $ is large enough so that, given $ N(\bt) $ and $ \bt $, with probability at least $ 1-\delta $,
\begin{equation} \label{eq:close}
\text{dist}(\hat s,\; \mathscr{S}^*) \leq \frac{b(\bt)-a(\bt)}{2}\Gamma(1-\Gamma).
\end{equation}
Then with probability at least $ 1-\delta $,
$$
\frac{b(\bt) - a(\bt)}{2}\Gamma^2 + a(\bt) \leq \hat s \leq b(\bt) -  \frac{ b(\bt) - a(\bt)}{2}\Gamma^2.
$$
If $ \bt $ is independent of the training data $ \calD_n $, then, given $ N(\bt) $ and $ \bt $, with probability at least $ 1-\delta -2\exp\{-Np^2/2\} $,
$$N(\hat \bt_L) \geq N(\bt) \frac{p_L}{2} \quad \text{and} \quad N(\hat \bt_R) \geq N(\bt) \frac{p_R}{2}. $$
\end{theorem}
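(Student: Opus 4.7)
I would decompose the argument into three steps: (i) a deterministic geometric step that converts the closeness hypothesis \prettyref{eq:close} into a sandwich on $\hat s$; (ii) a population step that uses \prettyref{ass:Q} to translate the sandwich into a lower bound on the conditional probability content of each daughter node; and (iii) a concentration step that transfers these population bounds to the sample counts $N(\hat\bt_L)$ and $N(\hat\bt_R)$. The one delicate point is that $\hat s$ depends on the data, so $\indc{X_i \leq \hat s}$ is not a fresh Bernoulli with fixed success probability, but the deterministic containment produced in step (ii) cleanly sidesteps this.

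\textbf{Steps (i)--(ii).} Work on the event $E = \{\text{dist}(\hat s, \mathscr{S}^*) \leq \tfrac{b(\bt)-a(\bt)}{2}\Gamma(1-\Gamma)\}$, which has conditional probability at least $1-\delta$ by hypothesis. Pick $s^* \in \mathscr{S}^*$ attaining the infimum defining $\text{dist}(\hat s, \mathscr{S}^*)$. The pre-existing population bound \prettyref{eq:length} gives $|s^* - (a(\bt)+b(\bt))/2| \leq \tfrac{b(\bt)-a(\bt)}{2}(1-\Gamma)$, so the triangle inequality yields
\[
\left|\hat s - \tfrac{a(\bt)+b(\bt)}{2}\right| \leq \tfrac{b(\bt)-a(\bt)}{2}(1-\Gamma)(1+\Gamma) = \tfrac{b(\bt)-a(\bt)}{2}(1-\Gamma^2),
\]
which rearranges to the first conclusion. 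Consequently, on $E$ we have the deterministic containment $\hat\bt_L \supseteq \bt'_L := \{\bX\in\bt : X \leq a(\bt) + \tfrac{b(\bt)-a(\bt)}{2}\Gamma^2\}$ and analogously $\hat\bt_R \supseteq \bt'_R$ on the right. Applying \prettyref{ass:Q} at $p_L = q_2^{-1}(\Gamma^2/2)$ yields $Q(p_L) \leq a(\bt) + (b(\bt)-a(\bt))q_2(p_L) = a(\bt) + \tfrac{b(\bt)-a(\bt)}{2}\Gamma^2$, and hence $\prob{X \leq a(\bt) + \tfrac{b(\bt)-a(\bt)}{2}\Gamma^2 \mid \bX\in\bt} \geq p_L$; a symmetric argument gives at least $p_R$ for the right tail.

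\textbf{Step (iii) and main obstacle.} Independence of $\bt$ from $\calD_n$ implies that, conditionally on $N(\bt)$ and $\bt$, the observations falling in $\bt$ are i.i.d. draws from $\Prob_{\bX\mid\bX\in\bt}$. Hence $N(\bt'_L)$ is a sum of $N(\bt)$ i.i.d. Bernoulli indicators with mean at least $p_L \geq p$. A one-sided Hoeffding inequality applied with multiplicative slack $1/2$ gives
\[
\Prob\left[N(\bt'_L) < \tfrac{N(\bt) p_L}{2} \,\Big|\, N(\bt), \bt\right] \leq \exp\{-N(\bt) p_L^2/2\} \leq \exp\{-N(\bt)p^2/2\},
\]
and analogously for $N(\bt'_R)$. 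Since $N(\hat\bt_L) \geq N(\bt'_L)$ on $E$ (and likewise on the right), a union bound over $E^c$ and the two concentration failures delivers the stated probability at least $1-\delta - 2\exp\{-N(\bt) p^2/2\}$. As noted above, the only real obstacle is the data-dependence of $\hat s$, and the containment trick resolves it with essentially no additional technical overhead once \prettyref{eq:length} and \prettyref{ass:Q} are in hand.
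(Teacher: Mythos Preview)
Your argument is correct, and the overall structure (triangle inequality with \prettyref{eq:length}, then \prettyref{ass:Q}, then concentration) matches the paper. The one genuine difference is in how you handle the data-dependence of $\hat s$ in the concentration step. The paper, after establishing $P(\bt_L)\geq p_L$ and $P(\bt_R)\geq p_R$ for every $s$ in the sandwich interval, invokes the Dvoretzky--Kiefer--Wolfowitz inequality to control $\widehat P(\bt_L)$ \emph{uniformly} over all such $s$, and then specializes to $s=\hat s$. Your containment trick---replacing $\hat\bt_L$ by the fixed set $\bt'_L=\{\bX\in\bt:X\le a(\bt)+\tfrac{b(\bt)-a(\bt)}{2}\Gamma^2\}$ and applying a one-sided Hoeffding bound at that single deterministic threshold---is more elementary: it sidesteps the need for any uniform-in-$s$ concentration and lands on exactly the same probability $1-\delta-2\exp\{-N(\bt)p^2/2\}$. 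The paper's route yields the marginally stronger intermediate statement that $\widehat P(\bt_L)\ge\tfrac12 P(\bt_L)$ simultaneously for all $s$ in the sandwich, but that extra uniformity is not used in the conclusion of \prettyref{thm:sample}, so your simplification loses nothing for the theorem as stated.
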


\begin{remark}
The assumption \prettyref{eq:close} can be recast by saying that the distance between $ s^* $ and $ \hat s $ is less than a constant multiple, namely $ \frac{1}{2}\Gamma(1-\Gamma) $, of the length of the parent subnode. (Note that by definition, $ \text{dist}(\hat s,\; \mathscr{S}^*) \leq b(\bt)-a(\bt) $.)
Such an assumption is not unrealistic since if $ \Delta(\cdot ; \bt) $ has a unique global maximum, i.e., $ s^* $ is unique, \citep[Theorem 2]{ishwaran2015effect} shows $ \text{dist}(\hat s,\; \mathscr{S}^*) $ converges weakly to zero. In fact, with similar assumptions, one can characterize the rate of convergence \citep[Theorem 3.2.5]{wellner1996weak}. Indeed, \citep{banerjee2007confidence, buhlmann2002analyzing} show cube root asymptotics (i.e., $ n^{1/3}(\hat s - s^*) $ converges in distribution) of split points for one-level decision trees (e.g., decision stumps) using the CART sum of squares error criterion. The work of \citep[Section 3.4.2]{buhlmann2002analyzing} also extends these rates to multi-level decision trees, which is particularly relevant to our setting.
\end{remark}


In practice, $ a(\bt) $, $ b(\bt) $, and $ \bt $ all depend on the data $ \calD_n $. We now state a refinement of \prettyref{thm:sample} that allows for data-dependent splits. Essentially it says that if the optimal empirical and population split points are sufficiently close to each other and the fraction of data points contained in the parent node is at least $ \alpha $, then the fraction of data points contained in each daughter node is at least $ \alpha \beta $, where $ \beta $ is strictly positive and depends only on $ q_1 $, $ q_2 $, and $ \Lambda $. In fact, the next theorem shows that one can take $ \beta = \min\{p_L, p_R\}/2 $ with high probability. This ensures that the daughter nodes contain a sufficiently large number of training samples, so that subsequent empirical splits will be close to their infinite sample counterparts, and so on and so forth.

\begin{theorem} \label{thm:sampledata}
Let $ \alpha > 0 $ and suppose $ n $ is large enough so that with probability at least $ 1-\delta $,
\begin{equation}
\text{dist}(\hat s,\; \mathscr{S}^*) \leq \frac{ b(\bt)- a(\bt)}{2}\Gamma(1-\Gamma) \quad \text{and} \quad N(\bt) \geq n\alpha.
\end{equation}
Then with probability at least $ 1-\delta $,
$$
\frac{b(\bt) - a(\bt)}{2}\Gamma^2 + a(\bt) \leq \hat s \leq b(\bt) -  \frac{ b(\bt) - a(\bt)}{2}\Gamma^2,
$$
and with probability at least $ 1-\delta-16(n^{2d}+1)\exp\{-n \alpha^2 p^2/512 \} $,
$$ N(\hat \bt_L) \geq N(\bt) \frac{p_L}{2} \quad \text{and} \quad N(\hat \bt_R) \geq N(\bt) \frac{p_R}{2}. $$
\end{theorem}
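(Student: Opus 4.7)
The plan is to extend the argument of \prettyref{thm:sample} by replacing the pointwise Hoeffding bound (valid when $\bt$ is independent of the sample) with a uniform concentration inequality over the class of axis-aligned rectangles in $[0,1]^d$. For the first assertion (the location of $\hat s$), I would observe that the proof of that part of \prettyref{thm:sample} never invokes independence of $\bt$ and $\calD_n$: combining the hypothesis $\text{dist}(\hat s, \mathscr{S}^*) \leq \tfrac{b(\bt)-a(\bt)}{2}\Gamma(1-\Gamma)$ with \prettyref{eq:length} and the triangle inequality yields, on the event of probability $1-\delta$,
$$\left|\hat s - \tfrac{a(\bt)+b(\bt)}{2}\right| \leq \tfrac{b(\bt)-a(\bt)}{2}(1-\Gamma)(1+\Gamma) = \tfrac{b(\bt)-a(\bt)}{2}(1-\Gamma^2),$$
which is exactly the stated two-sided bound on $\hat s$.

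For the second assertion, the key observation is that $\bt$, $\hat\bt_L$, and $\hat\bt_R$ are all axis-aligned rectangles in $[0,1]^d$. Letting $\calF$ denote the class of all such rectangles, Sauer--Shelah gives $\mathrm{VC}(\calF) \leq 2d$ and shatter coefficient at most $(n+1)^{2d}$. Applying the Vapnik--Chervonenkis inequality yields, with probability at least $1 - C_1(n^{2d}+1)\exp(-C_2 n \epsilon^2)$, the uniform bound
$$\sup_{A \in \calF}\left|\tfrac{N(A)}{n} - \mathbb{P}_{\bX}(A)\right| \leq \epsilon.$$
I would then choose $\epsilon = p\alpha/4$. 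On the intersection of this event and the $1-\delta$ event of the hypothesis, the bound on $N(\bt) \geq n\alpha$ transfers to $\mathbb{P}_{\bX}(\bt) \geq \alpha - \epsilon$.

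Next, combining the location of $\hat s$ with \prettyref{ass:Q} in the same way as in the derivation of \prettyref{eq:length} gives
$$P(\hat\bt_L \mid \bt) \geq q_2^{-1}(\Gamma^2/2) = p_L, \qquad P(\hat\bt_R \mid \bt) \geq 1 - q_1^{-1}(1-\Gamma^2/2) = p_R,$$
so $\mathbb{P}_{\bX}(\hat\bt_L) \geq p_L\, \mathbb{P}_{\bX}(\bt)$ and similarly for $\hat\bt_R$. Invoking the uniform bound once more at $\hat\bt_L$,
$$N(\hat\bt_L) \geq n\mathbb{P}_{\bX}(\hat\bt_L) - n\epsilon \geq p_L\bigl(N(\bt) - n\epsilon\bigr) - n\epsilon = p_L N(\bt) - (1+p_L)n\epsilon.$$
Using $N(\bt) \geq n\alpha$ together with $\epsilon = p\alpha/4 \leq p_L\alpha/4$ and $1+p_L \leq 2$ gives $(1+p_L)n\epsilon \leq p_L N(\bt)/2$, hence $N(\hat\bt_L) \geq p_L N(\bt)/2$; the same computation with $p_R$ handles the right daughter. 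Tracking the constants in the standard form of the VC inequality (e.g.\ the $4\,\Delta_\calF(2n)\exp(-n\epsilon^2/32)$ version) and taking a union bound over $\{\bt, \hat\bt_L, \hat\bt_R\}$ produces the stated failure probability $\delta + 16(n^{2d}+1)\exp(-n\alpha^2 p^2/512)$.

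The main obstacle is precisely the data-dependent nature of $\bt$, which invalidates the conditional Hoeffding argument used in \prettyref{thm:sample}; resolving it requires uniform control over an entire class of sets rather than a single one. The finite VC dimension of axis-aligned rectangles is what makes this feasible. A secondary subtlety is the two-stage conversion between $N(\bt)$-normalized empirical proportions and $n$-normalized probabilities: the algebra requires $\epsilon$ to scale jointly with both $p$ and $\alpha$, which is the reason the exponent in the final probability bound carries the product $\alpha^2 p^2$.
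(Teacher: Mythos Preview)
Your proposal is correct and follows essentially the same strategy as the paper: reuse the location argument from \prettyref{thm:sample} verbatim for the first claim, and replace the conditional Hoeffding step by a uniform VC bound over axis-aligned rectangles (the paper packages this as \prettyref{lmm:prob}) for the second. The only cosmetic differences are that the paper works with the ratio decomposition $|\widehat P(\bt_L)-P(\bt_L)| \leq (|N(\bt_L)/n-P(s,\bt)|+|N(\bt)/n-P(\bt)|)/(N(\bt)/n)$ and sets $\epsilon=1/2$ there, whereas you bound $N(\hat\bt_L)$ directly and set $\epsilon=p\alpha/4$ in the VC inequality; and your remark about a ``union bound over $\{\bt,\hat\bt_L,\hat\bt_R\}$'' is superfluous since the VC inequality is already uniform over all rectangles.
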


The previous theorem can be used inductively to show that if the tree is grown to a depth of $ K $, then with high probability, each terminal subnode length is at most $ (1-\Gamma^2/2)^K $. An obvious line of future work would be to make this statement more rigorous and use it to furnish a convergence result for ensembles of decision trees with empirical splits.

\section{Extension to classification trees} \label{sec:classification}

Our results have focused on regression trees, although nearly identical bounds can be developed for classification trees. In the binary classification context, i.e., $ Y \in \{ -1, +1 \} $, the variance impurity \prettyref{eq:impurity} equals $ \Delta(\bt) = 4\times\frac{\#\{Y_i = +1 : \bX_i \in \bt\}}{N(\bt)} \times \frac{\#\{Y_i = -1 : \bX_i \in \bt\}}{N(\bt)} $, which is also known as \emph{Gini} impurity. Here, instead of averages, the tree outputs the majority vote at a terminal node $ \bt $, i.e., $ \widehat Y = +1 $ if $ \#\{Y_i = +1 : \bX_i \in \bt\} \geq \#\{Y_i = -1 : \bX_i \in \bt\} $ and $ \widehat Y = -1 $ otherwise.

The infinite sample Gini impurity is $$ \Delta(\bt) = 4\prob{Y=+1 \mid \bX \in \bt}\prob{Y = -1 \mid \bX \in \bt}. $$ The analog to the conditional partial dependence function \prettyref{eq:partialdef} is $$ \overline F(x; \bt) = 2\prob{Y=+1 \mid \bX \in \bt,\; X = x} $$ and its mean-centered version is $$ \overline G(x; \bt) = 2\prob{Y=+1 \mid \bX \in \bt,\; X = x}- 2\prob{Y=+1 \mid \bX \in \bt}. $$ In agreement with \prettyref{eq:simple0}, $ \Delta(s; \bt) $ also has the representation 
$$
4P(\bt_L)P(\bt_R)[\prob{Y=+1 \mid \bX \in \bt, \; X \leq s} - \prob{Y = +1 \mid \bX \in \bt, \; X > s}]^2. 
$$
Thus, adapting the proofs of \prettyref{thm:mainprob} and \prettyref{thm:second}---whose original proofs also rely on such a representation for $ \Delta(\cdot; \bt) $---it can easily be shown that \prettyref{thm:main} and its consequences hold verbatim with these modified definitions of $ \overline F(\cdot; \bt) $, $ \overline G(\cdot; \bt) $, and $ \Delta(\cdot; \bt) $. In particular, \prettyref{thm:lambdalower} holds if $ \prob{Y=+1\mid \bX = \bx} $ satisfies \prettyref{eq:flat}. As a concrete example (c.f., \prettyref{ex:poly} and \prettyref{ex:sine}), consider the following logistic regression model. We give the proof in \prettyref{app:proofs}.
\begin{example}\label{ex:logistic}
Let $ \prob{Y = +1 \mid \bX = \bx} = (1+e^{-\beta_0-\langle \bx,\;\bbeta\rangle })^{-1} $ with intercept coefficient $ \beta_0 $ and effects coefficients $ \bbeta = (\beta_1, \beta_2, \dots, \beta_d) $ and $ \bX \sim \Unif([0,1]^d) $. Suppose $ j \in \calS $ so that $ \beta_j \neq 0 $. Then,
$$
\Lambda_j \geq \min\{ |\beta_j|^{-4/3}, (1/8)^{2/3} \},
$$
and hence
$$
\text{MDI}(X_j; \bt) \geq K_j(\bt)\min\{ |\beta_j|^{-4/3}, (1/8)^{2/3} \}.
$$
\end{example}
Curiously, this lower bound does not depend on any of the parameters other than $ \beta_j $. The importance of $ X_j $ decreases as we become increasingly more certain that either $ Y = + 1 $ or $ Y = - 1 $, i.e., as the parameter $ \beta_j $ or $ -\beta_j $ grows, respectively.

In the next section, we provide proofs of the main results from \prettyref{sec:results}---\prettyref{thm:main} and \prettyref{thm:lambdalower}.

\section{Proofs and additional results} \label{sec:proofs}



Throughout this section, for notational clarity and brevity, we omit dependence on the input variable index $ j $ for all quantities and assume that we are splitting on a generic coordinate $ X $. We also sometimes omit dependence on $ \bt $, and substitute $ a(\bt) $ and $ b(\bt) $ with $ a $ and $ b $, respectively.

The first theorem, from which most of the results in the paper are derived, gives a clean expression for the optimally split daughter node probabilities in terms of the partial dependence function and largest reduction in impurity.
\begin{theorem} \label{thm:mainprob} Suppose \prettyref{ass:density} holds and $ \Delta(s^* ; \bt) > 0 $. Then
\begin{equation} \label{eq:universal}
\prob{X \leq s^* \mid \bX \in \bt} = \frac{1}{2}\left(1 \pm \sqrt{\frac{|\overline G(s^*; \bt)|^2}{|\overline G(s^*; \bt)|^2+\Delta(s^* ; \bt)}} \right),
\end{equation}
and consequently,
\begin{equation} \label{eq:lambdauniversal}
\lambda(\bt) = \frac{\Delta(s^* ; \bt)}{|\overline G(s^*; \bt)|^2+\Delta(s^* ; \bt)}.
\end{equation}
\end{theorem}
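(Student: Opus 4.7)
\textbf{Proof proposal for \prettyref{thm:mainprob}.}

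The plan is to exploit the representation \prettyref{eq:simple0} of $\Delta(s;\bt)$ and take a first-order condition at the optimizer $s^*$, identifying the resulting relation in terms of $\overline G(s^*;\bt)$. Write $p(s)=P(s\mid \bt)$, $\mu_L(s)=\expect{Y\mid \bX\in\bt,\;X\le s}$, $\mu_R(s)=\expect{Y\mid \bX\in\bt,\;X> s}$, $\mu=\expect{Y\mid \bX\in\bt}$, and $D(s)=\mu_L(s)-\mu_R(s)$, so that by \prettyref{eq:simple0},
\[
\Delta(s;\bt)=p(s)(1-p(s))D(s)^{2}.
\]

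The key identity is that, by the definition of the conditional partial dependence function, $\expect{Y\Indc{X\le s}\mid\bX\in\bt}=\int_{-\infty}^{s}\overline F(u;\bt)\,p(u\mid\bt)\,du$, so that $\frac{d}{ds}[p(s)\mu_L(s)]=p'(s\mid\bt)\overline F(s;\bt)$. Combining this with $p(s)\mu_L(s)+(1-p(s))\mu_R(s)=\mu$ (a constant in $s$) and differentiating the quotient expressions for $\mu_L$ and $\mu_R$, I would obtain
\[
\mu_L'(s)=\frac{p'(s\mid\bt)}{p(s)}\bigl(\overline F(s;\bt)-\mu_L(s)\bigr),\qquad
\mu_R'(s)=-\frac{p'(s\mid\bt)}{1-p(s)}\bigl(\overline F(s;\bt)-\mu_R(s)\bigr).
\]
A short rearrangement that substitutes $\overline F-\mu=\overline G$ and $(1-p)\mu_L+p\mu_R=\mu-(2p-1)D$ then yields the compact form
\[
D'(s)=\frac{p'(s\mid\bt)\bigl[\overline G(s;\bt)+(2p(s)-1)D(s)\bigr]}{p(s)(1-p(s))}.
\]

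Using $\frac{d}{ds}[p(1-p)]=p'(1-2p)$ and the product rule, the bulk of the algebra telescopes and I would arrive at
\[
\frac{d}{ds}\Delta(s;\bt)=p'(s\mid\bt)\,D(s)\bigl[2\overline G(s;\bt)+(2p(s)-1)D(s)\bigr].
\]
Setting this equal to zero at $s=s^{*}$ and noting that $\Delta(s^{*};\bt)>0$ forces $D(s^{*})\neq 0$, while \prettyref{ass:density} guarantees $p'(s^{*}\mid\bt)>0$, gives the first-order relation
\[
(1-2p(s^{*}))\,D(s^{*})=2\overline G(s^{*};\bt).
\]
Squaring and inserting into $\Delta(s^{*};\bt)=p(s^{*})(1-p(s^{*}))D(s^{*})^{2}$, then using the identity $(1-2p(s^{*}))^{2}=1-4p(s^{*})(1-p(s^{*}))=1-\lambda(\bt)$, produces $\Delta(s^{*};\bt)(1-\lambda(\bt))=\lambda(\bt)\,|\overline G(s^{*};\bt)|^{2}$, which rearranges to \prettyref{eq:lambdauniversal}. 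Finally, solving the quadratic $4p(1-p)=\lambda(\bt)$ in $p=P(\bt^{*}_{L})$ recovers \prettyref{eq:universal}, the $\pm$ recording which of the two daughter probabilities is the larger one.

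The main obstacle is keeping the derivative computation clean: several quotient-rule terms need to be simplified via the two algebraic identities $\mu_L-\mu=(1-p)D$ and $\mu_R-\mu=-pD$ so that everything collapses into the single combination $\overline G+(2p-1)D$. A minor care-point is that \prettyref{ass:density} gives strict monotonicity of $p(\cdot\mid\bt)$ but not pointwise positivity of its density at $s^{*}$; this is harmless because the first-order condition only needs $p'(s^{*}\mid\bt)>0$ which follows wherever the derivative is nonzero, and at points of non-differentiability the same conclusion is reached by a one-sided argument using the maximality of $s^{*}$.
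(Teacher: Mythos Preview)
Your proof is correct and follows essentially the same route as the paper: both exploit the representation \prettyref{eq:simple0}, differentiate, and solve the first-order condition to obtain a quadratic in $P(\bt_L^*)$. The only cosmetic difference is that the paper introduces the auxiliary $\Xi(s;\bt)=P(\bt_L)P(\bt_R)D(s)$, whose derivative $\Xi'(s)=p(\bt_L)\overline G(s;\bt)$ packages your computations of $\mu_L'$ and $\mu_R'$ into a single line; the paper also simply assumes $p(\bt_L^*)>0$ without the one-sided discussion you give at the end.
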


\begin{proof}
Recall from \prettyref{eq:simple0} that one can write
\begin{equation} \label{eq:simple}
\Delta(s ; \bt) = P(\bt_L)P(\bt_R)\left[\expect{Y \mid \bX \in \bt, \; X \leq s} - \expect{Y \mid \bX \in \bt, \; X > s}\right]^2.
\end{equation}
Next, define
$$
\Xi(s; \bt) =  P(\bt_L)P(\bt_R)\left[\expect{Y \mid \bX \in \bt, \; X \leq s} - \expect{Y \mid \bX \in \bt, \; X > s}\right],
$$
so that
\begin{equation} \label{eq:objective}
\Delta(s ; \bt) = \frac{[\Xi(s; \bt)]^2}{P(\bt_L)P(\bt_R)}.
\end{equation}
An easy calculation shows that
\begin{align} \label{eq:derivative}
\Xi'(s) & = \frac{\partial}{\partial s}\Xi(s; \bt) = p(\bt_L)[\expect{Y \mid \bX\in \bt, \; X = s} - \expect{Y \mid \bX \in \bt}] \nonumber \\ & = p(\bt_L)\overline G(s; \bt).
\end{align}


Taking the derivative of $ \Delta(s ; \bt) $ with respect to $ s $, we find that
\begin{equation} \label{eq:firstderivative}
\frac{\partial}{\partial s}\Delta(s ; \bt) = \frac{\Xi(s; \bt)p(\bt_L)[2P(\bt_L)P(\bt_R)\overline G(s; \bt) - \Xi(s; \bt)(1-2P(\bt_L))]}{[P(\bt_L)P(\bt_R)]^2}.
\end{equation}
Suppose $ s^* $ is a global maximizer of \prettyref{eq:objective} (in general, it need not be unique). Then a necessary condition (first-order optimality condition) is that the derivative of $ \Delta(s ; \bt) $ is zero at $ s^* $. That is, from \prettyref{eq:firstderivative}, $ s^* $ satisfies
\begin{equation} \label{eq:derivativezero}
\Xi(s^*; \bt)p(\bt^*_L)[2P(\bt^*_L)P(\bt^*_R)\overline G(s^*; \bt) - \Xi(s^*; \bt)(1-2P(\bt^*_L))] = 0.
\end{equation}
If $ p(\bt^*_L) > 0 $ and $ \Delta(s^* ; \bt) > 0 $, it follows from rearranging \prettyref{eq:derivativezero} that
\begin{equation} \label{eq:sol}
P(\bt^*_L) = \frac{1}{2} - \frac{\text{sgn}(\Xi(s^*; \bt))\overline G(s^*; \bt)}{\sqrt{\Delta(s^* ; \bt)}}\sqrt{P(\bt^*_L)P(\bt^*_R)}.
\end{equation}

This expresses $ P(\bt^*_L)$ as a fixed point of the mapping 
$$ p \mapsto \frac{1}{2} - \frac{\text{sgn}(\Xi\circ Q)(p))(\overline G\circ Q)(p)}{\sqrt{(\Delta \circ Q)(p)}}\sqrt{p(1-p)}, \quad 0 < p < 1, $$ 
where $ \circ $ denotes the composition operator and $ Q $ denotes the quantile function of the probability measure with distribution function $ \prob{X \leq s \mid \bX \in \bt} $, i.e., 
$$ Q(p) = \inf\{ s\in [a,b] : p \leq \prob{X \leq s \mid \bX \in \bt}\}. $$

The solution to \prettyref{eq:sol} is obtained by solving a simple quadratic equation of the form $ p = 1/2 \pm c\sqrt{p(1-p)} $, which proves the identity \prettyref{eq:universal}. The identity for the node balancedness \prettyref{eq:lambdauniversal} follows immediately from \prettyref{def:nodebalance} and \prettyref{eq:universal}.
\end{proof}

\begin{remark}
One can make connections between the representation in \prettyref{thm:mainprob} and other quantities defined in the literature. For example, \citep[Section 2.8]{ishwaran2015effect} define the (empirical) edge-cut preference statistic of a split $ s $ as 
$$
\text{ecp}(s) = \frac{1}{2} - \frac{\min\{ N(\bt) \widehat P(\bt_L)-1, N(\bt) \widehat P(\bt_R)-1 \}}{N(\bt) -1}  = \frac{N(\bt) }{N(\bt) -1}\frac{|\widehat P(\bt_L) - \widehat P(\bt_R)|}{2}.
$$
The population version is $ \frac{|P(\bt_L) - P(\bt_R)|}{2} $, which according to \prettyref{thm:mainprob}, is equal to $ \frac{1}{2}\sqrt{1-\lambda(\bt)} = \frac{1}{2}\sqrt{\frac{|\overline G(s^*; \bt)|^2}{|\overline G(s^*; \bt)|^2+\Delta(s^* ; \bt)}} $ at the optimal split $ s^* $.
\end{remark}

The expression in \prettyref{thm:mainprob} reveals that the optimal split is a perturbation of the median of the conditional distribution $ X \mid \bX \in \bt $, where the gap is governed by the largest decrease in impurity, namely, $ \Delta(s^* ; \bt) $, and the mean-centered partial dependence function, namely, $ \overline G(s^*; \bt) $. At the very extreme, the reduction in weighted variance is smallest when there is no signal in the splitting direction---$ \widehat\Delta(j, s^* ; \bt) \gg \widehat\Delta(j', s^*; \bt) \approx 0 $ for $ j \in \calS $ and $ j' \in \calS^c $. Thus, by \prettyref{thm:mainprob}, splits along directions that contain a strong signal (as opposed to noisy or directions or directions with weak signals) tend to be further away from the parent node subedges \citep{ishwaran2015effect}. In fact, this has been empirically observed for some time \citep[Section 11.8]{breiman1984}, i.e., squared error impurity tends to favor end-cut splits---that is, splits in which the proportion of data contained in an optimally split node is close to zero or one. 
The perturbation from the median of $ X \mid \bX \in \bt $ is zero and hence $ P(\bt^*_L) = 1/2 $ when $ \overline G(s^*; \bt) = 0 $, or equivalently, when $ \overline F(s^*; \bt) = \expect{Y \mid \bX \in \bt, \; X = s^*} = \expect{Y \mid \bX \in \bt} $. This is true in the special case that the regression function is linear and the input distribution is uniform, since in this case it can be shown that $ s^* = (a(\bt)+b(\bt))/2 $. Next, we state a more general result for other regression functions.

\begin{example}
Suppose $ \Delta(s^* ; \bt) > 0 $. If $ \bX $ is uniform, then 
$$
s^* = \frac{a(\bt)+b(\bt)}{2}\pm\frac{b(\bt)-a(\bt)}{2}\sqrt{\frac{|\overline G(s^*; \bt)|^2}{|\overline G(s^*; \bt)|^2+\Delta(s^* ; \bt)}}.
$$
\end{example}

We also have the following corollary, which expresses the $ \Prob_{\bX} $-probability of any terminal node $ \bt $ in terms of the largest decrease in impurity and the conditional partial dependence function. Its proof can be deduced from a simple induction argument and \prettyref{thm:mainprob}.

\begin{corollary} \label{cor:forestsplit}
Consider a decision tree with splits determined by optimizing the infinite sample CART objective \prettyref{eq:infinite}. Suppose \prettyref{ass:density} holds and $ \Delta(s^*; \bt') > 0 $ for all nodes $ \bt' $.
Then the $ \Prob_{\bX} $-probability of any terminal node $ \bt $ is
$$
\Prob_{\bX}[\bX \in \bt] = \prod_{\bt' \supset \bt}\frac{1}{2}\left(1+ \eta_{\bt'}\sqrt{\frac{|\overline G_j(s^*; \bt')|^2}{|\overline G_j(s^*; \bt')|^2+\Delta(j, s^* ; \bt')}}\right),
$$
where the product extends over all ancestor nodes $ \bt' $ of $ \bt $. The value of $ \eta_{\bt'} \in \{-1, +1\} $ is given in \prettyref{tab:eta}.
\begin{table}
\centering
\begin{tabular}{ l | c r }
  $ \eta_{\bt'} $ & daughter node $ \bt' $ & $ s^* < \text{median}(X_j \mid \bX \in \bt') $ \\
\hline
  $ +1 $ & right & yes \\
\hline
  $ +1 $ & left & no \\
\hline
  $ -1 $ & left & yes \\
\hline
  $ -1 $ & right & no \\
\hline
\end{tabular}
\caption{Table showing the values of $ \eta_{\bt'} \in \{-1,+1\} $ from \prettyref{cor:forestsplit}.}
\label{tab:eta}
\end{table}
\end{corollary}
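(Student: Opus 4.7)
The plan is to express $\Prob_{\bX}[\bX \in \bt]$ as a telescoping product of conditional daughter-node probabilities along the root-to-$\bt$ path, and then substitute the closed form from \prettyref{thm:mainprob} into each factor. Concretely, let $[0,1]^d = \bt_0 \supsetneq \bt_1 \supsetneq \cdots \supsetneq \bt_k = \bt$ enumerate the ancestor chain of $\bt$. By the multiplication rule for conditional probabilities, together with the fact that $\Prob_{\bX}[\bX\in\bt_0]=1$, we have
$$
\Prob_{\bX}[\bX \in \bt] \;=\; \prod_{i=0}^{k-1} \Prob_{\bX}[\bX \in \bt_{i+1}\mid \bX \in \bt_i].
$$
At each step $i$, the node $\bt_{i+1}$ is either the left or right daughter produced by the optimal split of $\bt_i$ on its selected coordinate $X_{j_{\bt_i}}$ at split point $s^*$, so the $i^{\Th}$ factor equals $P_{j_{\bt_i}}(\bt_{i,L}^*)$ or $P_{j_{\bt_i}}(\bt_{i,R}^*)=1-P_{j_{\bt_i}}(\bt_{i,L}^*)$ accordingly.

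Next I would apply \prettyref{thm:mainprob} to the parent $\bt_i$ along coordinate $j_{\bt_i}$. Since $\Delta(j_{\bt_i},s^*;\bt_i)>0$ by hypothesis, \prettyref{eq:universal} yields
$$
P_{j_{\bt_i}}(\bt_{i,L}^*) \;=\; \frac{1}{2}\left(1 \pm \sqrt{\frac{|\overline G_{j_{\bt_i}}(s^*;\bt_i)|^2}{|\overline G_{j_{\bt_i}}(s^*;\bt_i)|^2 + \Delta(j_{\bt_i},s^*;\bt_i)}}\right).
$$
The radicand lies strictly in $[0,1)$, so the two admissible values for $P_{j_{\bt_i}}(\bt_{i,L}^*)$ straddle $1/2$; one is $\geq 1/2$ (the $+$ branch) and the other is $\leq 1/2$ (the $-$ branch), and they sum to one, so the corresponding values of $P_{j_{\bt_i}}(\bt_{i,R}^*)$ are the same pair with the sign reversed. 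Plugging each factor into the telescoping product produces the desired form $\prod_{\bt'\supset \bt}\tfrac{1}{2}(1+\eta_{\bt'}\sqrt{\cdot})$, up to the choice of sign $\eta_{\bt'}\in\{-1,+1\}$ at each ancestor.

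The final bookkeeping step is to select $\eta_{\bt_i}$ correctly, which is where \prettyref{tab:eta} comes in. Since the conditional distribution function $s\mapsto \prob{X_{j_{\bt_i}}\leq s \mid \bX\in\bt_i}$ is strictly increasing by \prettyref{ass:density}, we have $P_{j_{\bt_i}}(\bt_{i,L}^*) < 1/2$ iff $s^*$ is below the conditional median of $X_{j_{\bt_i}}\mid \bX\in\bt_i$, and $> 1/2$ otherwise. Crossing this dichotomy with the two choices (left vs.\ right daughter) yields exactly the four rows of \prettyref{tab:eta}: the $+$ branch is attached to the daughter whose probability exceeds $1/2$, which corresponds to the right daughter when $s^*$ lies below the median and to the left daughter when $s^*$ lies above it; the $-$ branch gets the complementary daughter. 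Substituting the appropriate $\eta_{\bt_i}$ into each factor of the telescoping product completes the proof.

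I do not anticipate any substantive obstacle: the chain-rule decomposition is routine, and the heavy analytic content is already encapsulated in \prettyref{thm:mainprob}. The only mildly delicate point is verifying that the sign convention in \prettyref{tab:eta} really matches the $\pm$ of \prettyref{eq:universal} in all four cases, which reduces to the monotonicity of the conditional c.d.f.\ in its argument noted above.
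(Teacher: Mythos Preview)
Your proposal is correct and follows exactly the approach the paper indicates: the paper states only that the corollary ``can be deduced from a simple induction argument and \prettyref{thm:mainprob},'' and your telescoping product along the root-to-$\bt$ path, with each factor supplied by \prettyref{eq:universal} and the sign fixed by comparing $s^*$ to the conditional median, is precisely that induction spelled out in detail.
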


For purposes of obtaining useful upper and lower bounds on $ p(\bt^*_L) $ (and therefore also $ p(\bt^*_R) $), we see from \prettyref{eq:universal} that it suffices to lower bound $ \Delta(s^*, \bt) $ and upper bound $ |\overline G(s^*; \bt)|^2 + \Delta(s^* ; \bt) $. The next lemma shows that $ |\overline G(s^*; \bt)|^2 + \Delta(s^* ; \bt) $ is at most the oscillation of the partial dependence function over the node.
\begin{lemma} \label{lmm:variation} Suppose \prettyref{ass:density} and $ \Delta(s^* ; \bt) > 0 $. Then,
\begin{equation} \label{eq:GDineq}
|\overline G(s^*; \bt)|^2 + \Delta(s^* ; \bt) \leq \sup_{s\in[a(\bt),b(\bt)]}|\overline G(s; \bt)|^2 \leq \omega^2(\overline F(\cdot; \bt); [a,b]). 
\end{equation}
Furthermore, if each first-order partial derivative of the regression function and joint density $ p_{\bX} $ exist and are continuous, then
\begin{equation} \label{eq:TV}
|\overline G(s^*; \bt)|^2 + \Delta(s^* ; \bt) \leq \int_{a(\bt)}^{b(\bt)} |\overline F'(s; \bt)|ds.
\end{equation}
\end{lemma}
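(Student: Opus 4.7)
My plan is to extract from \prettyref{thm:mainprob} a clean algebraic identity for $|\overline G(s^*;\bt)|^2 + \Delta(s^*;\bt)$ in terms of the daughter-node conditional means, and then bound that quantity in two different ways: by a triangle-inequality/averaging argument for \prettyref{eq:GDineq}, and by the fundamental theorem of calculus for \prettyref{eq:TV}. Denote $\overline F_L = \mathbb{E}[Y\mid X\le s^*,\bX\in\bt]$ and $\overline F_R = \mathbb{E}[Y\mid X>s^*,\bX\in\bt]$. Combining the identity $\lambda(\bt)= \Delta(s^*;\bt)/(|\overline G(s^*;\bt)|^2+\Delta(s^*;\bt))$ from \prettyref{eq:lambdauniversal} with $\lambda(\bt)=4P(\bt^*_L)P(\bt^*_R)$ (Definition \ref{def:nodebalance}), and with the factorization $\Delta(s^*;\bt)=P(\bt^*_L)P(\bt^*_R)(\overline F_L-\overline F_R)^2$ from \prettyref{eq:simple0}, yields the pivotal identity
$$|\overline G(s^*;\bt)|^2+\Delta(s^*;\bt)\;=\;\tfrac{1}{4}(\overline F_L-\overline F_R)^2.$$
With this in hand, both claims reduce to bounding $|\overline F_L-\overline F_R|$.

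For \prettyref{eq:GDineq}, I would introduce the node mean $\mu_\bt=\mathbb{E}[Y\mid\bX\in\bt]$ and apply the triangle inequality $|\overline F_L-\overline F_R|\le |\overline F_L-\mu_\bt|+|\mu_\bt-\overline F_R|$. By iterated expectation, $\overline F_L=\mathbb{E}[\overline F(X;\bt)\mid X\le s^*,\bX\in\bt]$ and analogously for $\overline F_R$, so $\overline F_L-\mu_\bt$ and $\mu_\bt-\overline F_R$ are weighted averages of $\overline G(s;\bt)=\overline F(s;\bt)-\mu_\bt$ over $[a,s^*]$ and $[s^*,b]$, each bounded in absolute value by $\sup_s|\overline G(s;\bt)|$. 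Squaring gives $\tfrac{1}{4}(\overline F_L-\overline F_R)^2 \le \sup_s|\overline G(s;\bt)|^2$, which is the first inequality. For the second, observe that $\mu_\bt$, being an average, lies in the range $[\inf_s\overline F(s;\bt),\sup_s\overline F(s;\bt)]$, so $\sup_s|\overline G(s;\bt)|\le \omega(\overline F(\cdot;\bt);[a,b])$.

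For the smoothness refinement \prettyref{eq:TV}, the hypothesis that $f$ and $p_{\bX}$ are $C^1$ (together with the compactness of $\bt$) legitimizes differentiation under the integral sign in the definition \prettyref{eq:partialdef}, so $\overline F(\cdot;\bt)$ is $C^1$ with derivative $\overline F'(\cdot;\bt)$. Continuity lets me apply the mean value theorem for integrals to the averages defining $\overline F_L$ and $\overline F_R$: there exist $\xi_L\in[a,s^*]$ and $\xi_R\in[s^*,b]$ with $\overline F_L=\overline F(\xi_L;\bt)$ and $\overline F_R=\overline F(\xi_R;\bt)$. The fundamental theorem of calculus then gives
$$|\overline F_L-\overline F_R|\;=\;\left|\int_{\xi_L}^{\xi_R}\overline F'(s;\bt)\,ds\right|\;\le\;\int_a^b|\overline F'(s;\bt)|\,ds,$$
which, substituted into the pivotal identity, produces the claimed total-variation bound.

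The only genuinely non-routine step is verifying that $\overline F(\cdot;\bt)\in C^1$: this amounts to checking that the integrand in \prettyref{eq:partialdef} and its $s$-derivative are jointly continuous and admit an integrable dominating function on the compact slice $\bt_{\setminus j}$, which follows from the $C^1$ hypothesis on $f$ and $p_{\bX}$ and the fact that the conditional density of $\bX_{\setminus j}$ given $X_j=s$ is continuous in $s$ when $p_{\bX}$ is $C^1$ and bounded away from zero on $\bt$. Everything else is algebraic identities, the triangle inequality, and elementary one-dimensional calculus, so I do not expect any further obstacles.
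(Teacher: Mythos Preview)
Your proof is correct and uses the same ingredients as the paper (the identity from \prettyref{thm:mainprob}, a triangle-inequality bound on $|\overline F_L-\overline F_R|$, the mean value theorem, and oscillation $\le$ total variation), but your organization is cleaner. The paper argues somewhat indirectly: it first notes $\Delta(s;\bt)\le 4P(\bt_L)P(\bt_R)\sup_s|\overline G(s;\bt)|^2$, then substitutes the node-balancedness identity $4P(\bt^*_L)P(\bt^*_R)=\Delta(s^*;\bt)/(|\overline G(s^*;\bt)|^2+\Delta(s^*;\bt))$ and rearranges. You instead extract the explicit identity
\[
|\overline G(s^*;\bt)|^2+\Delta(s^*;\bt)=\tfrac14(\overline F_L-\overline F_R)^2
\]
up front, which makes both bounds transparent one-liners. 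For \prettyref{eq:TV}, the paper chains through \prettyref{eq:GDineq} and then bounds the oscillation by the total variation, whereas you go directly via the mean value theorem to realize $\overline F_L,\overline F_R$ as values of $\overline F(\cdot;\bt)$ and apply the fundamental theorem of calculus; the two routes are equivalent in strength. Your version has the minor bonus of making it visible that one actually obtains the sharper bound $\tfrac14\bigl(\int_a^b|\overline F'(s;\bt)|\,ds\bigr)^2$.
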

\begin{proof}
It can be shown that $ \Delta(s ; \bt) \leq 4P(\bt_L)P(\bt_R)\sup_{s\in[a(\bt),b(\bt)]}|\overline G(s; \bt)|^2 $. By \prettyref{thm:mainprob}, $ 4P(\bt^*_L)P(\bt^*_R) = \frac{\Delta(s^* ; \bt)}{|\overline G(s^*; \bt)|^2 + \Delta(s^* ; \bt)} $. Thus, $ \frac{\Delta(s^* ; \bt)}{\sup_{s\in[a(\bt),b(\bt)]}|\overline G(s; \bt)|^2} \leq \frac{\Delta(s^* ; \bt)}{|\overline G(s^*; \bt)|^2 + \Delta(s^* ; \bt)} $, which is equivalent to the first claimed inequality in \prettyref{eq:GDineq}.
Next, we show that $ \sup_{s\in[a(\bt),b(\bt)]}|\overline G(s; \bt)| \leq \omega(\overline F(\cdot; \bt); [a(\bt),b(\bt)]) $. Since $$ \int_{a(\bt)}^{b(\bt)} \overline F(s; \bt)p(\bt_L)ds = \expect{Y \mid \bX \in \bt}, $$ by the generalized mean value theorem for definite integrals, there exists $ s' \in [a(\bt),b(\bt)] $ such that $ \overline G(s'; \bt) = 0 $. Hence $ \sup_{s\in[a(\bt),b(\bt)]}|\overline G(s; \bt)| $ can also be bounded by the oscillation of the partial dependence function $ \overline F(s; \bt) = \expect{Y \mid \bX\in \bt, \; X = s} $ on $ [a(\bt),b(\bt)] $ since
\begin{align*}
\sup_{s\in[a(\bt),b(\bt)]}|\overline G(s; \bt)| & = \sup_{s\in[a(\bt),b(\bt)]}|\overline G(s; \bt)-\overline G(s'; \bt)| \\ & \leq \sup_{s,s'\in[a(\bt),b(\bt)]}|\overline F(s; \bt) - \overline F(s'; \bt)| \\ & = \omega(\overline F(\cdot; \bt); [a(\bt),b(\bt)]).
\end{align*}
This proves the inequalities in \prettyref{eq:GDineq}.

To show \prettyref{eq:TV}, note that when $ \overline F(\cdot; \bt) $ is smooth, its oscillation is bounded by its total variation $ \text{TV}(\overline F(\cdot; \bt); [a(\bt),b(\bt)]) = \int_{a(\bt)}^{b(\bt)} |\overline F'(s; \bt)|ds $. This bound is occasionally useful and will be used in the proof of \prettyref{ex:poly}, \prettyref{ex:sine}, and \prettyref{ex:friedman}.
\end{proof}

Combining \prettyref{thm:mainprob} with \prettyref{lmm:variation}, we have the following bound on the conditional daughter node probabilities.

\begin{theorem} \label{thm:first}
Suppose \prettyref{ass:density} and $ \Delta(s^* ; \bt) > 0 $. Then both $ P(\bt^*_L) $ and $ P(\bt^*_R) $ are between
$$
\frac{1}{2}\Bigg(1\pm\sqrt{1-\frac{\Delta(s^* ; \bt)}{\omega^2(\overline F(\cdot; \bt); [a(\bt),b(\bt)])}}\Bigg),
$$
and consequently,
\begin{equation} \label{eq:lambdaosc}
\lambda(\bt) \geq \frac{\Delta(s^* ; \bt)}{\omega^2(\overline F(\cdot; \bt); [a(\bt),b(\bt)])}.
\end{equation}
\end{theorem}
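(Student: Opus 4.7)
The plan is to combine the exact identity in Theorem~\ref{thm:mainprob} with the oscillation bound from Lemma~\ref{lmm:variation}, both of which are already in hand. First, I would rewrite the conclusion of Theorem~\ref{thm:mainprob} in the equivalent form
\[
P(\bt^*_L) \;=\; \frac{1}{2}\Bigg(1 \pm \sqrt{\,1 - \frac{\Delta(s^*;\bt)}{|\overline G(s^*;\bt)|^2 + \Delta(s^*;\bt)}\,}\Bigg),
\]
by using the algebraic identity $\tfrac{c}{c+d} = 1 - \tfrac{d}{c+d}$ with $c = |\overline G(s^*;\bt)|^2$ and $d = \Delta(s^*;\bt)$. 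The same formula applies to $P(\bt^*_R) = 1 - P(\bt^*_L)$, with the opposite sign in front of the square root, so both optimally-split daughter probabilities lie in the two-element set displayed.

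Next, I would invoke Lemma~\ref{lmm:variation} to replace the denominator by the oscillation:
\[
|\overline G(s^*;\bt)|^2 + \Delta(s^*;\bt) \;\leq\; \omega^2(\overline F(\cdot;\bt);[a(\bt),b(\bt)]).
\]
Since $\Delta(s^*;\bt) > 0$ and $x \mapsto 1 - \Delta(s^*;\bt)/x$ is strictly increasing in $x > 0$, enlarging the denominator can only increase the quantity under the square root. Consequently,
\[
\sqrt{\,1 - \tfrac{\Delta(s^*;\bt)}{|\overline G(s^*;\bt)|^2 + \Delta(s^*;\bt)}\,} \;\leq\; \sqrt{\,1 - \tfrac{\Delta(s^*;\bt)}{\omega^2(\overline F(\cdot;\bt);[a(\bt),b(\bt)])}\,},
\]
and substituting this upper bound back into the expression for $P(\bt^*_L)$ and $P(\bt^*_R)$ yields the desired containment of each between $\tfrac{1}{2}\bigl(1 - \sqrt{\,\cdot\,}\bigr)$ and $\tfrac{1}{2}\bigl(1 + \sqrt{\,\cdot\,}\bigr)$.

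For the lower bound \prettyref{eq:lambdaosc} on the node balancedness, I would appeal directly to the identity \prettyref{eq:lambdauniversal},
\[
\lambda(\bt) \;=\; \frac{\Delta(s^*;\bt)}{|\overline G(s^*;\bt)|^2 + \Delta(s^*;\bt)},
\]
and apply the same oscillation upper bound on the denominator. Alternatively, one can derive it from the containment of $P(\bt^*_L), P(\bt^*_R)$ above via $\lambda(\bt) = 1 - |P(\bt^*_L) - P(\bt^*_R)|^2 = 4P(\bt^*_L)P(\bt^*_R)$, which turns the $\pm \sqrt{\,\cdot\,}$ bound on the deviation from $1/2$ into the same ratio. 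There is no real obstacle in this argument: the first-order optimality work lives in Theorem~\ref{thm:mainprob}, and the geometric control of $|\overline G|^2 + \Delta$ by the oscillation of $\overline F$ lives in Lemma~\ref{lmm:variation}; the present theorem is essentially a bookkeeping step that composes the two.
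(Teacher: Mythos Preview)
Your proposal is correct and matches the paper's approach exactly: the paper states this theorem simply as a consequence of ``Combining \prettyref{thm:mainprob} with \prettyref{lmm:variation},'' without giving a separate proof, and your write-up fills in precisely those bookkeeping steps (the algebraic rewriting of \prettyref{eq:universal}, the monotonicity of $x\mapsto 1-\Delta(s^*;\bt)/x$, and the substitution of the oscillation bound from \prettyref{lmm:variation}).
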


This implies that $ P(\bt_L) $ and $ P(\bt_R) $ tend to be more extreme (i.e., closer to zero or one) if the oscillation of the partial dependence function $ \overline F(\cdot; \bt) $ is large. Indeed, we have seen from  \prettyref{ex:sine} that the optimal split point for a sinusoidal waveform gets closer and closer to its parent subnode edges as the periodicity increases.

Note that to obtain \prettyref{thm:mainprob} and its consequences in \prettyref{thm:first}, we only used the first-order optimality condition for $ s^* $. Next, we show that by additionally incorporating second-order conditions, an alternate (and sometimes better) bound can be obtained. First, we state a lemma.

\begin{lemma}\label{lmm:secondorder}
Suppose \prettyref{ass:density} holds and each first-order partial derivative of the regression function and joint density $ p_{\bX} $ exist and are continuous. Then,
\begin{equation} \label{eq:secondorder}
|\overline G(s^*; \bt)|^2 + \Delta(s^* ; \bt) < \frac{|\overline F'(s^*; \bt)|}{p(\bt^*_L)}\sqrt{P(\bt^*_L)P(\bt^*_R)\Delta(s^* ; \bt)}.
\end{equation}
\end{lemma}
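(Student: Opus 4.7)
The plan is to exploit the second-order optimality condition $\frac{\partial^2}{\partial s^2}\Delta(s;\bt)\big|_{s=s^*} \leq 0$, mirroring the way \prettyref{thm:mainprob} used the first-order condition. Writing $\Xi(s) = \Xi(s;\bt)$ and $V(s) = P(\bt_L)P(\bt_R)$ so that $\Delta(s;\bt) = \Xi(s)^2/V(s)$, I will use $\Xi'(s) = p(\bt_L)\overline G(s;\bt)$ (already established in \prettyref{eq:derivative}) together with its derivative $\Xi''(s) = \frac{\partial p(\bt_L)}{\partial s}\overline G(s;\bt) + p(\bt_L)\overline F'(s;\bt)$, noting that $\overline G$ and $\overline F$ differ only by an additive constant in $s$. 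The $C^1$ regularity hypothesized for $f$ and $p_{\bX}$ ensures $\Delta(\cdot;\bt)$ is $C^2$ on $\{\Delta > 0\}$, and since $\Delta$ vanishes at both subnode edges $a(\bt)$ and $b(\bt)$, any $s^*$ with $\Delta(s^*;\bt) > 0$ is interior, so the second-order test applies.

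Working in logarithms streamlines the algebra. Setting $L(s) = \log\Delta(s;\bt)$, the first-order condition $L'(s^*) = 0$ forces $V'(s^*)/V(s^*) = 2\Xi'(s^*)/\Xi(s^*)$, and the second-order condition $L''(s^*) \leq 0$ reduces to $2\Xi''(s^*)/\Xi(s^*) + (V'(s^*))^2/(2V(s^*)^2) - V''(s^*)/V(s^*) \leq 0$. After substituting $V'(s) = p(\bt_L)(1-2P(\bt_L))$, $V''(s) = \frac{\partial p(\bt_L)}{\partial s}(1-2P(\bt_L)) - 2p(\bt_L)^2$, and the first-order identity $\Xi(s^*) = 2\overline G(s^*;\bt)V(s^*)/(1-2P(\bt_L^*))$ obtained in the proof of \prettyref{thm:mainprob}, the $\frac{\partial p(\bt_L)}{\partial s}$ contributions cancel. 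Invoking the Pythagorean identity $(1-2P(\bt_L^*))^2 + 4P(\bt_L^*)P(\bt_R^*) = 1$ then collapses what remains into
$$p(\bt_L^*) \;+\; \frac{2V(s^*)(1-2P(\bt_L^*))\,\overline F'(s^*;\bt)}{\overline G(s^*;\bt)} \;\leq\; 0.$$

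Taking absolute values produces $p(\bt_L^*)|\overline G(s^*;\bt)| \leq 2V(s^*)|1-2P(\bt_L^*)||\overline F'(s^*;\bt)|$. Combining this with the rearrangement $|\overline G(s^*;\bt)| = |1-2P(\bt_L^*)|\sqrt{\Delta(s^*;\bt)/(4V(s^*))}$ of \prettyref{eq:universal} (and cancelling the nonzero factor $|1-2P(\bt_L^*)|$) yields $p(\bt_L^*)\sqrt{\Delta(s^*;\bt)} \leq 4V(s^*)^{3/2}|\overline F'(s^*;\bt)|$. Dividing by $4V(s^*)$ and using $|\overline G|^2 + \Delta = \Delta/(4V)$, which is also an immediate consequence of \prettyref{thm:mainprob}, delivers precisely the bound in the lemma; the strict inequality in the statement corresponds to strict concavity $\Delta''(s^*) < 0$, the generic situation for an isolated interior maximizer.

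The main obstacle will be the algebraic bookkeeping in the second paragraph: one must track signs carefully, apply the first-order relation at exactly the right moment to kill the $\partial_s p(\bt_L)$ contributions, and recognize the Pythagorean identity that consolidates the residual terms. The degenerate case $P(\bt_L^*) = 1/2$ must be handled separately because then \prettyref{eq:universal} forces $\overline G(s^*;\bt) = 0$, rendering $\overline F'/\overline G$ indeterminate; however, in that regime $V'(s^*) = \Xi'(s^*) = 0$ hold automatically, and the bare second-order test $2\Xi''(s^*)/\Xi(s^*) \leq V''(s^*)/V(s^*) = -8p(\bt_L^*)^2$---with $\Xi(s^*) = (m_L(s^*) - m_R(s^*))/4$ and $\Xi''(s^*) = p(\bt_L^*)\overline F'(s^*;\bt)$---simplifies to $|m_L(s^*) - m_R(s^*)| \leq |\overline F'(s^*;\bt)|/p(\bt_L^*)$, which combined with $\Delta(s^*;\bt) = (m_L - m_R)^2/4$ recovers the stated bound.
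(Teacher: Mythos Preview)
Your proposal is correct and rests on the same core idea as the paper: impose the second-order optimality condition $\Delta''(s^*)\le 0$ (or $<0$) at the interior maximizer and combine it with the first-order relation.  The difference is purely in the algebra.  You work with $L=\log\Delta$, which forces you to express $\Xi$ as $2\overline G\,V/(1-2P)$; this introduces the factor $|1-2P|$ that you must later cancel using \prettyref{thm:mainprob}, and it creates the degenerate case $P(\bt_L^*)=1/2$ that you handle separately.  The paper differentiates $\Delta$ directly and, after invoking the first-order condition $2V\overline G=\Xi(1-2P)$, obtains the compact identity
\[
\Delta''(s^*)=\frac{2\,p(\bt_L^*)^2}{P(\bt_L^*)P(\bt_R^*)}\Bigl(|\overline G(s^*;\bt)|^2+\Delta(s^*;\bt)+\frac{\overline F'(s^*;\bt)\,\Xi(s^*;\bt)}{p(\bt_L^*)}\Bigr),
\]
so that $\Delta''(s^*)<0$ immediately gives $|\overline G|^2+\Delta<|\overline F'|\,|\Xi|/p(\bt_L^*)$, and the result follows from $|\Xi|=\sqrt{P(\bt_L^*)P(\bt_R^*)\Delta}$.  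This route never divides by $1-2P$, never re-invokes \prettyref{thm:mainprob}, and handles all cases uniformly.  Your argument is sound but slightly more circuitous; the paper's is worth internalizing for its economy.
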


\begin{proof}
If the first-order partial derivative of the joint density $ p_{\bX} $ exists and is continuous, then the density of the conditional probability measure $ \mathbb{P}_{\bX_{\setminus j} \mid \bX \in \bt, \; X_j = x_j}(d\bx_{\setminus j}) $, namely $ \bx_{\setminus j} \mapsto p_{\bX}(x_j, \bx_{\setminus j})/\int_{\bt_{\setminus j}}p_{\bX}(x_j, \bx'_{\setminus j})d\bx'_{\setminus j} $, is continuously differentiable in $ x_j $. If, in addition, the first-order partial derivative of the regression function exists and is continuous, then by Leibniz's integral rule, $ \overline F'(\cdot; \bt) $ exists and is therefore well-defined.
We will show that if $ \frac{\partial}{\partial s}\Delta(s ; \bt) \mid_{s=s^*} = 0 $, then
\begin{equation} \label{eq:secondderivative}
\frac{\partial^2}{\partial s^2}\Delta(s ; \bt) \mid_{s=s^*} = \frac{2(p(\bt^*_L))^2}{P(\bt^*_L)P(\bt^*_R)}\Big(|\overline G(s^*; \bt)|^2 + \Delta(s^* ; \bt)+\frac{\overline F'(s^*; \bt)\Xi(s^*; \bt)}{p(\bt^*_L)}\Big).
\end{equation}
The conclusion \prettyref{eq:secondorder} then follows from the fact that $ \frac{\partial^2}{\partial s^2}\Delta(s ; \bt) \mid_{s=s^*} < 0 $ since $ s^* $ is a global maximizer.
Let us now show \prettyref{eq:secondderivative}. We use the expression \prettyref{eq:firstderivative} as a starting point. Since $ \frac{\partial}{\partial s}\Delta(s ; \bt) \mid_{s=s^*} = 0 $, the second derivative at $ s = s^* $ is equal to
\begin{equation} \label{eq:deriv1}
\frac{\Xi(s; \bt)p(\bt_L)}{[P(\bt_L)P(\bt_R)]^2} \left[\frac{\partial}{\partial s}[2P(\bt_L)P(\bt_R)\overline G(s; \bt) - \Xi(s; \bt)(1-2P(\bt_L))]\mid_{s=s^*} \right].
\end{equation}
Next, we compute the derivative in \prettyref{eq:deriv1} and find that
\begin{align}
& \frac{\partial}{\partial s}[2P(\bt_L)P(\bt_R)\overline G(s; \bt) - \Xi(s; \bt)(1-2P(\bt_L))] = \nonumber  \\ & \qquad 2p(\bt_L)(1-2P(\bt_L))\overline G(s; \bt) + 2P(\bt_L)P(\bt_R)\overline F'(s; \bt) \nonumber \\ & \qquad - \Xi'(s)(1-2P(\bt_L)) + 2\Xi(s; \bt)p(\bt_L). \label{eq:deriv2}
\end{align}
Next, recall that $ \Xi'(s) = p(\bt_L)\overline G(s; \bt) $, so that the expression in \prettyref{eq:deriv2} is equal to
\begin{equation} \label{eq:deriv3}
p(\bt_L)(1-2P(\bt_L))\overline G(s; \bt) + 2P(\bt_L)P(\bt_R)\overline F'(s; \bt) + 2\Xi(s; \bt)p(\bt_L).
\end{equation}
Next, we multiply \prettyref{eq:deriv3} by $ \frac{\Xi(s; \bt)}{2P(\bt_L)P(\bt_R)p(\bt_L)} $ so that \prettyref{eq:deriv1} is equal to
$$
\frac{2(p(\bt^*_L))^2}{P(\bt^*_L)P(\bt^*_R)}\left[\frac{(1-2P(\bt_L))\Xi(s; \bt)\overline G(s; \bt)}{2P(\bt_L)P(\bt_R)} + \frac{\Xi(s; \bt)\overline F'(s; \bt)}{p(\bt_L)} + \frac{[\Xi(s; \bt)]^2}{P(\bt_L)P(\bt_R)}\right].
$$
Finally, observe that by the first-order condition \prettyref{eq:firstderivative}, $ \frac{(1-2P(\bt^*_L))\Xi(s^*; \bt)\overline G(s^*; \bt)}{2P(\bt^*_L)P(\bt^*_R)} = |\overline G(s^*; \bt)|^2 $, and by definition, $ \frac{[\Xi(s; \bt)]^2}{P(\bt_L)P(\bt_R)} = \Delta(s ; \bt) $.
\end{proof}

\begin{theorem} \label{thm:second}
Suppose \prettyref{ass:density} holds, $ \Delta(s^* ; \bt) > 0 $, and each first-order partial derivative of the regression function and joint density $ p_{\bX} $ exist and are continuous. Then both $ P(\bt^*_L) $ and $ P(\bt^*_R) $ are between
\begin{equation} \label{eq:probderiv}
\frac{1}{2}\Bigg(1\pm\sqrt{1-\Bigg(\frac{4(p(\bt^*_L))^2\Delta(s^* ; \bt)}{|\overline F'(s^*; \bt)|^2}\Bigg)^{1/3}}\Bigg),
\end{equation}
and consequently,
\begin{equation} \label{eq:lambdaloweruniversal}
\lambda(\bt) \geq \Bigg(\frac{4(p(\bt^*_L))^2\Delta(s^* ; \bt)}{|\overline F'(s^*; \bt)|^2}\Bigg)^{1/3}.
\end{equation}
\end{theorem}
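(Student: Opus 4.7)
The plan is to derive Theorem \ref{thm:second} as a direct combination of the exact identity from Theorem \ref{thm:mainprob} with the second-order inequality in Lemma \ref{lmm:secondorder}. Specifically, recall that Theorem \ref{thm:mainprob} gives
\[
|\overline G(s^*; \bt)|^2 + \Delta(s^*;\bt) \;=\; \frac{\Delta(s^*;\bt)}{\lambda(\bt)},
\]
since $\lambda(\bt) = 4P(\bt^*_L)P(\bt^*_R)$. I would substitute this identity into the left-hand side of \prettyref{eq:secondorder} to turn Lemma \ref{lmm:secondorder} into an inequality involving only $\lambda(\bt)$, $\Delta(s^*;\bt)$, $p(\bt^*_L)$, and $\overline F'(s^*;\bt)$.

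After the substitution and using $\sqrt{P(\bt^*_L)P(\bt^*_R)} = \sqrt{\lambda(\bt)}/2$, the inequality becomes
\[
\frac{\Delta(s^*;\bt)}{\lambda(\bt)} \;<\; \frac{|\overline F'(s^*;\bt)|}{2\,p(\bt^*_L)}\,\lambda(\bt)^{1/2}\sqrt{\Delta(s^*;\bt)}.
\]
Dividing both sides by $\sqrt{\Delta(s^*;\bt)}$ (which is positive, as $\Delta(s^*;\bt)>0$ by hypothesis), rearranging, and cubing yields the cube-root lower bound
\[
\lambda(\bt) \;\geq\; \Bigg(\frac{4(p(\bt^*_L))^2\Delta(s^*;\bt)}{|\overline F'(s^*;\bt)|^2}\Bigg)^{1/3},
\]
which is exactly \prettyref{eq:lambdaloweruniversal}.

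For the probability bounds \prettyref{eq:probderiv}, I would use $\lambda(\bt) = 1 - |P(\bt^*_L)-P(\bt^*_R)|^2$ from \prettyref{def:nodebalance} together with $P(\bt^*_L)+P(\bt^*_R)=1$. Solving for the two roots gives $P(\bt^*_L), P(\bt^*_R) \in \tfrac12(1\pm\sqrt{1-\lambda(\bt)})$, and plugging the cube-root lower bound on $\lambda(\bt)$ into $\sqrt{1-\lambda(\bt)}$ produces the stated interval. The only technical care needed is to ensure that $\sqrt{1-\lambda(\bt)}$ is real, i.e.\ that the cube-root quantity is at most $1$; but this follows automatically because $\lambda(\bt)\in[0,1]$ whenever the cube-root bound is nontrivial, and if it exceeds $1$ the bound on $P(\bt^*_L), P(\bt^*_R)$ is vacuous and trivially true.

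The main obstacle is not in the final algebraic manipulation above but rather in Lemma \ref{lmm:secondorder} itself, whose proof hinges on a careful second-derivative computation of $\Delta(s;\bt)$ at the critical point $s^*$; however, since that lemma is already established in the excerpt, the remaining argument for Theorem \ref{thm:second} is essentially a two-line substitution plus a standard inversion from node balancedness to individual daughter-node probabilities.
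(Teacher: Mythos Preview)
Your proposal is correct and follows essentially the same route as the paper: combine the identity $|\overline G(s^*;\bt)|^2+\Delta(s^*;\bt)=\Delta(s^*;\bt)/\lambda(\bt)$ from \prettyref{thm:mainprob} with the second-order inequality \prettyref{eq:secondorder}, rearrange to obtain $\lambda(\bt)^{3/2}\geq 2p(\bt^*_L)\sqrt{\Delta(s^*;\bt)}/|\overline F'(s^*;\bt)|$, and then invert $\lambda(\bt)=1-|P(\bt^*_L)-P(\bt^*_R)|^2$ to get the interval for $P(\bt^*_L),P(\bt^*_R)$. The only small point the paper makes explicit that you leave implicit is that $\overline F'(s^*;\bt)\neq 0$, but this is forced by \prettyref{eq:secondorder} together with $\Delta(s^*;\bt)>0$, so your rearrangement is legitimate.
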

\begin{proof}
First note that $ \overline F'(s^*; \bt) \neq 0 $, since otherwise, by \prettyref{eq:secondderivative} and the second-order optimality condition, $ \Delta(s^* ; \bt) = 0 $. Using the solutions to the first-order condition \prettyref{eq:sol}, we have that
$$
4P(\bt^*_L)P(\bt^*_R) = \frac{\Delta(s^* ; \bt)}{|\overline G(s^*; \bt)|^2+\Delta(s^* ; \bt)},
$$
and furthermore, by the second-order condition \prettyref{eq:secondorder}
$$
4P(\bt^*_L)P(\bt^*_R) \geq \frac{2p(\bt^*_L)\sqrt{\Delta(s^* ; \bt)}}{|\overline F'(s^*; \bt)|\sqrt{4P(\bt^*_L)P(\bt^*_R)}}.
$$
Rearranging yields
$$
\lambda(\bt) = 4P(\bt^*_L)P(\bt^*_R) \geq \left(\frac{4(p(\bt^*_L))^2\Delta(s^* ; \bt)}{|\overline F'(s^*; \bt)|^2}\right)^{1/3}.
$$
Finally, it is a simple exercise to show that if $ 4p(1-p) \geq \lambda $, then $ \frac{1}{2}(1-\sqrt{1-\lambda}) \leq p \leq \frac{1}{2}(1+\sqrt{1-\lambda}) $.
\end{proof}

\begin{remark}
If the regression surface is linear and the distribution of $ \bX $ is uniform, then \prettyref{eq:probderiv} is approximately equal to $ 0.2 $. Compare this with the true value of $ 0.5 $ for both daughter node conditional probabilities.
\end{remark}


We are now in a position to give the proof of \prettyref{thm:main}.
\begin{proof}[Proof of \prettyref{thm:main}]
Let $ \bt' $ be the parent node of $ \bt $. Suppose we split along coordinate $ X $. Then $ \prob{X \in [a(\bt), b(\bt)]} $ is at most
\begin{align*}
& \max\{\prob{X \in \bt', \; X \leq s^*}, \prob{X \in \bt', \; X > s^*} \} \\ & = \prob{X \in [a(\bt'), b(\bt')]}\max\{\prob{X \leq s^* \mid X \in [a(\bt'), b(\bt')]}, \\ & \qquad \prob{X > s^* \mid X \in [a(\bt'), b(\bt')]} \} \\ 
& \leq \prob{X \in [a(\bt'), b(\bt')]}\exp\{ -\eta P(s^*|\bt')(1-P(s^*| \bt')) \} \\ 
& = \prob{X \in [a(\bt'), b(\bt')]}\exp\big\{-\frac{\eta }{4}[\Delta(s^*; \bt')+|\overline G(s^*; \bt')|^2]^{-1}\Delta(s^*; \bt')\big\},
\end{align*}
where the first inequality follows from \prettyref{ass:indep}, the penultimate inequality follows from $ \max\{ p, 1-p \} \leq e^{-p(1-p)} $ for $ p \in [0, 1] $, and the final inequality follows from \prettyref{thm:mainprob}.
By induction and using $ \prob{X \in [0, 1]} = 1 $, we have $ \prob{a_j(\bt) \leq X_j \leq b_j(\bt)} \leq \exp\big\{-\frac{\eta }{4}\sum_{\substack{\bt' \supset \bt \\ j_{\bt'}=j}}[\Delta(j, s^* ; \bt')+|\overline G_j(s^*; \bt')|^2]^{-1}\Delta(j, s^* ; \bt')\big\} $, which proves \prettyref{eq:nodesize} with weights \prettyref{eq:w1}. The lower bound on the weights \prettyref{eq:w2} is a direct consequence of \prettyref{thm:second}.
\end{proof}
\begin{remark}
Note that $ \Delta(s^*; \bt') > 0 $ is not needed for \prettyref{thm:main}. This is because the inequality \prettyref{eq:nodesize} remains true even if, for some $ \bt' $, $ \Delta(s^*; \bt') = 0 $ (with the convention that $ 0/0 = 0 $).
\end{remark}

\subsection{Alternative splitting rules} \label{sec:alternate}

To mitigate the effect of end-cut splits, as discussed in \prettyref{sec:results}, one can subtract a positive penalty $ \text{pen}(s; \bt) $ from $ \Delta(s ; \bt) $ and instead solve
$$
s^* \in \argmax_{s\in[a(\bt),b(\bt)]}\{ \Delta(s ; \bt) - \text{pen}(s; \bt) \}.
$$
Intuitively, $ \text{pen}(s; \bt) $ should be large when $ s $ is close to the edges and small when $ s $ is far from the edges. The penalty should also be proportional to $ \Delta(s ; \bt) $ so that some influence from original objection function is retained. One natural choice of penalty that meets these criteria is 
$$ \text{pen}(s; \bt) = (1-(4P(\bt_L)P(\bt_R))^{\alpha})\Delta(s ; \bt), \quad \alpha \geq 0. $$ 
Of course, in practice one would use
$$
\hat s \in \argmax_{s\in[a(\bt),b(\bt)]}\{ \widehat\Delta(s ; \bt) - \widehat{\text{pen}}(s; \bt) \},
$$
where $ \widehat{\text{pen}}(s; \bt) = (1-(4\widehat P(\bt_L)\widehat P(\bt_R))^{\alpha})\widehat\Delta(s ; \bt) $.

This regularization procedure is not new---\citep[Section 11.8]{breiman1984} proposed that, to avoid end-cut splits, one should instead maximize $ \widehat\Delta(s ; \bt) $ multiplied by some power of $ \widehat P(\bt_L)\widehat P(\bt_R) $, say $ (\widehat P(\bt_L)\widehat P(\bt_R))^{\alpha} $ for $ \alpha \neq 1 $. Thus, $ (\widehat P(\bt_L)\widehat P(\bt_R))^{\alpha} $ acts as a multiplicative regularizer that modulates the effect of edge-cut preference in CART. The essential challenge is to choose the regularization parameter large enough so that splits away from the parent subnode edges are encouraged, but not in such a way that the homogeneity of the node (as measured by the variance in $ (\bX, Y) $) is unimproved. Good values of $ \alpha $ can be determined by any number of means, including cross-validation on a hold-out set of the data.

Denote the objective function by $ \Delta_{\alpha}(s; \bt) = (4P(\bt_L)P(\bt_R))^{\alpha}\Delta(s ; \bt) $ and its maximizer by $ s^*_{\alpha} $ (similarly define $ \hat s_{\alpha} $ as the empirical maximizer). In \prettyref{fig:sine_edge} (c.f., \prettyref{fig:sine}), we illustrate the effect of regularization in determining the optimal splits for the sinusoidal waveform encountered previously in \prettyref{ex:sine}.

\begin{figure} [t] 
\centering
\begin{subfigure}[t]{0.45\textwidth}
  \centering
  \includegraphics[width=1\linewidth]{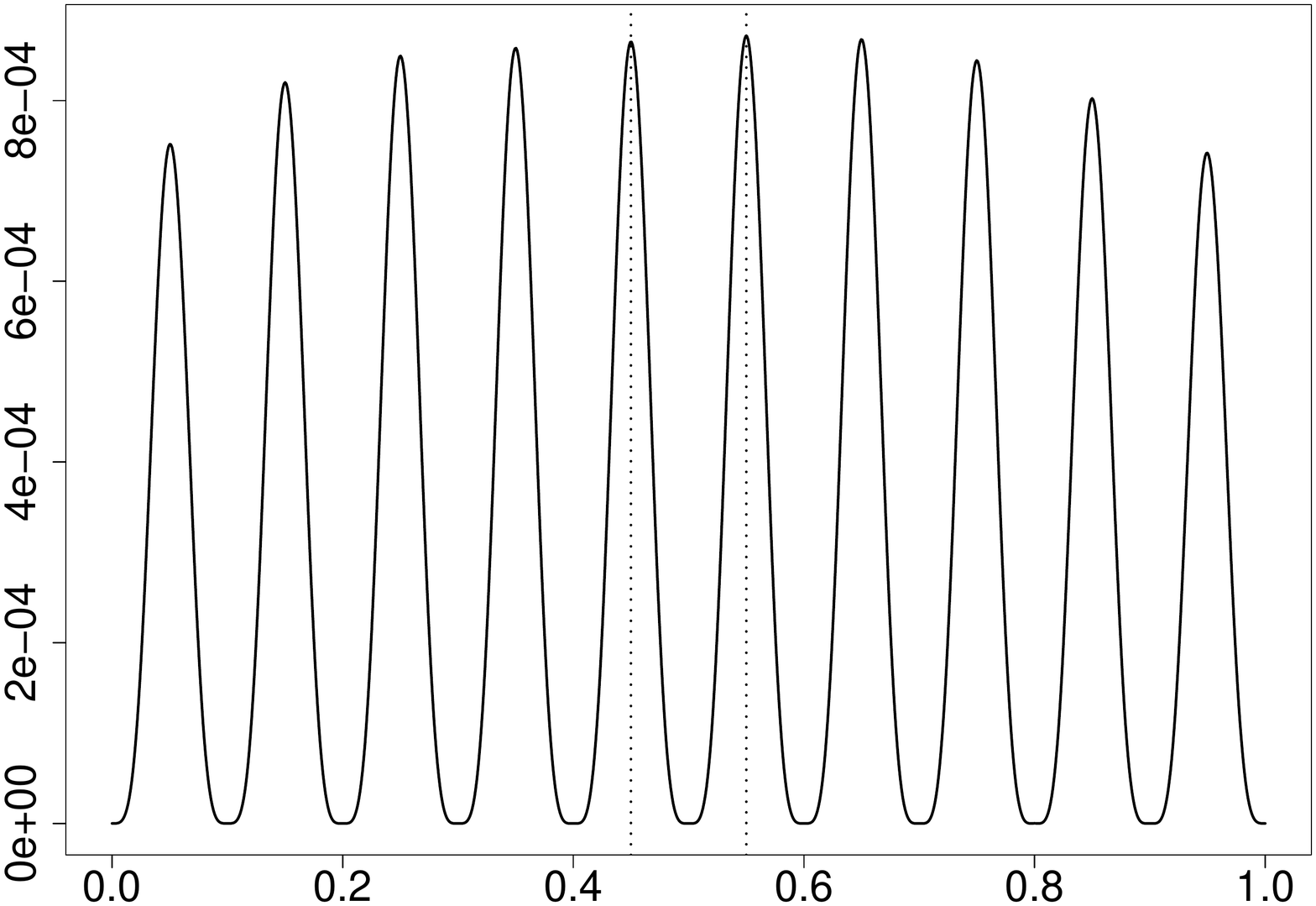}
  \caption{Plot of $ s \mapsto \Delta_{1.1}(1, s; [0, 1]) $ for $ f(\bx) = \sin(20\pi x_1) $.}
\end{subfigure}
\hspace{1cm}
\begin{subfigure}[t]{0.45\textwidth}
  \centering
  \includegraphics[width=1\linewidth]{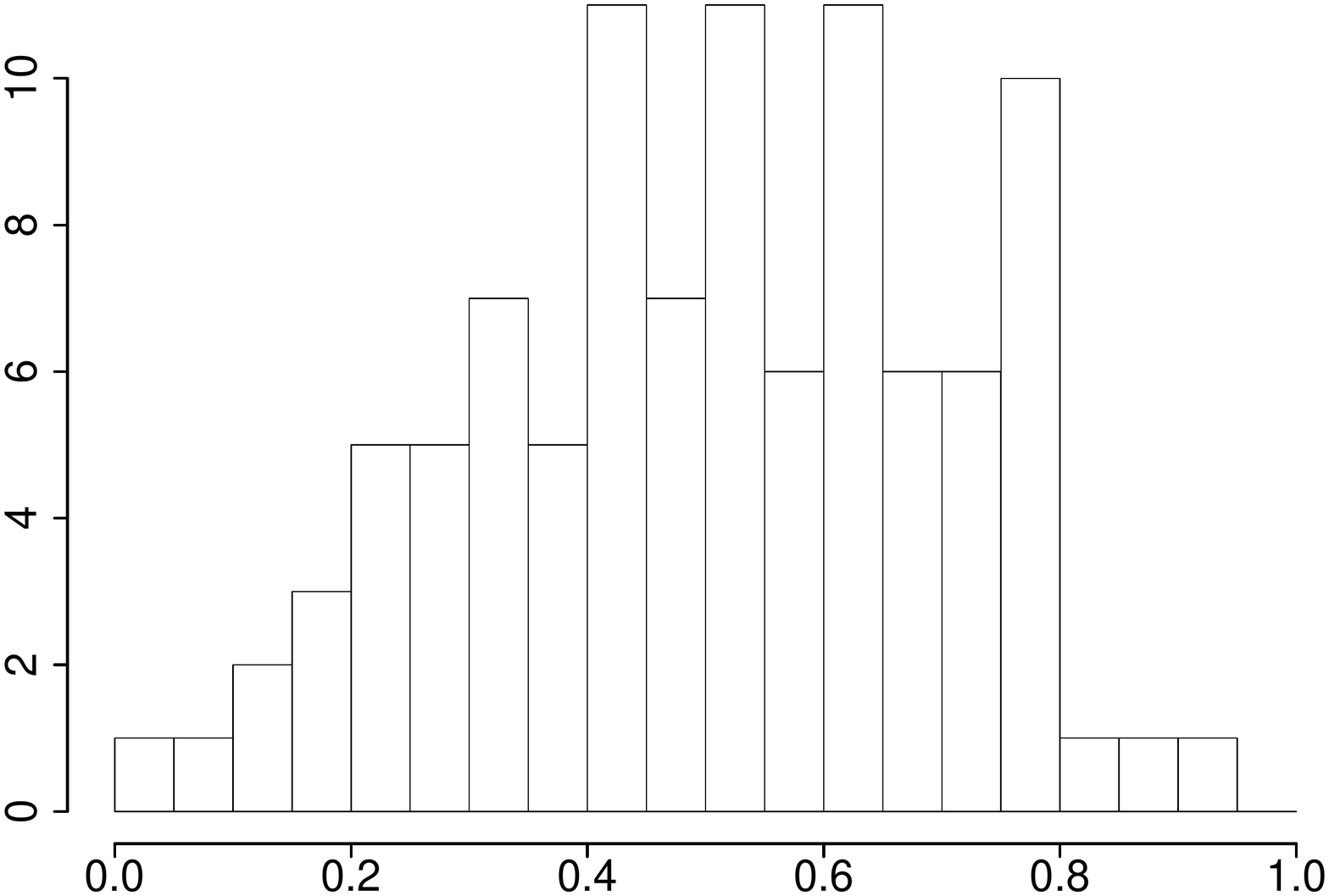}
  \caption{Histogram of $ \hat s_{1.1} $ for $ Y = \sin(20\pi X) + \varepsilon $ ($ X \sim \Unif(0, 1) $, $ \varepsilon \sim N(0, 1) $) with $ n = 100 $ from $ 100 $ replications.}
\end{subfigure}
\caption{Plot of $ s \mapsto \Delta_{1.1}(1, s; \bt) $ and corresponding maxima (dotted vertical lines). Histogram shows sampling distribution of $ \hat s_{1.1} $ for $ n = 100 $ from $ 100 $ replications.}
\label{fig:sine_edge}
\end{figure}

By a similar argument to establishing \prettyref{thm:mainprob}, the optimal $ P(\bt^*_L) $ satisfies
$$
P(\bt^*_L) = \frac{1}{2} - \frac{\text{sgn}(\Xi(s^*_{\alpha}))\overline G(s^*_{\alpha}; \bt)}{2(1-\alpha)\sqrt{\Delta_{\alpha}(s^*_{\alpha}; \bt)}}(4P(\bt^*_L)P(\bt^*_R))^{\frac{\alpha+1}{2}}.
$$ 
With $ v = |1-\alpha|^2\frac{\Delta_{\alpha}(s^*_{\alpha}; \bt)}{|\overline G(s^*_{\alpha}; \bt)|^2} $, we have
$ 4P(\bt^*_L)P(\bt^*_R) = 1- v^{-1}(4P(\bt^*_L)P(\bt^*_R))^{\alpha+1} $ and hence $ 4P(\bt^*_L)P(\bt^*_R) \geq \frac{v^{\frac{1}{\alpha+1}}}{1+v^{\frac{1}{\alpha+1}}} $. Let us now obtain a further lower bound on $ \frac{v^{\frac{1}{\alpha+1}}}{1+v^{\frac{1}{\alpha+1}}} $. To this end, note that by concavity of $ x \to x^{1/(\alpha+1)} $, we have
\begin{align*}
& |\overline G(s^*_{\alpha}; \bt)|^{\frac{2}{\alpha+1}} +((1-\alpha)^2\Delta_{\alpha}(s^*_{\alpha}; \bt))^{\frac{1}{\alpha+1}} \\ & \leq
2^{\frac{\alpha}{1+\alpha}}\left(|\overline G(s^*_{\alpha}; \bt)|^2 + (1-\alpha)^2\Delta_{\alpha}(s^*_{\alpha}; \bt) \right)^{\frac{1}{\alpha+1}}. 
\end{align*}
This means that
\begin{equation} \label{eq:alphalower}
4P(\bt^*_L)P(\bt^*_R) \geq \bigg(\frac{2^{-\alpha}(1-\alpha)^2\Delta_{\alpha}(s^*_{\alpha}; \bt)}{|\overline G(s^*_{\alpha}; \bt)|^2 +(1-\alpha)^2\Delta_{\alpha}(s^*_{\alpha}; \bt)}\bigg)^{\frac{1}{\alpha+1}}.
\end{equation}

Solving this for $ P(\bt^*_L) $ and $ P(\bt^*_R) $ yields the following theorem, which is a direct analog to \prettyref{thm:mainprob}. Using this, we also give a lower bound for $ \Lambda_{\alpha} $, the global balancedness (see \prettyref{def:edgegap}) for $ \Delta_{\alpha}(\cdot; \bt) $.
\begin{theorem} Suppose \prettyref{ass:density} holds and $ \Delta(s^* ; \bt) > 0 $. Then both $ P(\bt^*_L) $ and $ P(\bt^*_R) $ are between
$$
\frac{1}{2}\Bigg(1 \pm \sqrt{1-\bigg(\frac{2^{-\alpha}(1-\alpha)^2\Delta_{\alpha}(s^*_{\alpha}; \bt))}{|\overline G(s^*_{\alpha}; \bt)|^2 +(1-\alpha)^2\Delta_{\alpha}(s^*_{\alpha}; \bt)}\bigg)^{\frac{1}{\alpha+1}}} \Bigg),
$$
and consequently
\begin{equation} \label{eq:lambdaalphalower}
\lambda_{\alpha}(\bt) \geq \bigg(\frac{2^{-\alpha}(1-\alpha)^2\Delta_{\alpha}(s^*_{\alpha}; \bt))}{|\overline G(s^*_{\alpha}; \bt)|^2 +(1-\alpha)^2\Delta_{\alpha}(s^*_{\alpha}; \bt)}\bigg)^{\frac{1}{\alpha+1}}.
\end{equation}
\end{theorem}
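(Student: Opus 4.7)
The displayed manipulations immediately preceding the theorem statement already perform the bulk of the work; what remains is to assemble them formally and convert the resulting lower bound on the node balancedness $\lambda_\alpha(\bt)=4P(\bt^*_L)P(\bt^*_R)$ into an interval for each daughter probability. The strategy is a direct generalization of the proof of \prettyref{thm:mainprob} to the penalized objective $\Delta_\alpha(s;\bt)=(4P(\bt_L)P(\bt_R))^\alpha\Delta(s;\bt)$.

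First I would derive the first-order optimality condition at $s^*_\alpha$. Using $\tfrac{\partial}{\partial s}(P(\bt_L)P(\bt_R))=p(\bt_L)(1-2P(\bt_L))$, $\Xi'(s)=p(\bt_L)\overline G(s;\bt)$, and $\Delta(s;\bt)=\Xi(s;\bt)^2/(P(\bt_L)P(\bt_R))$ from the proof of \prettyref{thm:mainprob}, differentiation of $\Delta_\alpha$ and cancellation of the common non-vanishing factors $\Xi(s^*_\alpha;\bt)$ and $p(\bt^*_L)$ (which are nonzero by $\Delta(s^*_\alpha;\bt)>0$ and \prettyref{ass:density}, respectively) yields the implicit identity already displayed in the excerpt,
\[
P(\bt^*_L)=\frac12-\frac{\mathrm{sgn}(\Xi(s^*_\alpha))\,\overline G(s^*_\alpha;\bt)}{2(1-\alpha)\sqrt{\Delta_\alpha(s^*_\alpha;\bt)}}\,(4P(\bt^*_L)P(\bt^*_R))^{(\alpha+1)/2}.
\]
Squaring both sides and using $(1-2P(\bt^*_L))^2=1-4P(\bt^*_L)P(\bt^*_R)$, with the abbreviations $u=4P(\bt^*_L)P(\bt^*_R)$ and $v=(1-\alpha)^2\Delta_\alpha(s^*_\alpha;\bt)/|\overline G(s^*_\alpha;\bt)|^2$, reduces the optimality condition to the scalar fixed-point equation $u^{\alpha+1}=v(1-u)$.

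Second, I would lower bound $u$. The map $u\mapsto u^{\alpha+1}/(1-u)$ is strictly increasing on $(0,1)$, and plugging in the candidate $u_0=v^{1/(\alpha+1)}/(1+v^{1/(\alpha+1)})$ gives $u_0^{\alpha+1}/(1-u_0)=v/(1+v^{1/(\alpha+1)})^\alpha\le v$, so $u\ge u_0$. Applying the concavity inequality $a^{1/(\alpha+1)}+b^{1/(\alpha+1)}\le 2^{\alpha/(\alpha+1)}(a+b)^{1/(\alpha+1)}$ (valid since $x\mapsto x^{1/(\alpha+1)}$ is concave for $\alpha>0$) with $a=|\overline G(s^*_\alpha;\bt)|^2$ and $b=(1-\alpha)^2\Delta_\alpha(s^*_\alpha;\bt)$ then converts this into the announced bound
\[
\lambda_\alpha(\bt)=u\;\ge\;\bigg(\frac{2^{-\alpha}(1-\alpha)^2\Delta_\alpha(s^*_\alpha;\bt)}{|\overline G(s^*_\alpha;\bt)|^2+(1-\alpha)^2\Delta_\alpha(s^*_\alpha;\bt)}\bigg)^{1/(\alpha+1)},
\]
which is \prettyref{eq:lambdaalphalower}. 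Finally, to pass from this balancedness estimate to bounds on $P(\bt^*_L)$ and $P(\bt^*_R)$ individually, I would invoke the elementary fact used at the end of the proof of \prettyref{thm:second}: if $4p(1-p)\ge\lambda$, then $p\in\bigl[\tfrac12(1-\sqrt{1-\lambda}),\,\tfrac12(1+\sqrt{1-\lambda})\bigr]$.

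The only genuinely delicate point is verifying that all factors cancelled during the derivation of the fixed-point equation are indeed nonzero: $\Xi(s^*_\alpha;\bt)\neq 0$ follows from $\Delta(s^*_\alpha;\bt)\ge\Delta_\alpha(s^*_\alpha;\bt)>0$, $p(\bt^*_L)>0$ follows from \prettyref{ass:density}, and $\alpha\ne 1$ is part of the motivating setup of this section. The bound is trivially vacuous when $\overline G(s^*_\alpha;\bt)=0$ (in which case both sides of \prettyref{eq:lambdaalphalower} are identically $1$), so no separate case analysis is required there. Apart from this bookkeeping, the proof is a consolidation of calculations already performed in the excerpt.
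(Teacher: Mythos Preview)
Your proposal is correct and follows essentially the same route as the paper: the text immediately preceding the theorem already derives the fixed-point identity, the lower bound $u\ge v^{1/(\alpha+1)}/(1+v^{1/(\alpha+1)})$, and the concavity step, and then states that ``solving this for $P(\bt^*_L)$ and $P(\bt^*_R)$ yields the following theorem''; your write-up supplies exactly that assembly, together with a clean monotonicity justification for the fixed-point lower bound that the paper leaves implicit. One small slip: in your parenthetical on the edge case $\overline G(s^*_\alpha;\bt)=0$, the right-hand side of \prettyref{eq:lambdaalphalower} equals $2^{-\alpha/(\alpha+1)}$, not $1$, so the bound still holds but is not an equality there.
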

It is often possible to show that $ \frac{(1-\alpha)^2\Delta_{\alpha}(s^*_{\alpha}; \bt)}{|\overline G(s^*_{\alpha}; \bt)|^2 +(1-\alpha)^2\Delta_{\alpha}(s^*_{\alpha}; \bt)} \asymp \frac{\Delta(s^* ; \bt)}{|\overline G(s^*; \bt)|^2 +\Delta(s^* ; \bt)} $ and hence, by virtue of \prettyref{eq:alphalower}, \prettyref{eq:lambdauniversal}, and \prettyref{eq:lambdaalphalower}, that $ \Lambda_{\alpha}  \asymp \Lambda^{1/(1+\alpha)} $, where $ \Lambda = \Lambda_0 $ is the global balancedness for the unpenalized criterion. The next theorem (c.f., \prettyref{thm:second}) shows that this is improvable to $ \Lambda_{\alpha}  \asymp \Lambda^{1/(3+\alpha)} $ when $ \alpha \in [0, 1) $. Using this, it can easily be shown via a modification of the proofs of \prettyref{ex:poly} and \prettyref{ex:sine} that $ \Lambda_{\alpha} = \Omega(k^{-4/(3+\alpha)}) $ and $ \Lambda_{\alpha} = \Omega(m^{-4/(3+\alpha)}) $, respectively. These quantities are larger than their counterparts using the unpenalized $ \Delta(\cdot; \bt) $ and hence the regularization does indeed encourage splits that are farther away from the parent subnode edges.

\begin{theorem}
Suppose \prettyref{ass:density} holds, $ \Delta(s^* ; \bt) > 0 $, and each first-order partial derivative of the regression function and joint density $ p_{\bX} $ exist and are continuous. Let $ \alpha \in [0, 1) $. Then both $ P(\bt^*_L) $ and $ P(\bt^*_R) $ are between
$$
\frac{1}{2}\Bigg(1\pm\sqrt{1-\Bigg(\frac{4^{1-\alpha}(1-\alpha)^2(p(\bt_L))^2\Delta_{\alpha}(s^*_{\alpha}; \bt)}{|\overline F'(s^*_{\alpha}; \bt)|^2}\Bigg)^{\frac{1}{3+\alpha}}}\Bigg),
$$
and consequently,
$$
\lambda_{\alpha}(\bt) \geq \Bigg(\frac{4^{1-\alpha}(1-\alpha)^2(p(\bt_L))^2\Delta_{\alpha}(s^*_{\alpha}; \bt)}{|\overline F'(s^*_{\alpha}; \bt)|^2}\Bigg)^{\frac{1}{3+\alpha}}.
$$
\end{theorem}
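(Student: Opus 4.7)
The plan is to follow the same two-stage strategy used to prove \prettyref{thm:second}, but applied to the penalized criterion $\Delta_{\alpha}$ rather than $\Delta$ itself. The first-order optimality condition is already worked out in the discussion preceding the theorem: at $s=s^*_{\alpha}$, $(1-\alpha)(P(\bt^*_R)-P(\bt^*_L))\Xi(s^*_{\alpha};\bt)=2P(\bt^*_L)P(\bt^*_R)\overline G(s^*_{\alpha};\bt)$. This is the analog of the optimality identity used in \prettyref{thm:mainprob}. What is missing is the analog of \prettyref{lmm:secondorder}, which I intend to derive from $(\Delta_{\alpha})''(s^*_{\alpha})\leq 0$.

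Writing $\Delta_{\alpha}=h^{\alpha}\Delta$ with $h=4P(\bt_L)P(\bt_R)$, I would first use $(\Delta_{\alpha})'(s^*_{\alpha})=0$ to express $\Delta'(s^*_{\alpha})=-\alpha h'\Delta/h$, and then expand
\[
(\Delta_{\alpha})''(s^*_{\alpha}) = h^{\alpha}\Delta'' - \alpha(\alpha+1)h^{\alpha-2}(h')^2\Delta + \alpha h^{\alpha-1}h''\Delta \leq 0.
\]
Computing $\Delta''$ directly at $s^*_{\alpha}$ (without assuming $\Delta'=0$, as was available in \prettyref{lmm:secondorder}) introduces terms involving $p'(s|\bt)$, the derivative of the conditional density of $X\mid \bX\in\bt$; parallel $p'$ terms enter from $h''=4p'(P_R-P_L)-8p^2$ on the right-hand side. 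The crucial observation is that, once the first-order identity $2\pi\overline G=(1-\alpha)(P_R-P_L)\Xi$ (with $\pi=P(\bt^*_L)P(\bt^*_R)$) is substituted to rewrite $\overline G$, the $p'$ contributions coming from $2\Xi\Xi''/\pi$ and $-\Xi^2\pi''/\pi^2$ inside $\Delta''$ combine into a single term $-\alpha p'\Xi^2(P_R-P_L)/\pi^2$, and this exactly cancels the $p'$ piece coming from $-\alpha h^{\alpha-1}h''\Delta$ on the other side.

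With the $p'$ terms eliminated, what remains is a clean inequality in $\overline F'$, $\Xi$, $\Delta$ and $\pi$. Grouping the surviving terms, using $|\Xi|=\sqrt{\pi\Delta}$, and applying the identity $\mu+4\pi=1$ with $\mu=(P_R-P_L)^2$ reduces it, after routine algebra, to a bound of the shape $\pi^{3/2}\geq C(\alpha)\,p\sqrt{\Delta}/|\overline F'(s^*_{\alpha};\bt)|$ with $C(\alpha)$ proportional to $1-\alpha$. Rearranging in terms of $\lambda_{\alpha}=4\pi$ and substituting $\Delta=\lambda_{\alpha}^{-\alpha}\Delta_{\alpha}$ gives $\lambda_{\alpha}^{3+\alpha}\geq 4^{1-\alpha}(1-\alpha)^2 p^2\Delta_{\alpha}/|\overline F'|^2$ (possibly with a slightly better constant that we weaken to $4^{1-\alpha}$ to state the result). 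The translation to bounds on $P(\bt^*_L)$ and $P(\bt^*_R)$ then follows from the same elementary step used at the end of \prettyref{thm:second}: $4p(1-p)\geq c$ implies $p\in[\tfrac{1}{2}(1-\sqrt{1-c}),\tfrac{1}{2}(1+\sqrt{1-c})]$.

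The main obstacle is the $p'$ cancellation. Unlike in \prettyref{lmm:secondorder}, where $\Delta'(s^*)=0$ makes many awkward terms vanish automatically, here the first-order condition leaves a nonzero residue and one might fear that the final bound must involve the density's derivative $p'(s|\bt)$---which would be inconsistent with the clean, $p$-only form of the theorem. The rescue is an algebraic coincidence driven by the precise coefficient $1-\alpha$ in the first-order identity, and verifying it requires careful bookkeeping. Once it is in hand, the remainder of the argument is a mechanical (though somewhat intricate) simplification parallel to the one carried out in \prettyref{thm:second}.
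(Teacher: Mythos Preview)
Your proposal is correct and follows essentially the same strategy as the paper: compute $(\Delta_{\alpha})''$ at the critical point $s^*_{\alpha}$, use the second-order condition $(\Delta_{\alpha})''<0$, and combine with the first-order information to extract a bound on $\lambda_{\alpha}$.

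Two minor differences are worth pointing out. First, the paper avoids your ``main obstacle'' altogether. Rather than expanding $(\Delta_{\alpha})''$ via the product rule on $h^{\alpha}\Delta$ and tracking $p'$-terms through a cancellation, the paper writes $\Delta_{\alpha}'$ in the factored form $4^{\alpha}\pi^{\alpha-2}\,\Xi\,p\,[2\pi\overline G-(1-\alpha)\Xi(1-2P_L)]$ and, at the critical point, differentiates only the vanishing bracket. Since that bracket contains no $p$, only $P_L$, $\overline G$, and $\Xi$, no $p'$ ever appears; the resulting clean formula is
\[
(\Delta_{\alpha})''(s^*_{\alpha})=\frac{8p^2}{(4\pi)^{1-\alpha}}\Big(\tfrac{1+\alpha}{1-\alpha}|\overline G|^2+(1-\alpha)\Delta+\tfrac{\overline F'\Xi}{p}\Big).
\]
Your cancellation is not a coincidence either: the $p'$-terms in your expansion assemble into $(p'/p)(\Delta_{\alpha})'$, which vanishes at $s^*_{\alpha}$. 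Second, to merge first- and second-order information, the paper does not substitute the exact first-order identity for $|\overline G|^2$ as you propose; instead it bounds $|\overline G|^2+(1-\alpha)^2\Delta_{\alpha}\leq(1+\alpha)|\overline G|^2+(1-\alpha)^2\Delta$, invokes the second-order inequality, and then feeds the result into the already-established first-order lower bound \prettyref{eq:alphalower}. Your direct substitution is arguably more transparent and, as you note, yields the constant $4$ in place of $4^{1-\alpha}$.
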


\begin{proof}
We have the chain of inequalities
\begin{align}
& 2^{\frac{\alpha}{1+\alpha}}\left(|\overline G(s^*_{\alpha}; \bt)|^2 + (1-\alpha)^2\Delta_{\alpha}(s^*_{\alpha}; \bt) \right)^{\frac{1}{\alpha+1}} \nonumber \\
& \leq 2^{\frac{\alpha}{1+\alpha}}\left((1+\alpha)|\overline G(s^*_{\alpha}; \bt)|^2 + (1-\alpha)^2\Delta(s^*_{\alpha}; \bt) \right)^{\frac{1}{\alpha+1}} \nonumber \\
& \leq \Big(\frac{2^{\alpha}(1-\alpha)|\overline F'(s^*_{\alpha}; \bt)||\Xi(s^*_{\alpha})|}{p(\bt^*_L)}\Big)^{\frac{1}{\alpha+1}} \nonumber \\
& = \Big(\frac{(1-\alpha)|\overline F'(s^*_{\alpha}; \bt)|\sqrt{(4P(\bt^*_L)P(\bt^*_R))^{1-\alpha}\Delta_{\alpha}(s^*_{\alpha}; \bt)}}{2^{1-\alpha}p(\bt^*_L)}\Big)^{\frac{1}{\alpha+1}}. \label{eq:secondalphalower}
\end{align}
The first inequality is by concavity of $ x\mapsto x^{1/(\alpha+1)} $ and the second inequality is due to the fact that $ \Delta_{\alpha}(s^*_{\alpha}; \bt) \leq \Delta(s^*_{\alpha}; \bt) $. The third inequality comes from the second-order derivative condition (c.f., \prettyref{eq:secondderivative}), i.e., given $ \frac{\partial}{\partial s}\Delta_{\alpha}(s; \bt) \mid_{s=s^*_{\alpha}} = 0 $, the second derivative $ \frac{\partial^2}{\partial s^2}\Delta_{\alpha}(s; \bt) \mid_{s=s^*_{\alpha}}  $ equals
$$
\frac{8(p(\bt^*_L))^2}{(4P(\bt^*_L)P(\bt^*_R))^{1-\alpha}}\Big(\frac{1+\alpha}{1-\alpha}|\overline G(s^*_{\alpha}; \bt)|^2 + (1-\alpha)\Delta(s^*_{\alpha}; \bt) + \frac{\overline F'(s^*_{\alpha}; \bt)\Xi(s^*_{\alpha})}{p(\bt^*_L)}\Big).
$$
Finally, combining \prettyref{eq:secondalphalower} with \prettyref{eq:alphalower} and solving for $ P(\bt^*_L) $ and $ P(\bt^*_R) $ yields both statements of the theorem.
\end{proof}

\subsection{Lower bounds on the node balancedness} \label{sec:lower} Of special interest is $ \Lambda > 0 $, since this provides a nontrivial bound on the distance between \emph{any} optimal split to its parent subnode edges. But can we expect this to hold in most settings? It is conceivable that $ \lambda $ may become extremely small when $ a(\bt) $ and $ b(\bt) $ are arbitrarily close to each other, since after all by \prettyref{thm:mainprob}, its defining quantities---$ \Delta(s^* ; \bt) $ and $ \Delta(s^* ; \bt) + |\overline G(s^*; \bt)|^2 $---expressed through their ratio, both approach zero. We argue that $ \lambda $ is still controlled in this case. To see this, suppose $ a(\bt) $ and $ b(\bt) $ are extremely close to each other. Then the partial dependence function is approximately linear, i.e., $ \expect{Y\mid \bX \in \bt, \; X = s} \approx As+B $ for some constants $ A $ and $ B $ and also $ \prob{X \leq s \mid \bX \in \bt} \approx (s-a(\bt))/(b(\bt)-a(\bt)) $. Hence $ \lambda \approx 1 $, with equality if $ \bX $ is uniform and the conditional regression surface is exactly linear. The statement in \prettyref{thm:lambdalower}, which we now prove, makes this intuition precise. First, we state two lemmas, but defer their proofs until \prettyref{app:proofs}.

\begin{lemma} \label{lmm:flatpartial}
If the features of $ \bX $ are independent and satisfy \prettyref{ass:density} and the regression function satisfies \prettyref{eq:flat}, then $ \sup_{s\in[0,1]}\inf_{r\geq 1}\{ r : \overline F^{(r)}(s; \bt) \neq 0\; \text{for all} \; \bt\} $ is finite and for each node $ \bt $, $ \Delta(s^* ; \bt) > 0 $.
\end{lemma}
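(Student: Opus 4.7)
The plan is to split the lemma into its two assertions and argue them in sequence, both by reducing to the hypothesis \prettyref{eq:flat} on $f$ through a differentiation-under-the-integral argument. The key simplification is that under independence, the weighting measure for $\overline F_j(\,\cdot\,; \bt)$ does not depend on $x_j$: if $\bX$ has independent components with marginal densities $p_{X_{j'}}$, then for $x_j \in [a_j(\bt), b_j(\bt)]$,
$$
\overline F_j(x_j; \bt) \;=\; \int_{\bt_{\setminus j}} f(x_j, \bx_{\setminus j}) \prod_{j' \neq j} \frac{p_{X_{j'}}(x_{j'})}{P_{j'}(\bt)}\,d\bx_{\setminus j},
$$
where $P_{j'}(\bt) = \Prob[X_{j'} \in [a_{j'}(\bt), b_{j'}(\bt)]]$. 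Since $f$ has continuous partial derivatives up to order $R$ (the finite integer from \prettyref{eq:flat}), Leibniz's rule gives $\overline F_j^{(r)}(x_j; \bt) = \int \tfrac{\partial^r}{\partial x_j^r}f(x_j, \bx_{\setminus j})\,\mu_{\bt}(d\bx_{\setminus j})$ for $1 \leq r \leq R$, where $\mu_{\bt}$ is the above product measure.

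To establish the first conclusion, fix any $s \in [0,1]$. By \prettyref{eq:flat} there is some $r = r(s) \leq R$ such that $\tfrac{\partial^r}{\partial x_j^r}f(s, \bx_{\setminus j})$ is nonzero and continuous in $\bx_{\setminus j}$ on $[0,1]^{d-1}$. Continuity together with connectedness of $[0,1]^{d-1}$ forces this function to have constant sign, and hence its integral against the strictly positive measure $\mu_{\bt}$ is nonzero. Therefore $\overline F_j^{(r(s))}(s; \bt) \neq 0$ for every node $\bt$ containing $s$, so the inner infimum is bounded by $r(s) \leq R$ for each $s$, whence the supremum over $s \in [0,1]$ is at most $R$, which is finite.

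For the second conclusion, I will proceed by contradiction. Suppose $\Delta(s^*; \bt) = 0$ for some node $\bt$; then by the representation \prettyref{eq:simple0} the conditional expectations $\expect{Y \mid \bX \in \bt,\; X \leq s}$ and $\expect{Y \mid \bX \in \bt,\; X > s}$ agree for every $s \in (a(\bt), b(\bt))$, and both must equal $m := \expect{Y \mid \bX \in \bt}$. Rewriting the first as an integral gives $\int_{a(\bt)}^{s} \overline F(u; \bt)\, p(u\mid \bt)\,du = m\, P(\bt_L)$ for all such $s$; differentiating in $s$ and using that $p(s\mid \bt) > 0$ under \prettyref{ass:density} yields $\overline F(s; \bt) \equiv m$ on $[a(\bt), b(\bt)]$. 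But a constant function has all derivatives equal to zero, which contradicts the first part of the lemma (the inner infimum would be $+\infty$ at any $s \in [a(\bt), b(\bt)]$). Hence $\Delta(s^*; \bt) > 0$ for every node $\bt$.

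The main obstacle, and the only step requiring care, is justifying the sign-preservation of the integral that produces $\overline F_j^{(r)}(s; \bt) \neq 0$: one needs that the hypothesis ``nonzero and continuous for all $\bx_{\setminus j} \in [0,1]^{d-1}$'' is read as giving a function of fixed sign on a connected domain, so that averaging against any positive measure preserves nonvanishing. Everything else reduces to routine applications of Leibniz's rule and to the positivity of conditional densities guaranteed by \prettyref{ass:density} together with independence.
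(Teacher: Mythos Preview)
Your proof is correct and follows essentially the same approach as the paper. The only cosmetic difference is that where the paper invokes the generalized mean value theorem for integrals to conclude $\overline F_j^{(r)}(x_j; \bt) = \tfrac{\partial^r}{\partial x_j^r} f(x_j, \bx'_{\setminus j})$ for some $\bx'_{\setminus j} \in \bt_{\setminus j}$ (hence nonzero), you instead use the equivalent constant-sign argument via the intermediate value theorem; and you spell out in full the implication ``$\Delta(s^*; \bt) = 0 \Rightarrow \overline F(\cdot; \bt)$ constant,'' which the paper leaves as an easy exercise.
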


\begin{lemma} \label{lmm:deltalower}
Suppose the features of $ \bX $ are independent with marginal densities that are continuous and never vanish, and the regression function satisfies \prettyref{eq:flat}. If $ R = \inf_{r\geq 1}\{ r : \overline F^{(r)}(c; \bt) \neq 0\; \text{for all} \; \bt\} < +\infty $, then
\begin{equation} \label{eq:deltalimit}
\liminf_{(a(\bt), b(\bt)) \rightarrow (c,c)} \left\{\frac{\Delta(s^* ; \bt)}{\big(\frac{(b(\bt)-a(\bt))^{R}|\overline F^{(R)}(c; \bt)|}{R!}\big)^2}\right\}  \geq \Delta_R,
\end{equation}
where
$$
\Delta_R = \int_0^1\Delta(s; [0, 1])ds > 0
$$
is the integrated decrease in impurity $ \Delta(\cdot; [0, 1]) $ of the regression function $ f(\bx) = (x_1-1/2)^R $ with respect to the uniform distribution.
\end{lemma}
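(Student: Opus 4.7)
The strategy is a Taylor expansion of the partial dependence function around $c$, a rescaling of the subnode $[a(\bt),b(\bt)]$ to the unit interval, and an invocation of the integral lower bound $\Delta(s^*;\bt)\geq\int\Delta(s;\bt)\Pi(ds)$ that is already used elsewhere in the paper. Fix a sequence $\bt_n$ with $[a_n,b_n]\to(c,c)$, set $L_n=b_n-a_n$ and $\alpha_n=(c-a_n)/L_n\in[0,1]$. By \prettyref{lmm:flatpartial}, $\overline F(\cdot;\bt_n)$ has continuous derivatives up to order $R$, and $\overline F^{(R)}(c;\bt_n)\neq 0$ by definition of $R$. I would Taylor expand
$$\overline F(s;\bt_n)=\sum_{r=0}^{R}\frac{\overline F^{(r)}(c;\bt_n)}{r!}(s-c)^r+o(L_n^R),$$
then substitute $s=a_n+L_n u$, noting that the rescaled density $\tilde p_n(u)=L_n\,p(a_n+L_n u\mid\bt_n)$ converges uniformly to $1$ on $[0,1]$ by continuity and non-vanishing of the marginal density of $X_j$ at $c$.

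Next, apply $\Delta(s^*;\bt_n)\geq\int_{a_n}^{b_n}\Delta(s;\bt_n)\,\Pi_n(ds)$ with $\Pi_n$ taken as the conditional distribution of $X_j$ given $\bX\in\bt_n$, combined with the representation $\Delta(s;\bt_n)=\Xi(s;\bt_n)^2/(P(\bt_{n,L})P(\bt_{n,R}))$ from the proof of \prettyref{thm:mainprob}, where $\Xi(s;\bt_n)=\int_{a_n}^s p(s'|\bt_n)\overline G(s';\bt_n)ds'$. After the change of variables and substitution of the Taylor expansion, on any subsequence along which $\overline F^{(r)}(c;\bt_n)\to 0$ for all $r<R$, the $R$-th order term dominates and one obtains
$$\Delta(s^*;\bt_n)\geq\bigg(\frac{\overline F^{(R)}(c;\bt_n)L_n^R}{R!}\bigg)^{\!2}\int_0^1\Delta_{(u-\alpha_n)^R}(s;[0,1])\,ds+o(L_n^{2R}),$$
where $\Delta_{(u-\alpha)^R}(\cdot;[0,1])$ denotes the decrease in impurity of the regression function $(u-\alpha)^R$ on $[0,1]$ with uniform measure. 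On any subsequence where some $\overline F^{(r)}(c;\bt_n)$ with $r<R$ stays bounded away from $0$, the leading contribution has order $L_n^{2r}\gg L_n^{2R}$, so the normalized ratio in the lemma diverges and trivially exceeds $\Delta_R$.

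The main obstacle is then the purely one-dimensional statement
$$\inf_{\alpha\in[0,1]}\int_0^1\Delta_{(u-\alpha)^R}(s;[0,1])\,ds=\int_0^1\Delta_{(u-1/2)^R}(s;[0,1])\,ds=\Delta_R.$$
The functional is manifestly symmetric about $\alpha=1/2$ via the measure-preserving reflection $u\leftrightarrow 1-u$. For $R=1$ the centered polynomial $\overline G$ is $\alpha$-independent ($\overline G(u)=u-1/2$), so the integrand is constant in $\alpha$ and the equality is immediate. For $R\geq 2$ I would establish minimality at $\alpha=1/2$ either by expanding $(u-\alpha)^R$ in shifted Legendre polynomials on $[0,1]$ and analyzing the induced quadratic form on the coefficients, or by directly computing and signing $d^2/d\alpha^2$ of the integral at $\alpha=1/2$; this is delicate because the shift interacts nontrivially with the centering and antidifferentiation built into $g\mapsto\int\Delta_g(s)\,ds$, and the basis of Legendre polynomials does not exactly diagonalize it.

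Finally, by compactness of $[0,1]$ extract a convergent subsequence $\alpha_{n_k}\to\alpha_\infty$; combining the previous two steps gives
$$\liminf_{n\to\infty}\frac{\Delta(s^*;\bt_n)}{\big((b_n-a_n)^R|\overline F^{(R)}(c;\bt_n)|/R!\big)^2}\geq\int_0^1\Delta_{(u-\alpha_\infty)^R}(s;[0,1])\,ds\geq\Delta_R,$$
which is the desired conclusion. Strict positivity of $\Delta_R$ follows from \prettyref{lmm:flatpartial} applied to the concrete regression function $(x_1-1/2)^R$ on $[0,1]$ under the uniform distribution.
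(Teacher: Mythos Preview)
Your overall architecture matches the paper's almost exactly: bound $\Delta(s^*;\bt)$ from below by the integrated impurity $\int\Delta(s;\bt)\,\Pi(ds)$, rescale $[a,b]$ to $[0,1]$, use continuity of the marginal density to reduce to the uniform case, Taylor-expand the partial dependence function around $c$, and reduce everything to the one-dimensional problem of minimizing $\alpha\mapsto\int_0^1\Delta_{(u-\alpha)^R}(s;[0,1])\,ds$. The paper even uses the same notation $\delta=(c-a)/(b-a)$ for your $\alpha_n$.

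The genuine gap is precisely where you flag it: the minimization in $\alpha$. You propose either a Legendre-polynomial expansion or a second-derivative test at $\alpha=1/2$ and concede this is ``delicate''; neither is carried out, and it is not clear either route works cleanly, since (as you note) centering plus antidifferentiation does not diagonalize nicely in the Legendre basis, and a local second-derivative check does not give a global minimum without further work. The paper's argument here is concrete and quite different from both of your suggestions. It rewrites the integrand via
\[
V_R(\delta,s)=s(\delta-1)^R+(1-s)\delta^R-(\delta-s)^R,
\]
so that the quantity to be minimized becomes $(R+1)^{-2}\int_0^1 V_{R+1}^2(\delta,s)/[s(1-s)]\,ds$. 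It then establishes five elementary facts about $V_R$ (nonnegativity for even $R$, the symmetry $V_R(\delta,s)=(-1)^R V_R(1-\delta,1-s)$, a comparison $V_R(\delta,s)\geq V_R(\delta,1-s)$ for $\delta\geq 1/2$, $s\leq 1/2$, and $\partial_\delta V_R=R\,V_{R-1}$) and uses them to show the $\delta$-derivative of the integral is nonnegative for $\delta\geq 1/2$, hence the minimum is at $\delta=1/2$. This monotonicity argument via the oddness of $V_{R+1}(1/2,\cdot)$ combined with the comparison inequality is the non-obvious idea you are missing.

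Two smaller points. First, you restrict to $\alpha_n\in[0,1]$; the paper takes the infimum over all $\delta\in\mathbb{R}$ after first showing (via Jensen and a leading-term analysis) that the integral is $\Omega(|\delta|^{R-1})$ as $|\delta|\to\infty$, so unbounded $\delta$ can be discarded. Second, the paper handles the Taylor expansion via the divided difference $D(s)=(\overline F(a+s(b-a);\bt)-\overline F(c;\bt))/(a+s(b-a)-c)^R$ converging uniformly to $\overline F^{(R)}(c;\bt)/R!$, rather than your case split on whether lower-order derivatives vanish along the sequence; your case split is reasonable but the divided-difference packaging is cleaner.
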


\begin{proof}[Proof of \prettyref{thm:lambdalower}]
By \prettyref{thm:mainprob}, $ \text{MDI}(X; \bt) $ with weights $ w(s^*; \bt) = [|\overline G(s^*; \bt)|^2+\Delta(s^*; \bt)]^{-1} $ is at least $ \Lambda K(\bt) $. Hence we need to show that global balancedness $ \Lambda $ is positive. To this end, the first step in the proof involves showing that
\begin{equation} \label{eq:limit}
\liminf_{(a, b) \rightarrow (c,c)}\lambda(\bt) = \liminf_{(a, b) \rightarrow (c,c)}\{ 4P(\bt^*_L)P(\bt^*_R) \} \geq \left(\frac{4\Delta_R}{R^2}\right)^{1/3},
\end{equation}
where $ R =\inf_{r\geq 1}\{ r : \overline F^{(r)}(c; \bt) \neq 0\; \text{for all} \; \bt\} < +\infty $ (by \prettyref{lmm:flatpartial}) and $ \Delta_R $ is the positive constant from \prettyref{lmm:deltalower}.
This can be accomplished by \prettyref{lmm:deltalower} since
\begin{equation} \label{eq:liminf}
\liminf_{(a, b) \rightarrow (c,c)} \Bigg\{\frac{\Delta(s^* ; \bt)}{\Big(\frac{(b-a)^{R}|\overline F^{(R)}(c; \bt)|}{R!}\Big)^2}\Bigg\} \geq \Delta_R > 0.
\end{equation}
Next, consider an $ R-1 $ term Taylor expansion of $ \overline F'(\cdot; \bt) $. Then, by definition of $ R $ and a Taylor expansion argument, $ |\overline F'(s^*; \bt)| \leq \frac{(b-a)^{R-1}}{(R-1)!}\sup_{s\in[a,b]}|\overline F^{(R)}(s; \bt)| $. Thus, combining \prettyref{eq:liminf} with the fact that $ \lim_{(a,b)\rightarrow (c, c)}(b-a)p(\bt_L) = 1 $ (since $ p_X(c) > 0 $ and $ p_X(\cdot) $ is continuous by assumption) and by continuity of $ \overline F(\cdot; \bt) $ at $ s = c $, we have
$$
\liminf_{(a, b) \rightarrow (c,c)} \left\{\frac{4(p(\bt_L))^2\Delta(s^* ; \bt)}{|\overline F'(s^*; \bt)|^2}\right\} \geq \frac{4\Delta_R}{R^2}.
$$
Finally, \prettyref{eq:lambdaloweruniversal} from \prettyref{thm:second} implies \prettyref{eq:limit}. The assumption of finite $ R' = \sup_{s\in[0,1]}\inf_{r\geq 1}\{ r : \overline F^{(r)}(s; \bt) \neq 0 \; \text{for all} \; \bt\} $ implies that
$$
\inf_c\liminf_{(a, b) \rightarrow (c,c)}\{ 4P(\bt^*_L)P(\bt^*_R) \} \geq \min_{R \leq R'}\left(\frac{4\Delta_R}{R^2}\right)^{1/3} > 0.
$$
Next, let $ \ba $ (resp. $ \bb $) denote the vector of lower (resp. upper) endpoints of the subnodes of $ \bt $. Now, since the regression function is continuous, it follows that $ (s,\ba, \bb) \mapsto \Delta(s ; \bt) $ and $ (s,\ba, \bb) \mapsto \overline G(s; \bt) $ are both continuous on the domain $ \{ (s,\ba, \bb) \in [0, 1]^{2d+1} : a(\bt) \leq s \leq b(\bt),\; \ba < \bb \} $.\footnote{This can be seen from the generalized mean value theorem for integrals.} Consequently, by Berge's Maximum Theorem \citep[Theorem 17.31]{aliprantisinfinite2006} the mapping $ (\ba, \bb) \mapsto \Delta(s^* ; \bt) $ is continuous on $ \calT  = \{ (\ba, \bb) \in [0, 1]^{2d} : \ba < \bb \} $ and $ (\ba, \bb) \mapsto \overline G(s^*; \bt) $ is an upper hemicontinuous correspondence on $ \calT $. In particular, by \prettyref{thm:mainprob}, $ 4P(\bt^*_L)P(\bt^*_R) = \frac{\Delta(s^* ; \bt)}{\Delta(s^* ; \bt)+|\overline G(s^*; \bt)|^2} $ and hence $ 4P(\bt^*_L)P(\bt^*_R) $ is an upper hemicontinuous correspondence on $ \calT $. Next, note that by \prettyref{lmm:flatpartial} and \prettyref{thm:mainprob}, $ 4P(\bt^*_L)P(\bt^*_R) > 0 $ on $ \calT $, and by \prettyref{eq:limit}, $ 4P(\bt^*_L)P(\bt^*_R) \geq \min_{R \leq R'}\left(\frac{4\Delta_R}{R^2}\right)^{1/3} > 0 $ for all points $ (\ba, \bb) $ arbitrarily close to the boundary of $ \calT $. Hence $ \Lambda > 0 $.
\end{proof}

\section*{Acknowledgment}
	\label{sec:ack}
	The author is indebted to Min Xu, Minge Xie, Samory Kpotufe, Robert McCulloch, Andy Liaw, Richard Baumgartner, and Michael Kosorok for helpful discussion and feedback. The author would also like to thank Joowon Klusowski for her help with the proof of \prettyref{lmm:deltalower}. 

\bibliographystyle{plainnat}
\bibliography{forest.bib}

\newpage
\thispagestyle{empty}
\appendix
\setcounter{page}{1}
\begin{center}
	\MakeUppercase{\bf \large Supplement to ``Best Split Nodes For Regression Trees''}
	
	\medskip
	
	\MakeUppercase{{By Jason M. Klusowski}
}

\end{center}

\appendix
\section{Supplemental Material}\label{app:proofs}
\addcontentsline{toc}{section}{Appendix}
\renewcommand{\thesubsection}{\Alph{subsection}}

In this appendix, we give proofs of \prettyref{lmm:exppoly}, \prettyref{thm:forest}, \prettyref{lmm:flatpartial}, \prettyref{lmm:deltalower}, \prettyref{thm:sample}, and \prettyref{thm:sampledata}. We also give proofs of the examples from \prettyref{sec:mdilower}, \prettyref{sec:distance}, and \prettyref{sec:classification}.

\subsection{Proofs of main lemmas} \label{app:mainproofs}

\begin{proof}[Proof of \prettyref{lmm:exppoly}]
We proceed by induction. The case $K=1$ is trivial, since $R_1 e^{P_1} = 0$ clearly implies $R_1 = 0$. Now let $K \geq 2 $ be arbitrary and assume that the claim is true for all smaller values of $K$. Let $P_1, \dots, P_K$ be distinct (real or complex) polynomials without constant terms and $R_1, \ldots, R_K$ be (real or complex) polynomials with
\begin{equation} \label{eq:ass}
R_1 e^{P_1} + \cdots + R_K e^{P_K} = 0.
\end{equation}
If all $R_k$ are zero then we are done. Otherwise (without loss of generality) $R_K \neq 0$. First we divide \prettyref{eq:ass} by $ e^{P_K} $, yielding
\begin{equation} \label{eq:div}
 R_1e^{P_1 - P_K} + \cdots + R_{K-1}e^{P_{K-1} - P_K} + R_K = 0.
\end{equation}
Differentiating the identity \prettyref{eq:div} gives
\begin{align} \label{eq:diff}
& \left(R'_1 + R_1(P_1'-P_K')\right)e^{P_1 - P_K} + \cdots + \nonumber \\ & \quad
\left( R'_{K-1} + R_{K-1}(P_{K-1}'-P_K')\right)e^{P_{K-1} - P_K} + R'_K = 0.
\end{align}
Multiply \prettyref{eq:div} by $ R'_K $ and \prettyref{eq:diff} by $ R_K $. Subtracting the two resultant expressions from each other yields
\begin{align*}
& \left(R'_1R_K -R_1R'_K + R_1R_K(P_1'-P_K')\right)e^{P_1 - P_K} + \cdots + \\ & \quad
\left( R'_{K-1}R_K - R_{K-1}R'_K + R_{K-1}R_K(P_{K-1}'-P_K')\right)e^{P_{K-1} - P_K} = 0.
\end{align*}

Now we can apply the induction hypotheses, since the $P_k - P_K$ are distinct polynomials without constant terms. It follows that
\begin{equation} \label{eq:zero}
R'_kR_K -R_kR'_K + R_kR_K(P_k'-P_K') = 0.
\end{equation}
If $ R_k \neq 0 $, then \prettyref{eq:zero} is impossible since $ P_k'-P_K' \neq 0 $ and hence $ \deg(R'_kR_K -R_kR'_K) < \deg(R_kR_K(P_k'-P_K')) $. Therefore $R_k = 0$ for $k=1,\dots, K-1$, which also implies that $R_K = 0$.
\end{proof}

\begin{proof}[Proof of \prettyref{thm:forest}]
We follow the proof of \citep[Lemma 2]{scornet2016asymptotics} for quantile forests, but adapted to our setting. Let $ \bt_k(\bX, \Theta) $ denote the node containing $ \bX $ of the tree built with randomness $ \Theta $ at the $ k^{\Th} $ step. By \citep[Theorem 4.1]{scornet2016asymptotics}, we will be done if we can show that $ \omega(f; \bt(\bX, \Theta)) \rightarrow 0 $ in $ \mathbb{P}_{\bX, \Theta} $-probability, where $ \bt(\bX, \Theta) $ is the terminal node of the tree containing $ \bX $. Since $ f $ is continuous on $ [0, 1]^d $, it is also uniformly continuous. Hence, for each $ \xi > 0 $, there exists $ \delta > 0 $ such that if $ \text{diam}_{\calS}(\bt) \leq \delta $, then $ \omega(f; \bt) \leq \xi $. Hence, we must show that $ \text{diam}_{\calS}( \bt(\bX, \Theta)) \rightarrow 0 $ in probability.
To this end, let $ j \in \calS $, $ H = \{ \bx : x_j = z \} $, and $ D = \{A : A \cap H \neq \emptyset\} $. Let $ j_k(\Theta) $ denote the coordinate selected to split along at the $ k^{\Th} $ step of the tree. Suppose $ \bt_k(\bX, \Theta) \in D $. Then there are two cases:
\begin{enumerate}
\item The next split in $ \bt_k(\bX, \Theta) $ is performed along the $ j^{\Th} $ coordinate and, in that case, one of the two resulting nodes has an empty intersection with $ H $.
\item The next split in $ \bt_k(\bX, \Theta) $ is performed along a coordinate other than the $ j^{\Th} $ and, in that case, the two resultant nodes have a non-empty intersection with $ H $.
\end{enumerate}
Thus,
\begin{align*}
& \prob{ \bt_{k+1}(\bX, \Theta) \in D \mid \Theta} =
\\ & \qquad \expect{ \indc{j_{k+1}(\Theta) = j}\indc{\bt_k(\bX, \Theta) \in D} + \indc{j_{k+1}(\Theta) \neq j}\indc{\bt_k(\bX, \Theta) \in D} \mid \Theta}
\\ & \qquad \leq \indc{j_{k+1}(\Theta) = j}(1-\frac{1}{4}\lambda_j(\bt_k(D, \Theta)))\prob{\bt_k(\bX, \Theta) \in D \mid \Theta} + \\ & \qquad \qquad \indc{j_{k+1}(\Theta) \neq j}\prob{\bt_k(\bX, \Theta) \in D \mid \Theta}
\\ & \qquad = (1-\frac{1}{4}\lambda^2_j(\bt_k(D, \Theta))\indc{j_{k+1}(\Theta) = j})\prob{\bt_k(\bX, \Theta) \in D \mid \Theta}
\\ &\qquad \leq \exp\big\{-\frac{1}{4}\lambda^2_j(\bt_k(D, \Theta))\indc{j_{k+1}(\Theta) = j}\big\}\prob{\bt_k(\bX, \Theta) \in D \mid \Theta},
\end{align*}
where $ \bt_k(D, \Theta) $ is the (unique) node at the $ k^{\Th} $ step of the forest construction that contains $ z $.
By induction and \prettyref{eq:lambdauniversal}, this implies that $ \prob{\bt(\bX, \Theta) \in D} \rightarrow 0 $ if $ \min_{\bt}\text{MDI}(X_j; \bt) \rightarrow + \infty $ with $ \mathbb{P}_{\Theta} $-probability one. Finally, consider a partition of $ [0, 1]^{S} $ into hypercubes of side length $ \epsilon $ with sides determined by the hyperplanes $ \{ \bx : x_{j'} = \ell \epsilon \} $, where  $ j' \in \calS $ and $ \ell = 0, 1, \dots, \lceil \epsilon^{-1} \rceil $. If $ \bt(\bX, \Theta) $ belongs to one of the hypercubes, then $ \text{diam}_{\calS}(\bt(\bX, \Theta)) \leq \sqrt{S}\epsilon $. There are at most $ O(\epsilon^{-S}) $ such hyperplanes and hence
$$
\prob{\text{diam}_{\calS}(\bt(\bX, \Theta)) > \epsilon \sqrt{S}} \leq \prob{\bigcup_D \{\bt(\bX, \Theta) \in D \}} \rightarrow 0,
$$
if $ \min_{\bt}\text{MDI}(X_j; \bt) \rightarrow + \infty $ with $ \mathbb{P}_{\Theta} $-probability one, where $ D $ ranges over all hyperplanes of the form $ \{ \bx : x_{j'} = \ell \epsilon \} $, with $ j' \in \calS $ and $ \ell = 0, 1, \dots, \lceil \epsilon^{-1} \rceil $. The conditions of the theorem and \citep[Theorem 4.1]{scornet2016asymptotics} imply the conclusion.
\end{proof}

\begin{proof}[Proof of \prettyref{lmm:flatpartial}]
Let $ r \geq 1 $ and suppose $ \frac{\partial^r}{\partial x^r_j}f(\bx) $ exists and is continuous for all $ \bx_{\setminus j} \in [0, 1]^{d-1} $. By Leibniz's integral rule, we have
\begin{equation} \label{eq:derivativeeq}
\frac{\partial^r}{\partial x^r_j}\overline F_j(x_j; \bt) = \int \frac{\partial^r}{\partial x^r_j}f(x_j, \bx_{\setminus j})\mathbb{P}_{\bX_{\setminus j}\mid \bX \in \bt}(d\bx_{\setminus j}).
\end{equation}
By \prettyref{eq:derivativeeq} and the generalized mean value theorem for integrals, there exists $ \bx'_{\setminus j} \in \bt_{\setminus j} $ such that $\frac{\partial^r}{\partial x^r_j} f(x_j,\bx'_{\setminus j}) =  \frac{\partial^r}{\partial x^r_j}\overline F_j(x_j; \bt) $. By assumption that \prettyref{eq:flat} holds there exists a finite integer $ R \geq 1 $ such that for each $ x_j \in [a_j(\bt), b_j(\bt)] $, there exists an integer $ 1 \leq r \leq R $ such that $ \frac{\partial^r}{\partial x^r_j} f(x_j,\bx_{\setminus j}) \neq 0 $ for all $ \bx_{\setminus j} \in [0, 1]^{d-1}$. In particular, $ \frac{\partial^r}{\partial x^r_j} F_j(x_j; \bt) = \frac{\partial^r}{\partial x^r_j} f(x_j,\bx'_{\setminus j}) \neq 0 $ and hence $ \sup_{s\in[0, 1]}\inf_{r\geq 1}\{ r: \overline F^{(r)}(s; \bt) \neq 0\; \text{for all}\; \bt \} $ is finite. Since $ x_j $ was arbitrary in $ [a_j(\bt), b_j(\bt)] $, this implies that $ \overline F_j(\cdot; \bt) $ is nonconstant on $ [a_j(\bt), b_j(\bt)] $. Finally, it is easy to show that if $ \Delta(j, s^*; \bt) = 0 $, then $ \overline F_j(\cdot; \bt) = \expect{Y \mid \bX \in \bt} $.
\end{proof}

\begin{proof}[Proof of \prettyref{lmm:deltalower}]
First, note that
$$
\Delta(s; \bt) = \frac{(\int_a^{s}p(s|\bt)\overline G(s'; \bt)ds')^2}{P(s|\bt)(1-P(s|\bt))}, \quad s \in [a,b].
$$
Since a maximum is larger than an average, for any prior $ \Pi $ on $ [0, 1] $ with density $ \pi $,
\begin{align*}
\Delta(s^*; \bt) & \geq \int_a^b\frac{(\int_a^{s}p(s|\bt)\overline G(s'; \bt)ds')^2}{P(s|\bt)(1-P(s|\bt))}\frac{\pi((s-a)/(b-a))}{b-a}ds \\
& = \int_0^1\frac{(\int_0^{s}(b-a)p(a+s'(b-a)|\bt)\overline G(a+s'(b-a); \bt)ds')^2}{P(a+s(b-a)|\bt)(1-P(a+s(b-a)|\bt))}\Pi(ds).
\end{align*}
In particular, we choose the uniform prior, i.e., $ \pi(s) = \indc{s\in[0, 1]} $. 
Next, by assumption, $ p_X(\cdot) $ is positive and continuous and hence
$$
\lim_{(a,b)\to(c,c)}(b-a)p(a+s(b-a)|\bt) = \frac{p_X(c)}{p_X(c)} = 1
$$
and
$$
\quad \lim_{(a,b)\to(c,c)}P(a+s(b-a)|\bt) = s\frac{p_X(c)}{p_X(c)} = s,
$$
where the convergence is uniform. Thus, we assume henceforth that $ \bX $ is uniform. The proof for general $ \bX $ follows similarly.

Let $ D(s) $ denote the divided difference $ \frac{\overline F(a+s(b-a); \bt)-\overline F(c; \bt)}{(a+s(b-a)-c)^R} $ and $ \delta = \frac{c-a}{b-a} $. Then, we can rewrite $ (b-a)^{-R}\overline G(a+s'(b-a); \bt) $ via
$$
D(s')\left(s'-\delta\right)^R-\int_0^1D(s'')\left(s''-\delta\right)^Rds''.
$$
Fix $ s\in [a, b] $ and use a Taylor expansion of $ \overline F(\cdot; \bt) $ about the point $ s = c $ and continuity of $ \overline F^{(R)}(\cdot; \bt) $ at $ s = c $ to argue that
$$
\lim_{(a,b)\to(c,c)}D(s) = \frac{\overline F^{(R)}(c; \bt)}{R!},
$$
where the convergence is uniform and the limit is nonzero by assumption. Without loss of generality, assume $ \overline F^{(R)}(c; \bt) > 0 $. By uniform continuity, there exists $ \xi > 0 $ such that if $ \sqrt{(c-a)^2+(b-c)^2} < \xi $, then
\begin{equation} \label{eq:cont}
\Big|D(s)-\frac{\overline F^{(R)}(c; \bt)}{R!}\Big| < \min\Big\{\frac{\overline F^{(R)}(c; \bt)}{2 R!}, \frac{1}{\delta^2} \Big\}.
\end{equation}
Using the fact that $ s(1-s) \leq 1/4 $ and Jensen's inequality, the expression
$$
\int_0^1\frac{\big(\int_0^{s}D(s')\big(s'-\delta\big)^Rds'-s\int_0^1D(s'')\big(s''-\delta\big)^Rds''\big)^2}{s(1-s)}ds
$$
is at least
$$
4\big(\int_0^1\big(\int_0^{s}D(s')\big(s'-\delta\big)^Rds'-s\int_0^1D(s'')\big(s''-\delta\big)^Rds''\big)ds \big)^2,
$$
which by Fubini's theorem is equal to
\begin{equation} \label{eq:lowerdiv}
4\big(\int_0^1D(s)(s-1/2)\big(s-\delta\big)^Rds\big)^2.
\end{equation}
The leading terms in $ \delta $ in the integrand of \prettyref{eq:lowerdiv} are, up to signs, $ \delta^R\int_0^1D(s)(s-1/2)ds $ and $ R\delta^{R-1}\int_0^1D(s)s(s-1/2)ds $. However, note that
$$
\delta^R\Big|\int_0^1D(s)(s-1/2)ds\Big| = \delta^R\Big|\int_0^1\Big(D(s)-\frac{\overline F^{(R)}(c; \bt)}{R!}\Big)(s-1/2)ds\Big|,
$$
which, per \prettyref{eq:cont}, is at most $ \delta^R\int_0^1|D(s)-\frac{\overline F^{(R)}(c; \bt)}{R!}||s-1/2|ds \leq \delta^{R-2}/4 $. Furthermore, per \prettyref{eq:cont}, $ D(s) $ is between $ \frac{\overline F^{(R)}(c; \bt)}{2R!} $ and $ \frac{3\overline F^{(R)}(c; \bt)}{2R!} $. Hence, we have that $ \int_0^1D(s)s(s-1/2)ds \geq \frac{1}{48}\frac{\overline F^{(R)}(c; \bt)}{R!} > 0 $. This shows that \prettyref{eq:lowerdiv} is $ \Omega(|\delta|^{R-1}) $ as $ |\delta| \rightarrow +\infty $. Hence, we may assume that $ \delta = \frac{c-a}{b-a} $ is bounded in the limit infimum \prettyref{eq:deltalimit}. Using this, we have that $ \liminf_{(a,b)\to(c,c)}\frac{\Delta(s^*; \bt)}{\big(\frac{(b-a)^R\overline F^{(R)}(c; \bt)}{R!}\big)^2} $ is at least
\begin{equation} \label{eq:inf}
\inf_{\delta\in\mathbb{R}}\Big\{\int_0^1 \frac{\big( \int_0^s (s'-\delta)^Rds'-s\int_0^1(s''-\delta)^Rds'' \big)^2}{s(1-s)}ds\Big\}.
\end{equation}

Next, let us evaluate the infimum \prettyref{eq:inf}. In fact, we will show that it is achieved at $ \delta = 1/2 $. 

Towards this end, elementary calculations reveal that the expression inside the infimum \prettyref{eq:inf} is equal to
\begin{equation} \label{eq:express}
\frac{1}{(R+1)^2}\int_0^1 \frac{\big(s(\delta-1)^{R+1}+(1-s)\delta^{R+1}-(\delta-s)^{R+1}\big)^2}{s(1-s)}ds.
\end{equation}

Define $ V_R(s, \delta) = s(\delta-1)^{R}+(1-s)\delta^{R}-(\delta-s)^{R} $, so that the integral \prettyref{eq:express} can be written as
\begin{equation} \label{eq:expressV}
\frac{1}{(R+1)^2}\int_0^1 \frac{V^2_R(\delta, s)}{s(1-s)}ds.
\end{equation}
We first catalogue some facts about $ V_R $.
\begin{enumerate}
\item If $ R $ is even, then $ V_R(\delta, s) \geq 0 $.
\item $ V_R(\delta, s) = (-1)^RV_R(1-\delta, 1-s) $.
\item If $ R $ is even, $ \delta \geq 1/2 $, and $ 0 \leq s \leq 1/2 $, then $ V_R(\delta, s) \geq V_R(\delta, 1-s) $.
\item $ \frac{\partial}{\partial \delta} V_R(\delta, s) = RV_{R-1}(\delta, s) $.
\item $ \frac{\partial}{\partial \delta} V^2_R(\delta, s) = 2RV_R(\delta, s)V_{R-1}(\delta, s) $.
\end{enumerate}
By the second fact and the representation \prettyref{eq:expressV}, it follows that \prettyref{eq:express} is symmetric about $ \delta = 1/2 $. Thus, it can be assumed that $ \delta \geq 1/2 $.

Using the fifth fact, we have that the derivative of \prettyref{eq:express} with respect to $ \delta $ is
\begin{equation} \label{eq:express2}
\frac{2}{R+1}\int_0^1 \frac{V_{R+1}(\delta, s)V_{R}(\delta, s)}{s(1-s)}ds.
\end{equation}
Assume without loss of generality that $ R $ is even. By the first and fourth facts, $ V_{R+1}(\delta, s) $ is increasing in $ \delta $ and $ V_R $ is nonnegative. Hence \prettyref{eq:express2} is at least
\begin{equation} \label{eq:express3}
\frac{2}{R+1}\int_0^1 \frac{V_{R+1}(1/2, s)V_{R}(\delta, s)}{s(1-s)}ds.
\end{equation}
Note that $ V_{R+1}(1/2, s) = (1-2s - (1-2s)^{R+1})/2^{R+1} $ is odd about $ s = 1/2 $ and positive when $ s \leq 1/2 $. Using this observation, we have that \prettyref{eq:express3} is equal to
$$
\frac{2}{R+1}\int_0^{1/2}\frac{V_{R+1}(1/2, s)(V_{R}(\delta, s)-V_{R}(\delta, 1-s))}{s(1-s)}ds, 
$$
and nonnegative by the third fact.
Therefore, \prettyref{eq:express} is increasing when $ \delta \geq 1/2 $ and hence minimized at $ \delta = 1/2 $. Thus, the infimum \prettyref{eq:inf} is equal to
\begin{equation} \label{eq:intdelta}
\int_0^1 \frac{\big( \int_0^s (s'-1/2)^Rds'-s\int_0^1(s''-1/2)^Rds'' \big)^2}{s(1-s)}ds = \int_0^1\Delta(s, [0, 1])ds.
\end{equation}
Routine calculations also reveal that \prettyref{eq:intdelta} is $ \Omega(4^{-R}/R^2) $.

Finally, let us verify all five facts above.
The second, fourth, and fifth facts are straightforward. The first fact holds since $ V_R $ is the difference between a point and a chord that lies above it on the convex function $ s \mapsto s^{R} $. To show the third fact, note that
$$
\frac{\partial^2}{\partial s^2}(V_{R}(\delta, s)-V_{R}(\delta, 1-s)) = R(R-1)((1-\delta-s)^{R-2}-(\delta-s)^{R-2}),
$$
which is negative since $ \delta-s \geq |1-\delta-s| $ for $ \delta \geq 1/2 $ and $ s \leq 1/2 $. Since $ V_{R}(\delta, s)-V_{R}(\delta, 1-s) $ has roots at $ s = 0 $ and $ s = 1/2 $, it follows that $ V_{R}(\delta, s) \geq V_{R}(\delta, 1-s) $ in this regime.
\end{proof}

\begin{remark}
The same argument also works if the uniform prior $ \Pi $ is replaced by any symmetric prior about $ 1/2 $.
\end{remark}

\begin{proof}[Proof of \prettyref{thm:sample}]
Combining \prettyref{eq:length} and \prettyref{eq:close}, it can be deduced via the triangle inequality that
\begin{equation} \label{eq:sineq}
\frac{b-a}{2}\Gamma^2 + a \leq \hat s \leq b - \frac{b-a}{2}\Gamma^2,
\end{equation}
with probability at least $ 1-\delta $. Next, by \prettyref{eq:p} and continuity of $ \bX $, we have that
$$
p = \prob{X \leq Q(p) \mid \bX \in \bt}.
$$
We use this identity to derive lower bounds on $ P(\bt_L) $ and $ P(\bt_R) $ over split points $ s \in [a+c,b-c] $ with $ c = \frac{b-a}{2}\Gamma^2 $. Now, from \prettyref{eq:sineq}, we have that
\begin{align*}
s & \geq a + \frac{b-a}{2}\Gamma^2 \\
& = a + (b-a)q_2(p_L) \\
& \geq Q(p_L),
\end{align*}
and hence $ P(\bt_L) = \prob{X \leq s \mid \bX \in \bt} \geq \prob{X \leq Q(p_L) \mid \bX \in \bt} = p_L $. Furthermore,
\begin{align*}
s & \leq b - \frac{b-a}{2}\Gamma^2 \\
& = b - (b-a)(1-q_1(1-p_R)) \\
& \leq Q(1-p_R),
\end{align*}
and hence $ P(\bt_R) = \prob{X > s \mid \bX \in \bt} \geq \prob{X > Q(1-p_R) \mid \bX \in \bt} = p_R $.
We deduce that if $ s\in [a+c, b-c ] $, then 
\begin{equation} \label{eq:problower}
P(\bt_L) \geq p_L \quad \text{and} \quad P(\bt_R) \geq p_R.
\end{equation}

By the assumption that $ \bt $ is independent of the training data $ \calD_n $ and \prettyref{lmm:cond} in \prettyref{app:proofs}, given $ N(\bt) $, $ N(\bt) \widehat P(\bt_L) $ is distributed $ \Binom(N(\bt) , P(\bt_L)) $ and $ N(\bt) \widehat P(\bt_R) $ is distributed $ \Binom(N(\bt) , P(\bt_R)) $ (the features of $ \bX $ need not be independent for this to hold). If $ W \sim X \mid \bX \in \bt $, then given $ N(\bt) $, $ \widehat P(\bt_L) $ has the same distribution as the empirical distribution function of a sample $ \{W_i\}_{i=1}^{N(\bt)} $ from the distribution of $ W $. Hence by the Dvoretzky-Kiefer-Wolfowitz inequality \citep{massart1990tight},
it can be shown that with probability at least $ 1-2\exp\{-2N\epsilon^2 p^2\} $, uniformly over all split points $ s \in [a+c,b-c] $,
$$
\widehat P(\bt_L) \geq (1-\epsilon) P(\bt_L) \quad \text{and} \quad \widehat P(\bt_R) \geq (1-\epsilon) P(\bt_R),
$$
where $ \epsilon \in (0, 1/2) $. In particular, for $ \bt_L = \hat \bt_L $ and $ \bt_R = \hat \bt_R $, with probability at least $$ 1-\delta - 2\exp\{-Np^2/2\}, $$
\begin{equation} \label{eq:frac}
\widehat P( \hat \bt_L) \geq \frac{1}{2}p_L \quad \text{and} \quad \widehat P( \hat \bt_R) \geq \frac{1}{2}p_R.
\end{equation}
Since $ \widehat P(\bt_L) = N(\bt_L)/N(\bt) $ and $ \widehat P(\bt_R) = N(\bt_R)/N(\bt) $, the quantities in \prettyref{eq:frac} are interpretable as lower bounds on the fraction of data points in the optimal daughter node ($\hat \bt_L$ and $\hat \bt_R$) that are contained in the parent node $ \bt $. A consequence of this analysis is that if $ \Lambda $ is not too small, this fraction is non-negligible. 
\end{proof}

\begin{proof}[Proof of \prettyref{thm:sampledata}]
The proof is similar to \prettyref{thm:sample}, although we must use a stronger concentration inequality to control an empirical process over a collection of nodes. That is, the proof is based on simultaneous control of the empirical processes
$$
\left\{\frac{1}{n}\sum_{i=1}^n \indc{\bX_i \in \bt} :  \bt\in\calT \right\} \quad \text{and} \quad \left\{\frac{1}{n}\sum_{i=1}^n \indc{\bX_i \in \bt, \; X \leq s} : \bt\in\calT, \; s \in [0, 1] \right\},
$$
where $ \calT $ is the collection of all nodes in $ d $ dimensions.
To this end, define $ P(\bt) = \prob{\bX \in \bt} $ and $ P(s, \bt) = \prob{\bX \in \bt, \; X \leq s} $ so that $ P(\bt_L) = P(s, \bt)/P(\bt) $. We make use of the inequality
\begin{equation} \label{eq:triangle}
| \widehat P(\bt_L) - P(\bt_L)| \leq \frac{| N(\bt_L)/n-P(s, \bt_L)|}{ N(\bt) /n} + \frac{| N(\bt) /n-P(\bt)|}{ N(\bt) /n},
\end{equation}
which can be deduced from the triangle inequality. We would like to obtain an upper bound on the probability that
\begin{equation} \label{eq:event}
\widehat P( \hat \bt_L) < (1-\epsilon)p_L \quad \text{or} \quad \widehat P( \hat \bt_R) < (1-\epsilon)p_R,
\end{equation}
for $ \epsilon = 1/2 $.
 On an event with probability at least $ 1-\delta $, it holds that $ N(\bt) \geq n \alpha $ and $ P( \hat \bt_L) \geq p_L $ and $ P( \hat \bt_R) \geq p_R $ (using \prettyref{eq:problower} from the proof of \prettyref{thm:sample}), and hence \prettyref{eq:event} is contained in the event
$$
| \widehat P( \hat \bt_L) - P( \hat \bt_L)| \geq \epsilon p,
$$
with probability at least $ 1-\delta $, where $ p = \min\{ p_L, p_R \} $. Using \prettyref{eq:triangle}, this event is also contained in
$$
|N(\hat \bt_L)/n-P(\hat s, \bt)| \geq \epsilon \alpha p/2 \quad \text{or} \quad |N(\bt) /n-P(\bt)| \geq \epsilon \alpha p/2,
$$
with probability at least $ 1-\delta $.
By \prettyref{lmm:prob}, each event above has probability at most $ 8(n^{2d}+1)\exp\{-n\epsilon^2 \alpha^2 p^2/128\} $ for a total probability of at most $ 16(n^{2d}+1)\exp\{-n\epsilon^2 \alpha^2 p^2/128\} $. The proof is completed by choosing $ \epsilon = 1/2 $.
\end{proof}

\begin{lemma} \label{lmm:prob}
Let $ \calT $ be the set of all nodes in $ \mathbb{R}^{d} $. Let $ \bX $ and $ \{ \bX_i \}_{i=1}^n $ be i.i.d. random vectors in $ \mathbb{R}^{d} $. Then for all $ \epsilon > 0 $,
$$
\prob{\sup_{\bt \in \calT}\left|\frac{1}{n}\sum_{i=1}^n \indc{\bX_i \in \bt} - \prob{\bX \in \bt} \right| > \epsilon } \leq 8s(\calT, n)\exp\{-n\epsilon^2/32\},
$$
where $ s(\calT, n) \leq n^{2d}+1 $.
\end{lemma}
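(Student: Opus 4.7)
The plan is to recognize \prettyref{lmm:prob} as a direct instance of the classical Vapnik--Chervonenkis uniform deviation inequality, applied to the VC class $\calT$ of axis-aligned rectangles in $\mathbb{R}^{d}$. The argument splits into two well-separated pieces: (a) invoke an off-the-shelf VC concentration bound, and (b) bound the shattering coefficient $s(\calT, n)$ by a simple combinatorial product argument.

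For (a), I would appeal to the standard VC inequality (e.g., Theorem~12.5 of Devroye--Gy\"orfi--Lugosi, 1996): for any class $\calA$ of measurable sets in $\mathbb{R}^{d}$,
$$
\prob{\sup_{A \in \calA}\left|\frac{1}{n}\sum_{i=1}^n \indc{\bX_i \in A} - \prob{\bX \in A}\right| > \epsilon} \leq 8\, s(\calA, n)\exp\{-n\epsilon^2/32\},
$$
with exactly the constants $8$ and $1/32$ that appear in the lemma statement. This already delivers the inequality modulo the shattering coefficient bound, so no fresh concentration-of-measure work is needed.

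For (b), the key observation is that any rectangle $\bt = \prod_{j=1}^d [a_j, b_j]$ acts on the $n$ sample points as an intersection over coordinates: $\{i : \bX_i \in \bt\} = \bigcap_{j=1}^d \{i : X_{i,j} \in [a_j, b_j]\}$. Hence the number of distinct dichotomies that $\calT$ can induce on $n$ points is bounded by the product over the $d$ coordinates of the number of one-dimensional interval-dichotomies induced on the projected samples. In one dimension, a closed interval generates at most $\binom{n+1}{2}$ nonempty dichotomies on $n$ points, because the dichotomy depends only on which of the at most $n+1$ gaps between consecutive projected points each endpoint falls into; this is at most $n^{2}$ for $n \geq 1$. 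Multiplying across the $d$ coordinates and accounting for the empty dichotomy once, separately, gives $s(\calT, n) \leq n^{2d} + 1$.

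The main obstacle, if it can even be called one, is purely cosmetic bookkeeping: matching the exact constants in the VC inequality to the statement and checking that the product-of-intervals counting tightens to $n^{2d}+1$ rather than the looser $(n^{2}+1)^{d}$. No new ideas are required; both pieces are textbook results, and the lemma follows by substituting the shattering coefficient bound from (b) into the inequality of (a).
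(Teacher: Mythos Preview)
Your proposal is correct and follows essentially the same route as the paper: invoke the Vapnik--Chervonenkis uniform deviation inequality and then bound the growth function of axis-aligned rectangles. The paper's one-line proof cites \citep{vapnik1971uniform} together with the fact that the VC dimension of $\calT$ is $2d$ (so that Sauer's lemma controls $s(\calT,n)$), whereas you bound $s(\calT,n)$ directly via the coordinate-wise product argument; both are standard and lead to the stated $n^{2d}+1$.
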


\begin{proof}[Proof of \prettyref{lmm:prob}]
This follows from \citep{vapnik1971uniform} and the fact that the VC-dimension of $ \calT $ is $ 2d $.
\end{proof}

\subsection{Proofs of example regression functions}
In this subsection, we give proofs of \prettyref{ex:poly}, \prettyref{ex:sine}, and \prettyref{ex:friedman} from \prettyref{sec:mdilower} and \prettyref{ex:logistic} from \prettyref{sec:classification}.
\begin{proof}[Proof of \prettyref{ex:poly}]
Without loss of generality, we will prove the theorem when $ f(x) = x^k $. The objective function $ \Delta(s; \bt) $ can be expressed as
\begin{align*}
\sqrt{\Delta(s; \bt)} & = \frac{1}{(k+1)\sqrt{(s-a)(b-s)}}\left[ \frac{b-s}{b-a}a^{k+1} + \frac{s-a}{b-a}b^{k+1} - s^{k+1} \right].
\end{align*}
Note that $ \sqrt{\Delta(s^*; \bt)} $ is at least
$$
\sqrt{\Delta((a+b)/2; \bt)} = \frac{b-a}{k+1}\sum_{j=0}^{k-1} (1-2^{-(j+1)})a^{k-1-j}(b-a)^j \binom{k+1}{j+2},
$$
and the derivative of the partial dependence function at $ s^* $ is
$$
\overline F'(s^*; \bt) \leq \overline F'(b; \bt) = kb^{k-1}.
$$
Thus, by \prettyref{eq:lambdaloweruniversal},
\begin{align*}
\lambda^{3/2} & \geq \frac{2\sqrt{\Delta((a+b)/2; \bt)}}{(b-a)|\overline F'(b; \bt)|} \\
& = \frac{2\sum_{j=0}^{k-1} (1-2^{-(j+1)})a^{k-1-j}(b-a)^j \binom{k+1}{j+2}}{k(k+1)b^{k-1}} \\
& \geq \frac{\sum_{j=0}^{k-1}(a/b)^{k-1-j}(1-a/b)^j \binom{k-1}{j}}{k(k+1)} \\
& = \frac{1}{k(k+1)}.
\end{align*}
The penultimate line above follows from the inequality $ \binom{k+1}{j+2} = \frac{k(k+1)}{(j+1)(j+2)}\binom{k-1}{j} \geq \binom{k-1}{j} $ for $ j = 0, 1, \dots, k-1 $ and the binomial theorem.
Since $ a < b $ is arbitrary, it follows that $ \Lambda \geq \left(\frac{1}{k(k+1)}\right)^{2/3} $.
\end{proof}

\begin{remark} \label{rmk:poly}
If $ a = 0 $ and $ b = 1 $, note that if $ s = k/(k+1) $, then $ \sqrt{\Delta(s; \bt)} \sim (1-e^{-1})/\sqrt{k} $ as $ k \rightarrow + \infty $. Since $ \sup_{s\in[0,1]}|\overline F'(s; \bt)| \leq k $, this shows that $ \lambda([0,1]) \geq (1-e^{-1})^{2/3}/k $. Although we have not given a formal proof, this calculation provides evidence that $ \Lambda = \Omega(1/k) $.
\end{remark}




\begin{proof}[Proof of \prettyref{ex:sine}]
Without loss of generality, we will prove the theorem when $ f(x) = \sin(2\pi mx) $. In this case, $ \sqrt{\Delta(s; \bt)} $ is equal to
\begin{align*}
\frac{1}{2\pi m \sqrt{(s-a)(b-s)}}|\underbrace{\cos(2\pi m s) - \frac{b-s}{b-a}\cos(2\pi m a) - \frac{s-a}{b-a}\cos(2\pi m b)}_{\text{Jensen gap}}|.
\end{align*}
Note the term which is equal to the difference of $ \cos(2\pi m s) $ and the line segment between $ \cos(2\pi m a) $ and $ \cos(2\pi m b) $ at $ s $, or the so-called ``Jensen gap''.\footnote{Typically, this terminology is reserved for convex or concave functions, but we nevertheless use it here.} A major task of the proof is in choosing a suitable $ s $ so that the Jensen gap is large.
Next, note that
\begin{equation} \label{eq:singap}
\sup_{s\in[a,b]}|\overline G(s; \bt)| \leq \min\left\{1, 2\pi m\int_{a}^b |\cos(2\pi m s)|ds \right\},
\end{equation}
which combines a pointwise inequality of one and the total variation inequality \prettyref{eq:TV} from \prettyref{lmm:variation}.

We break the proof into two parts, depending on whether $ a-b \leq 1/(4m) $ or $ a-b > 1/(4m) $. 

\paragraph{{\bf Case I: $ a-b \leq 1/(4m) $}} Suppose that $ b-a \leq 1/(4m) $ and that $ \sin(2\pi m s) $ is monotone on $ [a, b] $. Consider $ \sqrt{\Delta(s; \bt)} $ at $ s = (a+b)/2 $. Then it can be shown that
\begin{equation} \label{eq:sinmid}
\sqrt{\Delta((a+b)/2; \bt)} = \frac{2\sin^2(\pi m(b-a)/2)|\cos(\pi m(a+b))|}{\pi m(b-a)}.
\end{equation}
Since $ \sin(2\pi m s) $ is monotone on $ [a,b] $, it follows that its total variation is equal to $ |\sin(2\pi m b)- \sin(2\pi m a)| $ and hence
\begin{align}
\sup_{s\in[a,b]}|\overline G(s; \bt)| & \leq |\sin(2\pi m b) - \sin(2 \pi m a)| \nonumber \\ & = 2|\sin(\pi m(b-a) )\cos(\pi m(a+b))|. \label{eq:sinvar}
\end{align}
Combining the estimates \prettyref{eq:sinmid} and \prettyref{eq:sinvar} and using $ |\sin(z)| \leq 2|\sin(z/2)| $ for $ z\in [0, 2\pi] $ and $ \sin(z) \geq (2\sqrt{2}/\pi)z $ for $ z\in [0, \pi/4] $, we conclude from \prettyref{eq:lambdaosc} that
\begin{align*}
\lambda^{1/2} & \geq \frac{\frac{2\sin^2(\pi m(b-a)/2)|\cos(\pi m(a+b))|}{\pi m(b-a)}}{2|\sin(\pi m(b-a) )\cos(\pi m(a+b))|} \\
& \geq \frac{|\sin(\pi m(b-a)/2 )|}{2\pi m(b-a)} \\
& \geq \frac{\sqrt{2}}{2\pi}.
\end{align*}
Thus, $ \lambda \geq \frac{1}{2\pi^2} $.
Next, suppose that $ \sin(2 \pi m s) $ is neither increasing or decreasing on $ [a, b] $. This means that for some positive integer $ k $, the point $ s' = (2k-1)/(4m) $ belongs to $ [a, b] $. Thus, there are choices $ s \in [a,b] $ such that
$$
|\cos(2\pi m s) - \frac{b-s}{b-a}\cos(2\pi m a) - \frac{s-a}{b-a}\cos(2\pi m b)| \geq Cm^3(b-a)^3.
$$ 
Since $ \sqrt{(s-a)(b-s)} \leq (b-a)/2 $, it follows from \prettyref{eq:lambdaosc} that $ \sqrt{\Delta(s; \bt)} \geq (C/\pi)m^2(b-a)^2 $.
Moreover, using the total variation bound in \prettyref{eq:singap},
\begin{align*}
\sup_{s\in[a,b]}|\overline G(s; \bt)| & \leq |2 - \sin(2\pi m b) - \sin(2 \pi m a)| \\ & = |[\sin(2\pi m s') - \sin(2\pi m b)] + [\sin(2\pi m s') - \sin(2 \pi m a)]| \\ & \leq 2\pi^2m^2[(b-s')^2+(s'-a)^2] \\
& \leq 2\pi^2m^2(b-a)^2,
\end{align*}
where the penultimate line follows from a Taylor expansion argument. Thus, by \prettyref{eq:lambdaosc}, $ \lambda^{1/2} \geq \frac{(C/\pi)m^2(b-a)^2}{2\pi^2m^2(b-a)^2} = C/(2\pi^3) $ and hence $ \lambda \geq C^2/(4\pi^6) $.

\paragraph{{\bf Case II: $ a-b > 1/(4m) $}} Next, we investigate when $ b-a > 1/(4m) $. In this regime, $ \cos(2\pi m s) $ is allowed to make at least a quarter period on $ [a, b] $. This means that there exists a universal constant $ C>0 $ and $ s \in [a, b] $ such that the Jensen gap
$$
|\cos(2\pi m s) - \frac{b-s}{b-a}\cos(2\pi m a) - \frac{s-a}{b-a}\cos(2\pi m b)| \geq C,
$$ 
and hence $ \sqrt{\Delta(s; \bt)} \geq C/m $. Furthermore, $ \sup_{s\in[a,b]}|\overline F'(s; \bt)| \leq 2\pi m $. Thus, by \prettyref{eq:lambdaloweruniversal}, $ \lambda \geq (C^2/(\pi^2m^4))^{1/3} $.
\end{proof}

\begin{remark}
If $ a = 0 $ and $ b = 1 $, and $ s = 1/m $ then $ \sqrt{\Delta(s; \bt)} = \frac{1}{2\pi}\sqrt{\frac{1-1/m}{m}} $. Furthermore, $\sup_{s\in[0,1]}|\overline G(s; \bt)| \leq 1 $. Together these estimates imply that $ \lambda([0,1]) \geq \frac{1-1/m}{4\pi^2m} $. As with the case of polynomials (c.f., \prettyref{rmk:poly}), it is likely that the bound is improvable to $ \Lambda = \Omega(1/m) $, and we leave its proof as an open question for future investigation.
\end{remark}

\begin{lemma} \label{lmm:square}
Suppose $ f(x) = (x-1/2)^2 $. Then $ \Lambda \geq (\frac{1}{12})^{2/3} $.
\end{lemma}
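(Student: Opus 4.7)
The plan is to apply the second-order lower bound \prettyref{eq:lambdaloweruniversal} from \prettyref{thm:second} node-by-node. Write $\bt = [a,b] \subset [0,1]$, $L = b-a$, $c = 1/2$, and $m = (a+b)/2 - c$; by symmetry of $f$ about $c$ we may assume $m \ge 0$. A direct integration (using $A^3 - B^3 = (A-B)(A^2+AB+B^2)$ on each side of $s$) yields the closed form
\[
\Delta(s;\bt) \;=\; \frac{(s-a)(b-s)(s+a+b-3c)^2}{9},
\]
and, for uniform $\bX$, $p(\bt_L^*) = 1/L$ and $|\overline F'(s)| = 2|s-c| \le 2(m + L/2)$.

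The central step is the choice of evaluation point for the lower bound on $\Delta(s^*;\bt)$. The midpoint choice $s' = (a+b)/2$ used in \prettyref{ex:poly} yields $\Delta((a+b)/2;\bt) = L^2 m^2/4$, which vanishes precisely when $m=0$; this is the genuine obstacle, reflecting the fact that $f'(c)=0$. To avoid it I take $s' = (a+b)/2 + L/(2\sqrt{2})$, the optimizer on the symmetric interval $[c-L/2,\,c+L/2]$. A short calculation gives
\[
\Delta(s';\bt) \;=\; \frac{L^4}{576} + \frac{L^3 m}{24\sqrt{2}} + \frac{L^2 m^2}{8},
\]
whose leading constant $L^4/576$ survives at $m=0$. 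Substituting into \prettyref{eq:lambdaloweruniversal} with $r = m/L \ge 0$ produces
\[
\lambda(\bt)^{3} \;\ge\; h(r) \;:=\; \frac{r^2/8 + r/(24\sqrt{2}) + 1/576}{(r+1/2)^2}.
\]

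The remainder is a one-variable calculus fact: $h$ is non-decreasing on $[0,\infty)$. Differentiating, the numerator of $h'$ simplifies to $r\bigl(\tfrac{1}{8}-\tfrac{1}{24\sqrt{2}}\bigr) + \bigl(\tfrac{1}{48\sqrt{2}}-\tfrac{1}{288}\bigr)$, and both constants are strictly positive since $3\sqrt{2}>1$ and $6>\sqrt{2}$. Hence $h(r) \ge h(0) = (1/576)/(1/4) = 1/144$, giving $\lambda(\bt) \ge (1/144)^{1/3} = (1/12)^{2/3}$ uniformly over $\bt$, and therefore $\Lambda \ge (1/12)^{2/3}$.

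The subtle point to flag is that the cross term $L^3 m/(24\sqrt{2})$ must be retained: without it, $(r^2/8 + 1/576)/(r+1/2)^2$ has an interior minimum at $r = 1/36$ of value $1/152 < 1/144$, and the bound fails. Keeping the cross term restores monotonicity and pushes the worst case onto the symmetric limit $r \to 0$, which is exactly where the claimed constant $(1/12)^{2/3}$ is attained.
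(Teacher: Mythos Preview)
Your proof is correct and uses essentially the same approach as the paper: both apply the second-order bound \prettyref{eq:lambdaloweruniversal} with the evaluation point $s' = (a+b)/2 + L/(2\sqrt{2})$ (the paper's $\delta = 1/2 + 1/(2\sqrt{2})$). Your execution is in fact cleaner: the paper splits into the cases $c \notin [a,b]$ versus $c \in [a,b]$ (and then $a+b \gtrless 1$), whereas you use the $x \mapsto 1-x$ symmetry to reduce to $m \ge 0$, keep the single evaluation point throughout, and replace the case analysis by the monotonicity of $h(r)$ on $[0,\infty)$, which absorbs the ``monotone'' region $a \ge 1/2$ into the same calculation.
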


\begin{proof}
In this case,
$$
\sqrt{\Delta(s; \bt)} = \frac{1}{6}\sqrt{(s-a)(b-s)}|2a+2b+2s-3|.
$$
The derivative of the partial dependence function is $ \overline F'(s; \bt) = 2s-1 $. If $ b \leq 1/2 $ or $ a \geq 1/2 $, then a lower bound on $ \lambda $ is easy to state. In this case, $ |\overline F'(s; \bt)| \leq |a+b-1| $.

Choosing $ s = (a+b)/2 $, we have from \prettyref{eq:lambdaloweruniversal} that
$$
\lambda^{3/2} \geq \frac{2\sqrt{\Delta((a+b)/2; \bt)}}{(b-a)\max_{s\in[a,b]}|\overline F'(s; \bt)|} \geq \frac{\frac{1}{2}(b-a)|a+b-1|}{(b-a)|a+b-1|} = 1/2.
$$
If $ b > 1/2 $ or $ a < 1/2 $, then $ |\overline F'(s; \bt)| \leq 2\max\{1/2-a, b-1/2\} $ and there are two cases to consider for obtaining lower bounds on $ \Delta(s; \bt) $.
\paragraph{{\bf Case I: $ a + b \geq 1 $}}
Choose $ s = a + \delta(b-a) $, where $ \delta > 1/2 $. Then,
$$
\sqrt{\Delta(s; \bt)} \geq \frac{b-a}{3}\sqrt{\delta(1-\delta)}(2\delta-1)(b-1/2).
$$
Let $ \delta = \frac{1}{2} + \frac{1}{2\sqrt{2}} $. Then, $ \sqrt{\Delta(s; \bt)} \geq \frac{b-a}{12}(b-1/2) $, and hence by \prettyref{eq:lambdaloweruniversal}, $ \lambda \geq (\frac{1}{12})^{2/3} $.
\paragraph{{\bf Case II: $ a + b < 1 $}} Choose $ s = a + \delta(b-a) $, where $ \delta < 1/2 $. Then,
$$
\sqrt{\Delta(s; \bt)} \geq \frac{b-a}{3}\sqrt{\delta(1-\delta)}(1-2\delta)(1/2-a).
$$
Choosing $ \delta = \frac{1}{2} - \frac{1}{2\sqrt{2}} $ yields $ \sqrt{\Delta(s; \bt)} \geq \frac{b-a}{12}(1/2-a) $, and hence \prettyref{eq:lambdaloweruniversal}, $ \lambda \geq (\frac{1}{12})^{2/3} $.
\end{proof}




\begin{lemma} \label{lmm:twodim}
Suppose $ f(\bx) = \sin(\pi x_1 x_2) $. Then $ \Lambda \geq \frac{1}{2\pi^2} $.
\end{lemma}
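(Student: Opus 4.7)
By the symmetry of $f(\bx)=\sin(\pi x_1 x_2)$ in its two coordinates, it suffices to prove $\lambda_1(\bt) \geq 1/(2\pi^2)$ for every node $\bt=[a_1,b_1]\times[a_2,b_2]$. Set $\alpha=\pi(a_2+b_2)/2 \in [0,\pi]$ and $\beta=\pi(b_2-a_2)/2 \in [0,\pi/2]$. Since $\bX$ is uniform with independent coordinates, Fubini gives the closed form
\begin{equation*}
\overline F_1(x_1;\bt) = \Expect[\sin(\pi x_1 X_2)\mid X_2\in[a_2,b_2]] = \frac{\cos(\pi a_2 x_1)-\cos(\pi b_2 x_1)}{\pi(b_2-a_2)x_1} = \frac{\sin(\alpha x_1)\sin(\beta x_1)}{\beta x_1},
\end{equation*}
and differentiating inside the integral with $|X_2|, |\cos|\leq 1$ yields the uniform bound $|\overline F_1'(x_1;\bt)|\leq \pi$.

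Plugging $p_1(s^*\mid\bt)=(b_1-a_1)^{-1}$ (uniformity) into the second-order bound \prettyref{thm:second} reduces the problem to showing $\Delta(1,s^*;\bt)\geq (b_1-a_1)^2/(32\pi^4)$, since then $\lambda_1(\bt)\geq (4\Delta(1,s^*;\bt)/((b_1-a_1)^2\pi^2))^{1/3}\geq 1/(2\pi^2)$.

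For this lower bound on $\Delta$, the plan is to invoke the ``maximum-dominates-average'' device used throughout \prettyref{sec:results} (\eg in \prettyref{ex:sine} and at the end of \prettyref{sec:mdilower}): with the uniform prior on $[a_1,b_1]$ and the bound $(s-a_1)(b_1-s)\leq(b_1-a_1)^2/4$, one gets
\begin{equation*}
\Delta(1,s^*;\bt) \geq \frac{4}{(b_1-a_1)^3}\int_{a_1}^{b_1}\Big(\int_{a_1}^s \overline G_1(s';\bt)\,ds'\Big)^2 ds,
\end{equation*}
and by another application of Fubini the inner primitive becomes the elementary trigonometric quantity $(b_2-a_2)^{-1}\int_{a_2}^{b_2}(\pi x_2)^{-1}[\cos(\pi a_1 x_2)-\cos(\pi s x_2)]\,dx_2$ minus an affine term, reducing the task to a one-dimensional trigonometric integral estimate.

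The main obstacle is a case split---modelled on the proof of \prettyref{ex:sine}---to reach the precise constant $1/(2\pi^2)$. When $\alpha(b_1-a_1)$ is small, $\overline F_1(\cdot;\bt)$ is nearly affine with slope of order $\alpha$, a midpoint Taylor expansion produces $\Delta(1,s^*;\bt)$ of order $\alpha^2(b_1-a_1)^4$, and plugging into \prettyref{thm:second} gives a universal constant comfortably larger than $1/(2\pi^2)$ (\eg the linear-regime bound $1/16$). When $\alpha(b_1-a_1)$ is larger, the oscillation of $\sin(\alpha x_1)$ dominates and one must adapt the monotone-$\sin$ Jensen-gap estimate from \prettyref{ex:sine} (which is precisely the calculation that produced $1/(2\pi^2)$ in that example) to the product form $\sin(\alpha x_1)\sin(\beta x_1)/(\beta x_1)$. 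The factor $\sin(\beta x_1)/(\beta x_1)$ is uniformly bounded in $[2/\pi,1]$ on $[0,1]$ (since $\beta\leq \pi/2$), so it only perturbs constants; the sharp value $1/(2\pi^2)$ is inherited from the $\alpha\to\pi$ boundary of the sine calculation, which is the tight case.
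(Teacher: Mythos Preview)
Your route differs from the paper's in a way that creates a real gap. The paper does \emph{not} use the second-order bound \prettyref{thm:second}; it uses the first-order oscillation bound \prettyref{eq:lambdaosc} from \prettyref{thm:first}. Concretely, it evaluates $\sqrt{\Delta}$ at the midpoint $s=(a_1+b_1)/2$ and bounds $\sup_{s}|\overline G_1(s;\bt)|$ by the total variation of $\overline F_1(\cdot;\bt)$. Both the numerator and the denominator then factor through the same sine expressions in the common argument $\pi(a_2+b_2)(b_1-a_1)$, so their ratio is a dimensionless trigonometric quantity bounded below by $\sqrt{2}/(2\pi)$ uniformly in $\bt$, and $\lambda_1(\bt)\ge 1/(2\pi^2)$ follows with no case split. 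This is exactly the Case~I computation from \prettyref{ex:sine} transported to the product $\sin(\alpha x_1)\cdot \sin(\beta x_1)/(\beta x_1)$.

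Your reduction via \prettyref{thm:second} with the uniform bound $|\overline F_1'|\le\pi$ is not valid as stated: the inequality $\Delta(1,s^*;\bt)\ge (b_1-a_1)^2/(32\pi^4)$ fails. Take $a_2=0$ and let $b_2\downarrow 0$ with $[a_1,b_1]$ fixed; then $\alpha\to 0$, $\overline F_1(x_1;\bt)\to 0$ linearly in $\alpha$, and $\Delta(1,s^*;\bt)\sim \alpha^2(b_1-a_1)^2/16\to 0$ (note: order $(b_1-a_1)^2$, not $(b_1-a_1)^4$). The bound $|\overline F_1'|\le\pi$ throws away a factor of $\alpha^2$ that is essential here. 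Your case split tacitly abandons this reduction in the small-$\alpha$ regime, but then you need a pointwise bound $|\overline F_1'(s^*;\bt)|\lesssim\alpha$ (not $\pi$), and the large-$\alpha$ regime is only gestured at. Finally, the ``monotone-$\sin$ Jensen-gap estimate from \prettyref{ex:sine}'' you plan to import is itself a \prettyref{thm:first} calculation (a ratio of $\sqrt{\Delta}$ to oscillation), so carrying it out would already put you on the paper's track rather than completing the \prettyref{thm:second} argument.
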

\begin{proof}
First, note that $ \expect{Y \mid \bX \in \bt,\; X_1 = s} $ is equal to
\begin{align*} -\frac{\cos(\pi b_2s)-\cos(\pi a_2s)}{(b_2-a_2)\pi s} & = \frac{\sin(\pi(a_2+b_2)s/2)\sin(\pi(b_2-a_2)s/2)}{(b_2-a_2)\pi s/2} \\ & \approx \sin(\pi(a_2+b_2)s/2). \end{align*}
In this case,
$$
 \sqrt{\Delta((a_1+b_1)/2; \bt)} = \frac{\sin^2(\pi(a_2+b_2)(b_1-a_1)/8)\cos(\pi(a_2+b_2)(b_1+a_1)/4)}{\pi(a_2+b_2)(b_1-a_1)/8}.
$$

Using the total variation bound \prettyref{eq:singap}, one can easily show that 
\begin{align*} \sup_{s\in[a_1,b_1]}|\overline G(s; \bt)| & \leq 2\sin(\pi(a_2+b_2)(b_1-a_1)/4)\cos(\pi(a_2+b_2)(b_1+a_1)/4)
 \\ & \leq 4\sin(\pi(a_2+b_2)(b_1-a_1)/8)\cos(\pi(a_2+b_2)(b_1+a_1)/4),
\end{align*}
where the last line follows from $ \sin(z) \leq 2\sin(z/2) $ for $ z\in [0, \pi] $. Using $ \sin(z) \geq (2\sqrt{2}/\pi)z $ for $ z\in [0, \pi/4] $, we conclude from \prettyref{eq:lambdaosc} that
$$
\lambda^{1/2} \geq \frac{\sqrt{\Delta((a_1+b_1)/2; \bt)}}{\sup_{s\in[a_1,b_1]}|\overline G(s; \bt)|} \geq \frac{1}{4}\frac{\sin(\pi(a_2+b_2)(b_1-a_1)/8)}{\pi(a_2+b_2)(b_1-a_1)/8} \geq \frac{\sqrt{2}}{2\pi}.
$$
This implies that $ \Lambda \geq \frac{1}{2\pi^2} $.
\end{proof}

\begin{proof}[Proof of \prettyref{ex:logistic}]
Consider a generic coefficient of $ \bbeta $, say $ \beta $. First, note that we may assume without loss of generality that $ \beta > 0 $ (since otherwise, we can consider $ \prob{Y=-1\mid \bX = \bx} = \frac{1}{1+e^{\beta_0+\langle \bx, \bbeta \rangle}} $).
In this case, $ \sqrt{\Delta(s^*; \bt)} $ is at least
\begin{align}
\sqrt{\Delta((a+b)/2; \bt)} & = \frac{1}{\beta}\int_{0}^{\beta}[\overline F(a+x(b-a)/(2\beta); \bt)- \overline F(a+x(b-a)/\beta; \bt)]dx\nonumber \\
& \geq \frac{1}{\beta^2}\int_{0}^{\beta}\overline F'(a+x(b-a)/(2\beta); \bt)(1-e^{-x(b-a)/2})dx \nonumber \\
& \geq \frac{\overline F'(a; \bt)}{\beta^2}\int_{0}^{\beta}e^{-x(b-a)/2}(1-e^{-x(b-a)/2})dx, \label{eq:logistic1}
\end{align}
and
\begin{equation} \label{eq:logistic2}
\overline F'(s^*; \bt) \leq \overline F'(a; \bt).
\end{equation}
The above inequality \prettyref{eq:logistic2} is due to the fact that $ \overline F(\cdot; \bt) $ is decreasing.
Combining inequalities \prettyref{eq:logistic1} and \prettyref{eq:logistic2} with \prettyref{eq:lambdaloweruniversal}, we have that
\begin{equation} \label{eq:logistic3}
\lambda^{3/2} \geq \frac{2\sqrt{\Delta((a+b)/2; \bt)}}{(b-a)|\overline F'(a; \bt)|} \geq \frac{2}{(b-a)\beta^2}\int_0^{\beta}e^{-x(b-a)/2}(1-e^{-x(b-a)/2})dx.
\end{equation}
We now consider two cases for evaluating the integral in \prettyref{eq:logistic3}.
\paragraph{{\bf Case I: $ \beta(b-a) \geq 2\log 2 $}}
In this regime,
\begin{align*}
\int_0^{\beta}e^{-x(b-a)/2}(1-e^{-x(b-a)/2})dx & \geq \frac{1}{2}\int_0^{\beta}e^{-x(b-a)/2}dx \\
& = \frac{(1-e^{-\beta(b-a)/2})}{b-a} \\
& \geq 1/2.
\end{align*}
Thus, $ \lambda \geq \beta^{-4/3} $.

\paragraph{{\bf Case II: $ \beta(b-a) < 2\log 2 $}}
Alternatively, in this regime,
\begin{align*}
\int_0^{\beta}e^{-x(b-a)/2}(1-e^{-x(b-a)/2})dx & \geq \frac{1}{2}\int_0^{\beta}(1-e^{-x(b-a)/2})dx \\
& = \frac{1}{2}(\beta-\frac{2}{(b-a)}(1-e^{-\beta(b-a)/2})) \\
& \geq \frac{\beta^2(b-a)}{16}.
\end{align*}
Thus, $ \lambda \geq (1/8)^{2/3} $. In summary, $ \lambda \geq \min\{ \beta^{-4/3}, (1/8)^{2/3} \} $. Since $ a < b $ was arbitrary, this shows that $ \Lambda \geq \min\{ \beta^{-4/3}, (1/8)^{2/3} \} $.
\end{proof}

\subsection{Proof of example distributions} \label{app:exampledist}
Below we list the examples from \prettyref{sec:distance} and give their proofs.
\begin{enumerate}[(a)]
\item[(b)] If each $ X $ is i.i.d. $ \Beta(2, 1) $, then $ \prob{X\leq s \mid \bX \in \bt} = \frac{s^2-a^2}{b^2-a^2} $ and $ Q(p) = \sqrt{a^2 + p(b^2-a^2)} $ and hence we can write
$$
\frac{Q(p)-a}{b-a} = \frac{\sqrt{pb^2 + (1-p)a^2}-a}{b-a} = \frac{p(a+b)}{\sqrt{pb^2 + (1-p)a^2}+a}.
$$
By concavity of the square root function, $ \sqrt{pb^2 + (1-p)a^2} \geq pb + (1-p)a $. Thus, $ \frac{\sqrt{pb^2 + (1-p)a^2}-a}{b-a} \geq p $ for all $ a < b $ and so $ q_1(p) = p $.
For the other direction, we can choose $ q_2(p) = \sqrt{p} $ if we can show that $ \frac{\sqrt{p}(a+b)}{\sqrt{pb^2 + (1-p)a^2}+a} \leq 1 $, or equivalently, that 
\begin{align} \label{eq:sqrroot}
& (\sqrt{pb^2 + (1-p)a^2}+a)^2 - (\sqrt{p}(a+b))^2 \nonumber \\ \quad & = 2a(\sqrt{pb^2 + (1-p)a^2}+a - p(a+b)) \\ \nonumber & \geq 0.
\end{align} 
By concavity of the square root function, the expression in \prettyref{eq:sqrroot} is at least $ 4a^2(1-p) \geq 0 $.
\item[(c)] If each $ X $ is i.i.d. $ \Beta(1/2, 1) $, then $ \prob{X\leq s \mid \bX \in \bt} = \frac{\sqrt{s}-\sqrt{a}}{\sqrt{b}-\sqrt{a}} $ and $ Q(p) = (\sqrt{a} + p(\sqrt{b}-\sqrt{a}) )^2 $ and hence we can write
$$
\frac{Q(p)-a}{b-a} = p\left(1-(1-p)\frac{\sqrt{b}-\sqrt{a}}{\sqrt{b}+\sqrt{a}}\right).
$$
The proof is finished by noting that $ 0 \leq \frac{\sqrt{b}-\sqrt{a}}{\sqrt{b}+\sqrt{a}} \leq 1 $ for all $ a< b $.
\end{enumerate}

\subsection{Miscellaneous lemmas}

\begin{lemma} \label{lmm:cond}
Suppose $ A $ and $ B $ are events and let $ W $ and $ \{ W_i \}_{i=1}^n $ be i.i.d. random variables. If $ M = \sum_{i=1}^n \indc{W_i \in A} $ and $ M' = \sum_{i=1}^n \indc{W_i \in A\cap B} $, then $ M' \mid M = m \sim \Binom(m, \prob{W \in B\mid W \in A}) $.
\end{lemma}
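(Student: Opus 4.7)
The plan is to compute the joint probability $\prob{M' = k,\, M = m}$ explicitly via a multinomial decomposition and then divide by $\prob{M = m}$ to get the conditional probability mass function in closed form. The desired conclusion will follow by recognizing the resulting expression as the p.m.f.\ of $\Binom(m, p)$ with $p = \prob{W \in B \mid W \in A}$.

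First, I would partition the sample space for each $W_i$ into three disjoint pieces: $A \cap B$, $A \setminus B$, and $A^c$. Under the event $\{M = m, M' = k\}$, exactly $k$ of the $W_i$'s must fall in $A \cap B$, exactly $m - k$ must fall in $A \setminus B$, and the remaining $n - m$ must fall in $A^c$. Since the $W_i$'s are i.i.d., the multinomial formula immediately gives
\begin{equation*}
\prob{M' = k,\, M = m} = \frac{n!}{k!\,(m-k)!\,(n-m)!}\, \prob{W \in A \cap B}^{k}\, \prob{W \in A \setminus B}^{m-k}\, \prob{W \notin A}^{n-m}.
\end{equation*}
Similarly, $\prob{M = m} = \binom{n}{m} \prob{W \in A}^m \prob{W \notin A}^{n-m}$.

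Dividing the two expressions and simplifying the combinatorial factor to $\binom{m}{k}$, the factors $\prob{W \in A \cap B}^k / \prob{W \in A}^k$ and $\prob{W \in A \setminus B}^{m-k} / \prob{W \in A}^{m-k}$ collapse to $\prob{W \in B \mid W \in A}^k$ and $(1 - \prob{W \in B \mid W \in A})^{m-k}$, respectively. This yields
\begin{equation*}
\prob{M' = k \mid M = m} = \binom{m}{k} p^{k}(1-p)^{m-k}, \quad p = \prob{W \in B \mid W \in A},
\end{equation*}
which is exactly the Binomial$(m,p)$ p.m.f., completing the proof.

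There is no substantive obstacle here: the statement is essentially a classical thinning/conditioning identity for i.i.d.\ indicators, and the multinomial computation is routine. The only mild care required is ensuring $\prob{W \in A} > 0$ (otherwise the conditioning is vacuous since $\prob{M = m} = 0$ for $m \geq 1$), which is implicitly assumed whenever the conditional statement is invoked in the paper.
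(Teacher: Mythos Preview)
Your proof is correct and follows essentially the same approach as the paper: both compute the joint probability $\prob{M'=k,\,M=m}$ via the multinomial partition into $A\cap B$, $A\setminus B$, $A^c$, divide by the binomial marginal $\prob{M=m}$, and simplify to the $\Binom(m,p)$ p.m.f. The only cosmetic difference is that the paper writes the multinomial coefficient as $\binom{n}{m}\binom{m}{m'}$ rather than $\frac{n!}{k!\,(m-k)!\,(n-m)!}$.
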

\begin{proof}
Let $ p_A = \prob{W \in A} $ and $ p_{AB} = \prob{W \in A \cap B} $. Note that $ p_{AB}/p_A = \prob{W \in B\mid W \in A} $. Then
\begin{align*}
\prob{M' = m' \mid M = m} & = \frac{\prob{M' = m', \; M = m}}{\prob{M=m}} \\ & = \frac{\binom{n}{m}\binom{m}{m'}p_{AB}^{m'}(p_A-p_{AB})^{m-m'}(1-p_A)^{n-m}}{\binom{n}{m}p_A^m(1-p_A)^{n-m}} \\
& = \binom{m}{m'}(p_{AB}/p_A)^{m'}(1-p_{AB}/p_A)^{m-m'},
\end{align*}
which is the mass function of $ \Binom(m, \prob{W \in B\mid W \in A}) $.
\end{proof}

Before we state the next lemma, let us first introduce some notation. For a function $ g $, we write $ g(x-) $ (resp. $ g(x+) $) to denote the left (resp. right) side limits of $ g $ at $ x $, i.e., $ g(x-) = \lim_{z \uparrow x}g(z) $ and $ g(x+) = \lim_{z \downarrow x}g(z) $.

\begin{lemma}
Let $ F $ be a distribution function and $ Q $ its quantile function, i.e., $ Q(p) = \inf\{ x \in \mathbb{R}: p \leq F(x) \} $. Then,
\begin{equation} \label{eq:s}
Q(F(x)) \leq x \leq Q(F(x)+), \quad x\in\mathbb{R},
\end{equation}
and
\begin{equation} \label{eq:p}
F(Q(p)-) \leq p \leq F(Q(p)), \quad p \in (0, 1).
\end{equation}
Furthermore, if $ F $ is continuous and strictly increasing, then all inequalities are equalities.
\end{lemma}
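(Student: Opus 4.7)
The plan is to prove the four one-sided inequalities one at a time, each directly from the definition $Q(p)=\inf S_p$ with $S_p:=\{x:p\leq F(x)\}$, together with the standing properties of a distribution function (non-decreasing and right-continuous), and then to observe that the equality case follows by collapsing the one-sided limits.

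For $Q(F(x))\leq x$: taking $p=F(x)$ the point $x$ itself lies in $S_p$ (since $p=F(x)\leq F(x)$), so the infimum of $S_p$ is no larger than $x$. For $x\leq Q(F(x)+)$: fix any $q>F(x)$ and pick $y\in S_q$, so $F(y)\geq q>F(x)$; monotonicity of $F$ forces $y\geq x$ (otherwise $F(y)\leq F(x)$, a contradiction), hence $Q(q)=\inf S_q\geq x$, and letting $q\downarrow F(x)$ yields the claim. For $F(Q(p)-)\leq p$: any $y<Q(p)$ fails to belong to $S_p$ (else it would be a smaller element than the infimum), so $F(y)<p$; letting $y\uparrow Q(p)$ gives the bound. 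For $p\leq F(Q(p))$: by the definition of infimum there is a sequence $y_k\downarrow Q(p)$ with $y_k\in S_p$, i.e.\ $F(y_k)\geq p$, and right-continuity of $F$ at $Q(p)$ delivers $F(Q(p))=\lim_k F(y_k)\geq p$.

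For the equality clause, assume $F$ is continuous and strictly increasing. Continuity forces $F(Q(p)-)=F(Q(p))=F(Q(p)+)$, so the sandwich $F(Q(p)-)\leq p\leq F(Q(p))$ collapses to $F(Q(p))=p$. Strict monotonicity makes $F$ a bijection onto its range, so $Q=F^{-1}$ is itself continuous; thus $Q(F(x)-)=Q(F(x))=Q(F(x)+)$ and the first chain collapses to $Q(F(x))=x$.

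The main obstacle, such as it is, is purely bookkeeping: one must be scrupulous about the direction of strict versus non-strict inequalities (the contrapositive of monotonicity gives $y\geq x$ but not $y>x$, which is precisely what is needed in step two), and one must invoke right-continuity of $F$ at exactly the right place—namely in the fourth inequality, where without right-continuity one would only get $F(Q(p)+)\geq p$. No auxiliary lemmas from the body of the paper are required.
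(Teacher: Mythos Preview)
Your argument is correct. The paper itself does not supply a proof of this lemma: it simply remarks that these are standard facts (the Galois inequalities) and cites \cite[Section 2.5.2]{resnick2003probability}. Your proposal spells out exactly the elementary verification one finds in such a reference---each of the four one-sided bounds is read off from the definition $Q(p)=\inf\{x:p\leq F(x)\}$ together with monotonicity and right-continuity of $F$, and the equality case follows once $F$ is a continuous bijection onto its range. So there is nothing to compare: you have filled in what the paper deliberately left as a textbook citation.
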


\begin{proof}
These are standard facts from probability theory and can be deduced from the Galois inequalities. See, for example, 
\citep[Section 2.5.2]{resnick2003probability}.
\end{proof}

\end{document}